\documentclass[10pt, a4paper]{article} 
\usepackage[square,numbers,sort&compress]{natbib}
\setlength{\bibsep}{0.0pt}
\usepackage[shortlabels]{enumitem}
\usepackage{setspace}
\usepackage{subfloat}
\usepackage{mathtools}
\usepackage{amsmath,amsfonts,amssymb,amsthm,mathrsfs, bm}
\usepackage[utf8]{inputenc}
\usepackage[swedish, english]{babel}
\usepackage{csquotes}
\usepackage{subfig}
\usepackage{graphicx}
\usepackage{supertabular}                                             
\usepackage[table]{xcolor} 

\usepackage{graphicx}
\usepackage{epstopdf}
\usepackage{microtype}
\usepackage{multirow}
\usepackage{hyperref}
\usepackage{xcolor}
\usepackage{authblk}
\usepackage{datetime2}	
\usepackage[left=1in, right=1in, top=1in, bottom=1in]{geometry}
\usepackage{setspace}	
\theoremstyle{plain}
\newtheorem{theorem}{Theorem}[section]
\newtheorem{lemma}{Lemma}[section]

\newtheorem{proposition}{Proposition}[section]

\usepackage{amsmath,amsfonts,amssymb,amsthm,mathrsfs, bm}
\usepackage{stmaryrd}

\usepackage{graphicx}
\usepackage{tikz}
\usepackage[utf8]{inputenc}
\usepackage{pgfplots} 
\usepackage{pgfgantt}
\usepackage{pdflscape}
\pgfplotsset{compat=newest} 
\pgfplotsset{plot coordinates/math parser=false}

\usepackage{endnotes}
\usepackage{commath}
\usepackage{gensymb}
\usepackage{mathtools}
\usepackage{tikz} 
\usetikzlibrary{calc,quotes,angles}
\usepackage{tkz-euclide}
\usetikzlibrary{automata,positioning} 

\usepackage{xcolor}
\usepackage{tikz,tkz-euclide,siunitx}
\usepackage{etoolbox} 

\usetikzlibrary{quotes,arrows.meta,angles,calc,shadings,positioning,babel}


\newtheorem{remark}{Remark}[section]

\usepackage{endnotes}
\usepackage{commath}
\usepackage{gensymb}
\usepackage{textgreek} 

\makeatletter
\patchcmd{\tkz@DrawLine}{\begingroup}{\begingroup\makeatletter}{}{}
\makeatother

\allowdisplaybreaks
\DeclareMathOperator{\RE}{Re}

\DeclareMathOperator{\sgn}{sgn}

\DeclareMathOperator{\esup}{ess\, sup}

\DeclareMathOperator{\diag}{\mathrm{diag}}
\makeatletter
\newcommand\makebig[2]{%
  \@xp\newcommand\@xp*\csname#1\endcsname{\bBigg@{#2}}%
  \@xp\newcommand\@xp*\csname#1l\endcsname{\@xp\mathopen\csname#1\endcsname}%
  \@xp\newcommand\@xp*\csname#1r\endcsname{\@xp\mathclose\csname#1\endcsname}%
}
\makeatother

\providecommand*{\ped}[1]{%
\ensuremath{_\textnormal{#1}}}

\providecommand*{\eu}%
{\ensuremath{\mathrm{e}}}
\providecommand*{\im}%
{\ensuremath{\mathrm{i}}}
\providecommand*{\GammaF}%
{\ensuremath{\mathrm{\Gamma}}}
\providecommand*{\BetaF}%
{\ensuremath{\mathrm{\Beta}}}

\makebig{biggg} {3.0}
\makebig{Biggg} {3.5}
\makebig{bigggg}{4.0}
\makebig{Bigggg}{4.5}
\makebig{biggggg}{5.0}
\makebig{Biggggg}{5.5}
\usepackage{longtable}
\usepackage{xcolor}
\DeclareMathSymbol{\Gamma}{\mathalpha}{letters}{"00}
\DeclareMathSymbol{\Delta}{\mathalpha}{letters}{"01}
\DeclareMathSymbol{\Theta}{\mathalpha}{letters}{"02}
\DeclareMathSymbol{\Lambda}{\mathalpha}{letters}{"03}
\DeclareMathSymbol{\Xi}{\mathalpha}{letters}{"04}
\DeclareMathSymbol{\Pi}{\mathalpha}{letters}{"05}
\DeclareMathSymbol{\Sigma}{\mathalpha}{letters}{"06}
\DeclareMathSymbol{\Upsilon}{\mathalpha}{letters}{"07}
\DeclareMathSymbol{\Phi}{\mathalpha}{letters}{"08}
\DeclareMathSymbol{\Psi}{\mathalpha}{letters}{"09}
\DeclareMathSymbol{\Omega}{\mathalpha}{letters}{"0A}

\usepackage{hyperref}
\hypersetup{
    colorlinks = true,
    linkbordercolor = {white},
            linkcolor=blue,
            bookmarks=true,
            filecolor=blue,
            urlcolor=blue,
            citecolor=blue
}




\begin{document}

\title{Semilinear single-track vehicle models\\ with distributed tyre friction dynamics}
\date{}
\author[a,b]{Luigi Romano\thanks{Corresponding author. Email: luigi.romano@liu.se.}}
\author[b]{Ole Morten Aamo}
\author[a]{Jan Aslund}
\author[a]{Erik Frisk}
\affil[a]{\footnotesize{Division of Vehicular Systems, Department of Electrical Engineering, Linköping University, SE-581 83 Linköping, Sweden}}
\affil[b]{\footnotesize{Department of Engineering Cybernetics, Norwegian University of Science and Technology, O. S. Bragstads plass 2, NO-7034, Trondheim, Norway}}

\maketitle

\begin{abstract}
This paper introduces a novel family of single-track vehicle models that incorporate a distributed representation of transient tyre dynamics, whilst simultaneously accounting for nonlinear effects induced by friction. The core of the proposed framework is represented by the distributed \emph{Friction with Bristle Dynamics} (FrBD) model, which unifies and extends classical formulations such as Dahl and LuGre by describing the rolling contact process as a spatially distributed system governed by semilinear \emph{partial differential equations} (PDEs). This model is systematically integrated into a single-track vehicle framework, where the resulting semilinear ODE-PDE interconnection captures the interaction between lateral vehicle motion and tyre deformation. Two main variants are considered: one with rigid tyre carcass and another with flexible carcass, each admitting a compact state-space representation. Local and global well-posedness properties for the coupled system are established rigorously, highlighting the dissipative and physically consistent properties of the distributed FrBD model. A linearisation procedure is also presented, enabling spectral analysis and transfer function derivation, and potentially facilitating the synthesis of controllers and observers. Numerical simulations demonstrate the model's capability to capture micro-shimmy oscillations and transient lateral responses to advanced steering manoeuvres. The proposed formulation advances the state-of-the-art in vehicle dynamics modelling by providing a physically grounded, mathematically rigorous, and computationally tractable approach to incorporating transient tyre behaviour in lateral vehicle dynamics, when accounting for the effect of limited friction.
\end{abstract}
\section*{Keywords}
Single-track model; tyre models; transient tyre dynamics; distributed friction; distributed parameter systems; semilinear systems

\section{Introduction}\label{intro}
In modern vehicle dynamics, accurate modelling of tyre-road interaction remains a cornerstone for understanding complex dynamic phenomena \cite{USB,LibroMio,Meccanica2,LuGreSpin,CarcassDyn,MioSolo} and developing advanced control strategies aimed at enhancing vehicle safety and performance \cite{Rajamani,Nielsen,Savaresi,LuGreControl2,deCastro2,Poussot,Wang,Solyom,LuGreControl3,deCastro,Fors1,Fors2,Fors3,Fors4}. Indeed, model-based approaches appear ubiquitously in research works focused on the development of Antilock-Braking Systems (ABS) \cite{Nielsen,Savaresi,LuGreControl2,deCastro2,Poussot,Wang}, Traction Control Systems (TCS) \cite{LuGreControl3,Solyom,deCastro}, Electronic Stability Control (ESC) \cite{Fors1,Fors2,Fors3,Fors4,JazarNew}, and friction and sideslip detection algorithms \cite{Doumiati2,Hsu,Mojtaba1,Mojtaba2,Mojtaba3,Shao1,Shao2,Shao3,Shao4,Basilio1,Basilio2,Basilio3,Basilio4,Joa}.

Traditional single-track models \cite{Pacejka2,Guiggiani,Guiggiani2}, widely adopted for their simplicity and analytical tractability \cite{Pacejka2,Savaresi,Shao1,Shao2,Shao3,Shao4}, typically employ quasi-static algebraic expressions for tyre forces, thereby neglecting the dynamic and distributed nature of frictional interactions within the contact patch. Although extensions incorporating tyre relaxation effects via first-order \emph{ordinary differential equations} (ODEs) have gained traction \cite{Higuchi1,Higuchi2,Pauwelussen,SvendeniusA,SvendeniusB,Svendenius3,Rill1,Rill2,PAC,PacejkaBesselink,Two-regime}, these formulations still fall short in capturing spatially distributed transient dynamics, especially under fast steering manoeuvres or low-speed conditions. For instance, spontaneously excited oscillations occurring at low speed, commonly known as \emph{micro-shimmy vibrations}, are not detected by classical lumped tyre models. Their prediction requires more sophisticated formulations based on \emph{partial differential equations} (PDEs), often of hyperbolic type, that can adequately describe wave-like phenomena arising from the distributed nature of the contact patch. The occurrence of micro-shimmy has been confirmed experimentally in \cite{Takacs2,Takacs4}, and primarily studied through infinite-dimensional linear single-track models in \cite{Takacs2,Takacs4,Takacs1,Takacs3,Takacs5,Beregi1,Beregi2,Beregi3,Beregi4,Beregi5,BicyclePDE}, where delay effects due to tyre compliance were modelled using distributed brush-like tyre formulations with and without Coulomb friction.
Crucially, friction-induced force saturation appears to suppress or attenuate micro-shimmy oscillations \cite{Takacs2}. However, in scenarios involving partial sliding, the presence of coexisting stick and slip zones across the contact patch enormously complicates rigorous mathematical treatment, sometimes even posing insurmountable challenges for formal analysis \cite{Sharp}.
In contrast, distributed dynamic friction models, such as the Dahl and LuGre formulations \cite{Dahl,LuGre1}, qualify as desirable alternatives for capturing friction-induced nonlinearities. Indeed, these models do not explicitly distinguish between local stick and slip regimes, yet they retain the essential dissipative and memory properties of the frictional interface. This abstraction not only simplifies theoretical analysis but also offers practical advantages in the design of control and adaptive estimation algorithms, particularly for online identification of tyre-road friction \cite{LuGreControl2,LuGreControl3,Horowitz1,Horowitz2,Horowitz3}. In fact, linear single-track models, even when formulated in terms of distributed parameter systems, are intrinsically inadequate for such purposes, as they lack the necessary nonlinear structure to capture frictional dynamics.

In the context outlined above, this paper aims to bridge the gap by proposing a family of semilinear single-track models with distributed tyre dynamics, built upon the recently introduced \emph{Friction with Bristle Dynamics} (FrBD) model \cite{FrBD}. The FrBD formulation consists of a dynamic friction model that generalises the classical Dahl and LuGre descriptions by representing the tyre-road interaction as a semilinear PDE system incorporating internal micro-damping effects. Moreover, unlike the LuGre model, which is semi-empirical in nature, the FrBD formulation is grounded on solid physical principles, corresponding to a first-order approximation of the implicit dynamics of a bristle element \cite{FrBD,Rill,Rill0}. The proposed approach allows for a detailed description of the internal deformation of the tyre's bristle-like elements across the spatial contact domain, leading to richer transient dynamics and more realistic steady-state behaviour.

The novelty of this work lies in the systematic integration of dynamic, distributed tyre models into a global single-track vehicle framework, resulting in a semilinear hyperbolic ODE-PDE interconnection. Such a system captures the interplay between the vehicle's lateral dynamics and the spatially distributed deformation of the tyre contact patch, considering the effect of friction in a simple yet realistic way. In particular, by adopting the distributed version of the FrBD model, the need to distinguish between local stick-slip phenomena inside the contact patch is completely eliminated, yielding an accurate representation that is mathematically tractable and amenable to formal analysis. More specifically, two variants are developed: a simplified model assuming rigid tyre carcasses and a more refined one accounting for carcass flexibility through additional nonlocal terms. Each model is represented compactly using a state-space formulation, which facilitates mathematical analysis and controller design.

The theoretical contributions of the paper are threefold. First, a wide family of semilinear single-track models is developed, enabling the study of the coupling between the rigid body vehicle motion and the distributed dynamics of the tyres whilst accounting for limited friction via relatively smooth friction models. Second, a rigorous analysis concerning the well-posedness of the resulting ODE-PDE system is conducted, establishing both local and global existence of \emph{mild} and \emph{classical solutions} under reasonable assumptions. Notably, as opposed to the classic LuGre description, the more general FrBD formulation satisfies essential dissipativity and stability properties that permit deducing global well-posedness for nearly every combination of model parameters. Third, a linearisation of the original model is performed around arbitrary equilibria to derive explicit transfer functions and stability conditions via spectral analysis of the associated linear operator. 

The practical impact of the proposed modelling framework is demonstrated through simulations involving micro-shimmy oscillations and transient steering manoeuvres. These examples reveal that distributed models can capture subtle physical effects not observable in lumped formulations. The results underline the relevance of distributed tyre models for advancing the fidelity and predictive power of lateral vehicle dynamics, thereby offering a solid foundation for next-generation automotive control systems. Indeed, whilst the primary focus of this work is on the mechanical modeling of the problem, particular attention is also dedicated to control-oriented aspects, with the ambition of delivering formulations that may be successfully integrated within control and estimation algorithms.

The remainder of this manuscript is organised as follows. Section~\ref{Sect:friction} presents an extensive review of the distributed FrBD formulation, discussing its connection to other friction models available from the literature, including the Dahl and LuGre. The governing ODE-PDEs of the semilinear single-track models are subsequently derived in Sect.~\ref{sect:SemilienarModels}, where well-posedness results are also asserted. Section~\ref{sect:linearLinearised} moves then to investigate the equilibria of the semilinear single-track models, and then to their linearisation. In this way, spectral methods can be effectively applied to study local stability, whilst simultaneously allowing the derivation of a transfer function from the steering input to the lumped state variables. Simulation results are reported instead in Sect.~\ref{sect:Simulations}, concerning the dynamic behaviour of both the semilinear and linear formulations. Finally, Sect.~\ref{sect:Conclusion} concludes the paper by summarising the main findings of the work, and indicating directions for future research. Technical proofs and results are confined to Appendix~\ref{app:theorems}.


\subsection*{Notation}
In this paper, $\mathbb{R}$ denotes the set of real numbers; $\mathbb{R}_{>0}$ and $\mathbb{R}_{\geq 0}$ indicate the set of positive real numbers excluding and including zero, respectively. Similarly, $\mathbb{C}$ is the set of complex numbers; $\mathbb{C}_{>0}$ and $\mathbb{C}_{\geq 0}$ denote the sets of all complex numbers whose real part is larger than and larger than or equal to zero, respectively. 
The set of $n\times m$ matrices with values in $\mathbb{F}$ ($\mathbb{F} = \mathbb{R}$, $\mathbb{R}_{>0}$, $\mathbb{R}_{\geq0}$, or $\mathbb{C}$) is denoted by $\mathbf{M}_{n\times m}(\mathbb{F})$ (abbreviated as $\mathbf{M}_{n}(\mathbb{F})$ whenever $m=n$). For $\mathbb{F} = \mathbb{R}$, $\mathbb{R}_{>0}$, or $\mathbb{R}_{\geq0}$, $\mathbf{GL}_n(\mathbb{F})$ and $\mathbf{Sym}_n(\mathbb{F})$ represents the groups of invertible and symmetric matrices, respectively, with values in $\mathbb{F}$; the identity matrix on $\mathbb{R}^n$ is indicated with $\mathbf{I}_n$. A positive-definite matrix is noted as $\mathbf{M}_n(\mathbb{R}) \ni \mathbf{Q} \succ \mathbf{0}$. The identity operator on a Banach space $\mathcal{Z}$ is denoted by $I_{\mathcal{Z}}$.
The standard Euclidean norm on $\mathbb{R}^n$ is indicated with $\norm{\cdot}_2$; operator norms are simply denoted by $\norm{\cdot}$.
$L^2((0,1);\mathbb{R}^n)$ denotes the Hilbert space of square-integrable functions on $(0,1)$ with values in $\mathbb{R}^n$, endowed with inner product $\langle \bm{\zeta}_1, \bm{\zeta}_2 \rangle_{L^2((0,1);\mathbb{R}^n)} = \int_0^1 \bm{\zeta}_1^{\mathrm{T}}(\xi)\bm{\zeta}_2(\xi) \dif \xi$ and induced norm $\norm{\bm{\zeta}(\cdot)}_{L^2((0,1);\mathbb{R}^n)}$. The Hilbert space $H^1((0,1);\mathbb{R}^n)$ consists of functions $\bm{\zeta}\in L^2((0,1);\mathbb{R}^n)$ whose weak derivative also belongs to $L^2((0,1);\mathbb{R}^n)$; it is naturally equipped with norm $\norm{\bm{\zeta}(\cdot)}_{H^1((0,1);\mathbb{R}^n)}^2 \triangleq \norm{\bm{\zeta}(\cdot)}_{L^2((0,1);\mathbb{R}^n)}^2 + \norm{\pd{\bm{\zeta}(\cdot)}{\xi}}_{L^2((0,1);\mathbb{R}^n)}^2$.
Given a domain $\Omega$ with closure $\overline{\Omega}$, $L^p(\Omega;\mathcal{Z})$ and $C^k(\overline{\Omega};\mathcal{Z})$ ($p, k \in \{1, 2, \dots, \infty\}$) denote respectively the spaces of $L^p$-integrable functions and $k$-times continuously differentiable functions on $\overline{\Omega}$ with values in $\mathcal{Z}$ (for $T = \infty$, the interval $[0,T]$ is identified with $\mathbb{R}_{\geq 0}$). Given two Hilbert spaces $\mathcal{V}$ and $\mathcal{W}$, $\mathscr{L}(\mathcal{V};\mathcal{W})$ denotes the spaces of linear operators from $\mathcal{V}$ to $\mathcal{W}$ (abbreviated $\mathscr{L}(\mathcal{V})$ if $\mathcal{V} = \mathcal{W}$); $\mathscr{B}(\mathcal{V})$ indicates the space of bounded linear operators on $\mathcal{V}$. The spectrum of a possibly unbounded operator $(\mathscr{O},\mathscr{D}(\mathscr{O}))$ with domain $\mathscr{D}(\mathscr{O})$ is denoted by $\sigma(\mathscr{O})$. The group of operators on a Banach space $\mathcal{Z}$ that are infinitesimal generators of a $C_0$-semigroup satisfying $\norm{T(t)} \leq \eu^{\omega t}$ is conventionally denoted by $\mathscr{G}(\mathcal{Z}; 1, \omega)$ \cite{Kato,Tanabe}. The following notation is adapted from \cite{Zwart,CurtainAutomatica}.
Consider $C^\omega(\mathbb{C}_{>0};\mathbf{M}_{n\times m}(\mathbb{C}))$, the set of all analytic functions from $\mathbb{C}_{>0}$ to $\mathbb{C}^{n\times m}$; the Hardy space $H^\infty(\mathbb{C}_{>0};\mathbf{M}_{n\times m}(\mathbb{C}))$ is defined as $H^\infty(\mathbb{C}_{>0};\mathbf{M}_{n\times m}(\mathbb{C})) \triangleq \{ \mathbf{G} \in C^\omega(\mathbb{C}_{>0};\mathbf{M}_{n\times m}(\mathbb{C})) \mathrel{|} \norm{\mathbf{G}(\cdot)}_\infty < \infty\}$, with $\norm{\mathbf{G}(\cdot)}_\infty \triangleq \esup_{ s\in \mathbb{C}_{>0}} \norm{\mathbf{G}(s)}$; more generally, $H^\infty(\mathbb{C}_{>0};\mathcal{Z})$ defines the Hardy space of bounded holomorfic functions on $\mathbb{C}_{> 0}$ with values in $\mathcal{Z}$, that is, $H^\infty(\mathbb{C}_{>0};\mathcal{Z}) \triangleq \{ \mathscr{O} \in C^\omega(\mathbb{C}_{>0};\mathcal{Z}) \mathrel{|} \norm{\mathscr{O}}_\infty < \infty\}$. 
Finally, the Laplace transform of a variable $\bm{\zeta}(t) \in \mathcal{V}$ is denoted by $\widehat{\bm{\zeta}}(s) = (\mathcal{L}\bm{\zeta})(s)$.

\section{A generalised distributed friction model}\label{Sect:friction}
This section introduces a generalised framework to describe rolling contact processes using the distributed FrBD friction model. Specifically, Sect.~\ref{sect:FrictionModel} details the governing equations of the model; closed-form solutions are derived then in Sect.~\ref{sect:Sol}.

\subsection{Model equations}\label{sect:FrictionModel}
The governing equations of the FrBD model include a semilinear PDE describing the evolution of the internal frictional variable, integral functionals to compute the generated frictional force, and appropriate expressions to describe the vertical pressure distribution acting at the interface between the contacting bodies. These equations are reviewed respectively in Sects.~\ref{sect:bristleDyn},~\ref{sect:FF}, and~\ref{sect.vP}.

\subsubsection{Distributed bristle dynamics}\label{sect:bristleDyn}
As illustrated in Fig.~\ref{fig:LuGre}, the distributed FrBD model describes the dynamics of a series of infinitesimal bristles schematising the sliding rolling contact between two bodies over a finite length $L \in \mathbb{R}_{>0}$. Regarding their relative velocity as an input, its governing PDE may be interpreted either as a semilinear or as a linear non-autonomous one. Without loss of generality, it may be cast on a unit domain in the Eulerian reference frame: 
\begin{subequations}\label{eq:FrBDPDE00}
\begin{align}
\begin{split}
& \dpd{z(\xi,t)}{t} + V\dpd{z(\xi,t)}{\xi} = -\dfrac{\sigma_{0} \abs{v(t)}_\varepsilon}{g\bigl(v(t);\chi_1\bigr)}z(\xi,t) + \dfrac{\mu\bigl(v(t)\bigr)}{g\bigl(v(t);\chi_1\bigr)} v(t), \quad (\xi,t) \in (0,1)\times(0,T), 
\end{split} \label{eq:dynZ} \\
 & z(0,t)  = 0, \quad t \in (0,T),  \label{eq:BCsjdjs}
\end{align}
\end{subequations}
where $z(\xi,t) \in \mathbb{R}$ is the distributed variable describing the deformation of a bristle located at the coordinate $\xi \in [0,1]$ at time $t \in [0,T]$, $v(t) \in \mathbb{R}$ denotes the (rigid) relative velocity between the bodies, which is supposed to be independent of the spatial coordinate, $V\in \mathbb{R}_{>0}$ indicates the transport velocity (which coincides with the rolling speed rescaled on the unit domain), $\sigma_0 \in \mathbb{R}_{>0}$ is the \emph{normalised micro-stiffness} coefficient, $\mu\in C^0(\mathbb{R};[\mu\ped{min},\infty))$, with $\mu\ped{min}\in \mathbb{R}_{>0}$, is an expression for the friction coefficient at the interface between the two contacting bodies in relative rolling motion, $\abs{\cdot}_\varepsilon \in C^0(\mathbb{R};\mathbb{R}_{\geq 0})$ denotes the (possibly regularised\footnote{It is common in engineering practice to replace the absolute value with differentiable functions \cite{Rill,Rill0}, such as $\abs{v}_\varepsilon \triangleq \sqrt{v^2 + \varepsilon}$, for some $\varepsilon \in \mathbb{R}_{>0}$. This paper also considers the case $\varepsilon \in \mathbb{R}_{\geq 0}$, thus generalising the common formulations of the Dahl, LuGre, and FrBD models.}) absolute value, and
\begin{align}\label{eq:gij0}
g(v;\chi_{1}) \triangleq \chi_1\sigma_{1} \abs{v}_\varepsilon + \mu(v),
\end{align}
where $\sigma_1 \in \mathbb{R}_{\geq 0}$ indicates the \emph{normalised micro-damping} coefficient, and $\chi_1 \in \{0,1\}$ is a parameter introduced to distinguish between different parametrisations of the FrBD model. Indeed, Eq.~\eqref{eq:FrBDPDE00} reduces to the PDEs obtained for the LuGre and Dahl models whenever the term $\chi_1 = 0$, which is equivalent to neglecting the damping term in Eq.~\eqref{eq:gij0}; for $\chi_1 = 1$, it yields instead the full FrBD formulation. As later revealed in Sect.~\ref{sect:well}, incorporating the damping term in Eq.~\eqref{eq:gij0} has important repercussions on the behaviour of the model, permitting to infer global well-posedness under very mild hypotheses. In a similar context, when the rigid relative velocity $v(t)$ is regarded as an input, the PDE~\eqref{eq:FrBDPDE00} is well-posed and input-to-state stable, according to the analyses conducted in \cite{FrBD,MScthesis,DistrLuGre}. Moreover, dissipativity holds virtually for any combination of constant model parameters \cite{DistrLuGre,FrBD}. For the sake of simplicity, this paper restricts itself to the case of constant coefficients, which is common in tyre and vehicle dynamics.

Before moving to Sect.~\ref{sect:FF}, some considerations are in order. \emph{In primis}, it is worth stressing that the left-hand side of~\eqref{eq:dynZ} coincides with the total time derivative, namely
\begin{align}\label{eq:totalcjses}
\dod{z(\xi,t)}{t} = \dpd{z(\xi,t)}{t} + V\dpd{z(\xi,t)}{\xi},
\end{align}
where $\od{z(\xi,t)}{t} \in \mathbb{R}$ represents the Lagrangian time derivative from the perspective of a material particle travelling inside the contact area. 

\emph{In secundis}, the PDE~\eqref{eq:FrBDPDE00} is postulated over the entire unit domain. Analogously to the Dahl and LuGre friction models, the FrBD formulation does not explicitly differentiate between local adhesion and sliding conditions within the contact area. As a consequence, the variable $z(\xi,t)$ is required to vanish only at the leading edge, which leads to enforcing the natural boundary condition (BC)~\eqref{eq:BCsjdjs}. This contrasts with classical Coulomb-type friction models, in which multiple stick and slip zones may coexist within the contact region, and where the use of compactly supported vertical pressure distributions, together with the assumption of a linear elastic behavior of the bristle element, implies zero deformation also at the trailing edge \cite{LibroMio,Guiggiani,Pacejka2}. 
Although the FrBD description is less phenomenologically accurate than the classical Coulomb formulation, it is nonetheless physically justified as a first-order dynamic approximation of a rheological model describing the bristle element \cite{FrBD,Rill,Rill0}. Moreover, it can be more readily integrated within complex mechanical system models, which motivates its adoption in this work.

Finally, it should be observed that Eq.~\eqref{eq:FrBDPDE00} merely describes a scalar distributed friction model, which may be applied to any rolling contact system. More specifically, a schematic of the FrBD formulation particularised for an accelerating tyre is illustrated in Fig.~\ref{fig:LuGre2}, where the relevant kinematic quantities are also shown. In Fig.~\ref{fig:LuGre2}, the variable $z(\xi,t)$, defined positive in the opposite direction of the coordinate $\xi$, should be interpreted as the longitudinal deflection of a bristle element, with the (rigid) relative velocity given by $v(t) = V\ped{r}(t)-V_x(t)$, where $V\ped{r}(t) \in \mathbb{R}_{>0}$ denotes the (free) rolling speed of the tyre \cite{Guiggiani}, and $V_x(t) \in \mathbb{R}_{>0}$ its longitudinal velocity.

\begin{figure}
\centering
\includegraphics[width=0.5\linewidth]{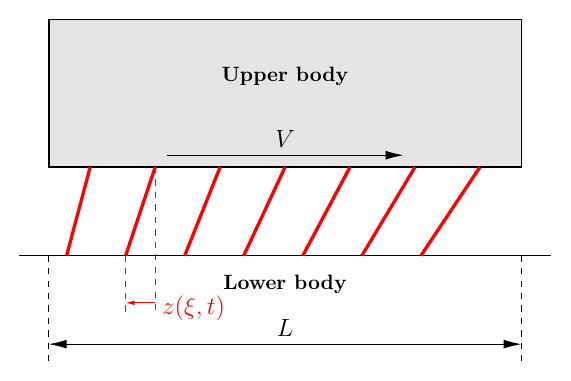} 
\caption{A schematic representation of the distributed FrBD model.}
\label{fig:LuGre}
\end{figure}

\begin{figure}
\centering
\includegraphics[width=0.5\linewidth]{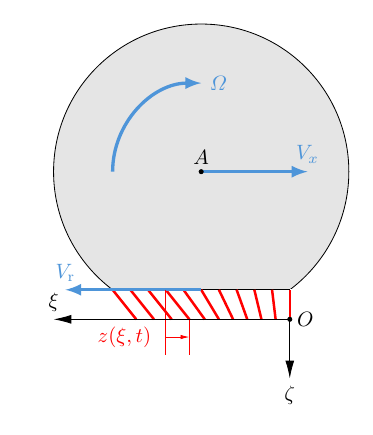} 
\caption{Schematic of the distributed FrBD friction model particularised for a tyre-wheel system during acceleration. The wheel travels in longitudinal direction with speed $V_x\in \mathbb{R}_{>0}$, and has angular velocity $\Omega \in \mathbb{R}_{>0}$. A local coordinate frame $(O;\xi,\eta,\zeta)$ is attached to the tyre,  with the origin $o$ coincident with the leading-edge point.  The coordinates $\xi$ and $\eta$ lie in the road plane, whilst $\zeta$ points downward (into the road). The longitudinal coordinate $\xi$ is aligned with the transport velocity $V = \dfrac{V\ped{r}}{L}$, where $V\ped{r} \in \mathbb{R}_{>0}$ denotes the free-rolling speed of the tyre \cite{Guiggiani}. The $\eta$-axis is oriented so that the local frame $(O;\xi,\eta,\zeta)$ is left-handed.}
\label{fig:LuGre2}
\end{figure}


\subsubsection{Frictional force}\label{sect:FF}
Starting with the PDE~\eqref{eq:FrBDPDE00}, the total frictional force generated during the rolling contact process may be calculated as \cite{TsiotrasConf}
\begin{align}\label{eq:Forces10}
\begin{split}
F(t) & = L\int_0^1 p(\xi)\biggl(\sigma_{0}z(\xi,t) + \sigma_{1}\dod{z(\xi,t)}{t} + \sigma_{2}v(t) \biggr) \dif \xi,
\end{split}
\end{align}
where $\sigma_2 \in \mathbb{R}_{\geq 0}$ is the normalised \emph{viscous damping} coefficient, and $p \in C^1([0,1];\mathbb{R}_{\geq 0})$ denotes the vertical pressure distribution acting inside the contact area. The total derivative appearing in the above Eq.~\eqref{eq:Forces10} formally corresponds to the left-hand side of Eq.~\eqref{eq:FrBDPDE00}. In some works \cite{Tsiotras1,Tsiotras2,Deur0,Deur1,Deur2}, it is replaced by the partial one, yielding the alternative formula
\begin{align}\label{eq:Forces10part}
\begin{split}
F(t) & = L\int_0^1 p(\xi)\biggl(\sigma_{0}z(\xi,t) + \sigma_{1}\dpd{z(\xi,t)}{t} + \sigma_{2}v(t) \biggr) \dif \xi.
\end{split}
\end{align}
By noting from Eq.~\eqref{eq:totalcjses} that
\begin{align}
\dpd{z(\xi,t)}{t} = \dod{z(\xi,t)}{t}-V\dpd{z(\xi,t)}{\xi},
\end{align}
a general expression for the total frictional force may be introduced as
\begin{align}\label{eq:Forces10part2}
\begin{split}
F(t) & = L\int_0^1 p(\xi)\Biggl[\sigma_{0}z(\xi,t) + \sigma_{1}\biggl(\dod{z(\xi,t)}{t}-\chi_2V\dpd{z(\xi,t)}{\xi}\biggr) + \sigma_{2}v(t) \Biggr] \dif \xi,
\end{split}
\end{align}
where the parameter $\chi_2 \in \{0,1\}$ allows easily between the formulations in Eqs.~\eqref{eq:Forces10} and~\eqref{eq:Forces10part}, with $\chi_2 = 0$ if the total time derivative is employed to model the damping term, and $\chi_2 = 1$ if the partial time derivative is used. As corroborated by the findings of \cite{DistrLuGre}, employing the partial time derivative in Eq.~\eqref{eq:Forces10part2} (that is, setting $\chi_2 = 1$), might introduce some inconsistencies compared to the original mathematical properties of the lumped LuGre ($\chi_1 = 0$) and FrBD ($\chi_1 = 1$) models. Besides, the partial derivative does not have a clear physical meaning, whereas the total one represents a real deformation velocity. In the same context, it should be mentioned that the use of the partial time derivative also deteriorates the stability estimates for the frictional force that are immediately available when using the total one. This aspect will be further clarified in Sect.~\ref{sect:well}. In spite of these considerations, both variants are considered in the present manuscript, since using the partial derivative has become somewhat standard in the dedicated literature.

\subsubsection{Vertical pressure distribution}\label{sect.vP}
The computation of the total frictional force demands the specification of a function describing the contact pressure distribution. Introducing the total vertical force acting on the upper body, $F_z\in \mathbb{R}_{>0}$, suitable choices include the constant one
\begin{align}\label{eq:pConst}
p(\xi) = p_0 = \dfrac{F_z}{L},
\end{align}
the exponentially decreasing one
\begin{align}\label{eq:pExp}
p(\xi) = p_0\exp(-a\xi) = \dfrac{F_z}{L}\dfrac{a\exp(-a\xi)}{1-\exp(-a)},
\end{align}
for some appropriate $a\in \mathbb{R}_{>0}$, and the parabolic one
\begin{align}\label{eq:pParab}
p(\xi) = p_0\xi(1-\xi) = \dfrac{6F_z}{L}\xi(1-\xi).
\end{align}
Clearly, a generic pressure distribution must satisfy
\begin{align}\label{eq:pInt}
L\int_0^1 p(\xi) \dif \xi = F_z.
\end{align}
The next Sect.~\ref{sect:Sol} derives the general, closed-form solution to the PDE~\eqref{eq:FrBDPDE00}, providing also some expressions for the total frictional force generated in steady-state.

\subsection{Model solution}\label{sect:Sol}
Treating the relative velocity $v(t)$ as an exogenous input to the FrBD model, the PDE~\eqref{eq:FrBDPDE00} may be solved explicitly, allowing for the retrieval of analytical relationships describing the steady-state frictional forces. In particular, a general solution to the PDE~\eqref{eq:FrBDPDE00} is reported in Sect.~\ref{sect:bristleSol}, whereas Sect.~\ref{sect:ssFrcoes} provides some closed-form expressions for the stationary frictional force.

\subsubsection{Bristle dynamics}\label{sect:bristleSol}
In most cases of practical interest, the PDE~\eqref{eq:FrBDPDE00} may be solved in closed form using the method of the characteristic lines, as explained in \cite{FrBD}. The complete transient solution is reported, for instance, in \cite{FrBD}, but omitted here for brevity. In contrast, of particular importance for the results advocated in this work is the stationary solution, which reads
\begin{align}\label{eq:CD2}
z(\xi) = \sgn_\varepsilon(v)\dfrac{\mu(v)}{\sigma_0}\Biggl[1-\exp\biggl( -\dfrac{\sigma_0\abs{v}_\varepsilon}{V g(v;\chi_1)}\xi\biggr)\Biggr], \quad \xi \in [0,1],
\end{align}
where
\begin{align}
\sgn_\varepsilon(v) \triangleq \dfrac{v}{\abs{v}_\varepsilon}.
\end{align}
Equation~\eqref{eq:CD2} is consistent with the expression deduced according to the distributed LuGre model (with $\chi_1 = 0$). Figure~\ref{fig:rectGough}(a) illustrates the trend of the bristle deflection $z(\xi)$ calculated according to~\eqref{eq:CD2} for three different values of relative velocities $v = 1$, 5, and 10 $\textnormal{m}\,\textnormal{s}^{-1}$ and for $V = 20$ $\textnormal{m}\,\textnormal{s}^{-1}$, with $\chi_1 = 1$. In particular, it is interesting to observe that the maximum deflection decreases with $v$, whereas the partial space derivative exhibits an opposite trend. In turn, this implies that the bristle deformation converges faster to its asymptotic value as $v$ increases. The trends depicted in Fig.~\ref{fig:rectGough}(a) were produced with the parameter values listed in Table~\ref{tab:param1}, which are typical of those encountered in the literature (see \cite{FrBD} for details). In particular, the friction coefficient was modelled using a generalised Coulomb formulation of the type
\begin{align}\label{eq:fViscOOO}
\mu(v) = \mu\ped{d} + (\mu\ped{s}-\mu\ped{d})\eu^{-(\abs{v}/v\ped{S})^2} +\sigma_3v,
\end{align}
where $\mu\ped{d}, \mu\ped{s} \in \mathbb{R}_{>0}$ denote the \emph{dynamic} and \emph{static friction coefficient}, respectively, $v\ped{S}$ is the \emph{Stribeck velocity}, and $\sigma_3 \in \mathbb{R}_{\geq 0}$ is the normalised \emph{viscous friction} coefficient.

\begin{figure}
\centering
\subfloat[Bristle deflection $z(\xi)$ along the contact area for three different values of relative velocities $v  =1$, 5, and 10 $\textnormal{m}\,\textnormal{s}^{-1}$.]{%
\resizebox*{7.5cm}{!}{\includegraphics{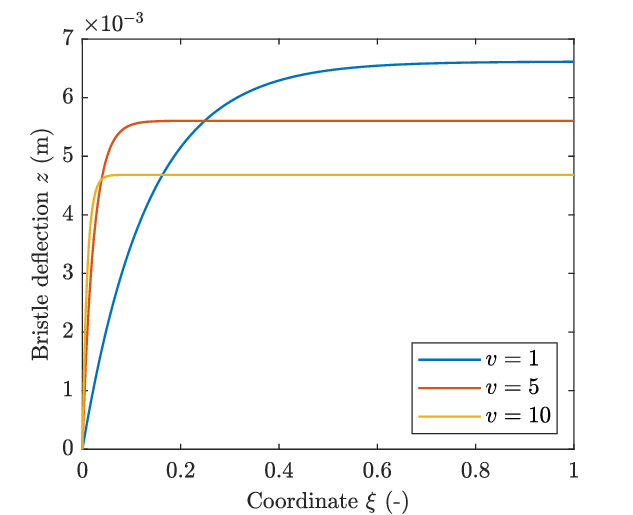}}}\hspace{5pt}
\subfloat[Frictional force $F$ for the constant pressure distribution (blue line), and exponential one (orange line), for a total vertical load of $F_z = 3000$ N.]{%
\resizebox*{7.5cm}{!}{\includegraphics{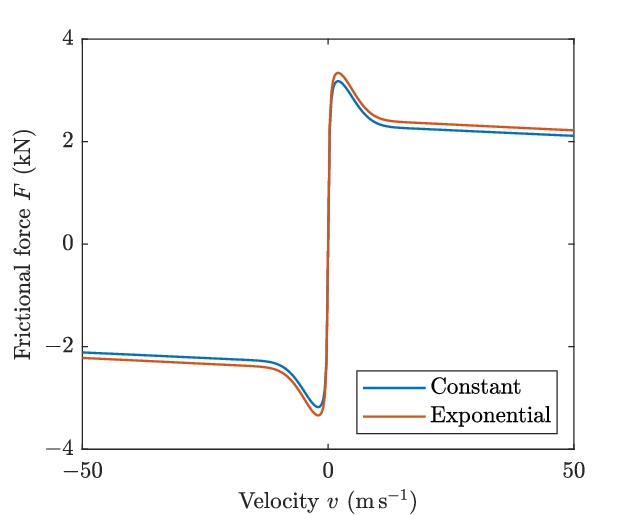}}}\hspace{5pt}
\caption{Bristle deflection and frictional force predicted according to the FrBD model, with $\chi_1 = 1$ and $\chi_2 = 0$. Model parameters as in Table~\ref{tab:param1}.} \label{fig:rectGough}
\end{figure}

\subsubsection{Frictional force}\label{sect:ssFrcoes}
Starting with Eq.~\eqref{eq:CD2}, an analytical expression for the total frictional force may be deduced by appropriately specifying the function $p(\cdot)$. In particular, assuming a constant pressure distribution as in Eq.~\eqref{eq:pConst} gives
\begin{align}\label{eq:Fss1}
\begin{split}
F &= \dfrac{vLp_0V\bar{\sigma}_0(v;\chi_1)\mu(v)g(v;\chi_1)}{\sigma_0^2 \abs{v}_\varepsilon^2}\Biggl[\dfrac{\sigma_0\abs{v}_\varepsilon}{V g(v;\chi_1)}+\exp\biggl( -\dfrac{\sigma_0\abs{v}_\varepsilon}{V g(v;\chi_1)}\biggr)-1\Biggr]  + Lp_0\bar{\sigma}_2(v;\chi_1)v \\
& \quad - \chi_2\sgn_{\varepsilon}(v)\dfrac{Lp_0V\sigma_1\mu(v)}{\sigma_0}\Biggl[1-\exp\biggl( -\dfrac{\sigma_0\abs{v}_\varepsilon}{V g(v;\chi_1)}\biggr)\Biggr],
\end{split}
\end{align}
whereas the exponentially decreasing distribution in Eq.~\eqref{eq:pExp} yields
\begin{align}\label{eq:Fss2}
\begin{split}
F&= \dfrac{\sgn_\varepsilon(v)Lp_0V\mu(v)g(v;\chi_1)}{\sigma_0\bigl(aV g(v;\chi_1)+\sigma_0\abs{v}_\varepsilon\bigr)}\bigl(\bar{\sigma}_0(v;\chi_1)-\chi_2aV\sigma_1\bigr) \Biggl[\exp\biggl( -\dfrac{aV g(v;\chi_1)+\sigma_0\abs{v}_\varepsilon}{V g(v;\chi_1)}\biggr)-1\Biggr] \\
& \quad + \dfrac{Lp_0\bigl(1-\exp(-a)\bigr)}{a}\biggl[\sgn_\varepsilon(v)\bigl(\bar{\sigma}_0(v;\chi_1)-\chi_2aV\sigma_1\bigr)\dfrac{\mu(v)}{\sigma_0}+ \bar{\sigma}_2(v;\chi_1)v\biggr] \\
& \quad - \chi_2\sgn_\varepsilon(v)\dfrac{Lp_0V\exp(-a) \mu(v)}{\sigma_0}\Biggl[1-\exp\biggl( -\dfrac{\sigma_0\abs{v}_\varepsilon}{V g(v;\chi_1)}\biggr)\Biggr],
\end{split}
\end{align}
where
\begin{subequations}
\begin{align}
\bar{\sigma}_0(v; \chi_1) &\triangleq \sigma_0\left(1-\dfrac{\sigma_1 \abs{v}_\varepsilon}{g(v;\chi_1)}\right), \\
\bar{\sigma}_2(v;\chi_1) & \triangleq \sigma_2 + \sigma_1\dfrac{\mu(v)}{g(v;\chi_1)}.
\end{align}
\end{subequations}
Both the expressions~\eqref{eq:Fss1} and~\eqref{eq:Fss2}, which appear to be novel in the literature, are in theoretical agreement with the corresponding ones deduced using the distributed LuGre model (for $\chi_1 = 0$). For $\chi_2 = 0$, the relationship between the steady-state bristle force $F$ and the relative velocity $v$ may be visualised in Fig.~\ref{fig:rectGough}(b) for $V = 20$ $\textnormal{m}\,\textnormal{s}^{-1}$ concerning the constant and exponential pressure distributions. The observed behavior is coherent with the trend of the bristle deformation $z(\xi)$ in stationary conditions, as depicted in Fig.~\ref{fig:rectGough}(a): as the relative velocity increases in magnitude, the friction characteristic first saturates, and then rapidly decreases. The parabolic profile yields a rather involved expression for the frictional force, which is omitted for brevity.

\begin{table}[h!]\centering 
\caption{Model parameters}
{\begin{tabular}{|c|c|c|c|}
\hline
Parameter & Description & Unit & Value \\
\hline 
$L$ & Contact length & m & 0.1 \\
$\sigma_0$ & Normalised micro-stiffness & $\textnormal{m}^{-1}$ & 180 \\
$\sigma_1$ & Normalised micro-damping & $\textnormal{s}\,\textnormal{m}^{-1}$ &0 \\
$\sigma_2$ & Normalised viscous damping & $\textnormal{s}\,\textnormal{m}^{-1}$ &0 \\
$\sigma_3$ & Normalised viscous friction & $\textnormal{s}\,\textnormal{m}^{-1}$ &0.0018 \\
$\mu\ped{d}$ & Dynamic friction coefficient & - &0.8 \\
$\mu\ped{s}$ & Static friction coefficient & - &1.2 \\
$v\ped{S}$ & Stribeck velocity & $\textnormal{m}\,\textnormal{s}^{-1}$ &0.6 \\
$a$ & Pressure parameter & - & 0.1 \\
$\varepsilon$ & Regularisation parameter & $\textnormal{m}^2\,\textnormal{s}^{-2}$ & $0$ \\
\hline
\end{tabular} }
\label{tab:param1}
\end{table}

\section{Semilinear single-track models}\label{sect:SemilienarModels}
Adopting the general FrBD formulation introduced in Sect.~\ref{Sect:friction}, it is possible to derive a family of semilinear single-track models describing the lateral dynamics of the vehicle when the tyres operate in a restricted range of the nonlinear region. In the following, Sect.~\ref{sect:SingleDer} details the complete set of equations governing the planar motion of the vehicle. A general state-space representation, more amenable to mathematical analysis, is then provided in Sect.~\ref{sect:stateSpace}. Finally, Sect.~\ref{sect:well} studies the well-posedness of the derived formulations.

\subsection{Model equations}\label{sect:SingleDer}
The governing equations of the model include those describing the global equilibrium of the single-track vehicle (Sect.~\ref{sect:glovalEq}), the distributed tyre dynamics (Sect.~\ref{esct:distrYr}), and the rigid relative velocities (Sect.~\ref{sect:rigidRelVel}). Before presenting these in detail, it is important to outline the key assumptions underlying their derivation: namely, small steering angles and negligible lateral load transfers. The latter assumption, in particular, implies small lateral accelerations, which limits the applicability of the proposed single-track formulations to moderate dynamic conditions. However, unlike many classical approaches in the literature, the present framework does not assume that the tyres operate within their linear range. This distinction enables the analysis of nonlinear friction-induced effects, even under relatively gentle manoeuvres. As such, the models developed here are especially well suited to capture transient lateral dynamics in low-friction conditions or moderate-speed operations, thereby extending the scope of traditional linear single-track models.

\subsubsection{Global equilibrium}\label{sect:glovalEq}
For sufficiently small steering angles $\delta_1(t), \delta_2(t) \in \mathbb{R}$, the global equilibrium equations governing the lateral dynamics of the single-track model may be deduced as \cite{Guiggiani}
\begin{subequations}\label{eq:bycicle}
\begin{align}
 \dot{v}_y(t)  & = -\dfrac{1}{m}\bigl(F_{y1}(t) + F_{y2}(t)\bigr)-v_xr(t),  && \\
\dot{r}(t) & =-\dfrac{1}{I_z}\bigl( l_1 F_{y1}(t)-l_2F_{y2}(t)\bigr), && t \in (0,T),
\end{align}
\end{subequations}
where the state vector $[v_y(t)\; r(t)]^{\mathrm{T}} \in \mathbb{R}^2$ contains the lateral velocity of the vehicle's centre of gravity and its yaw rate, respectively, $v_x \in \mathbb{R}_{>0}$ denotes its constant longitudinal speed,  $m \in \mathbb{R}_{>0}$ is the total mass, $I_z \in \mathbb{R}_{>0}$ the moment of inertia of the centre of gravity around the vertical axis, $l_1, l_2 \in \mathbb{R}_{>0}$ the distances from the centre of gravity to the front and rear tyres' contact patch centres (that is, the axle lengths), and $F_{yi}(t) \in \mathbb{R}$, $i \in \{1,2\}$, the total lateral forces generated by the tyres and acting at the front and rear axle. A schematic representation of the single-track model is illustrated in Fig.~\ref{figureForcePostdoc}.

\begin{figure}
\centering
\includegraphics[width=0.9\linewidth]{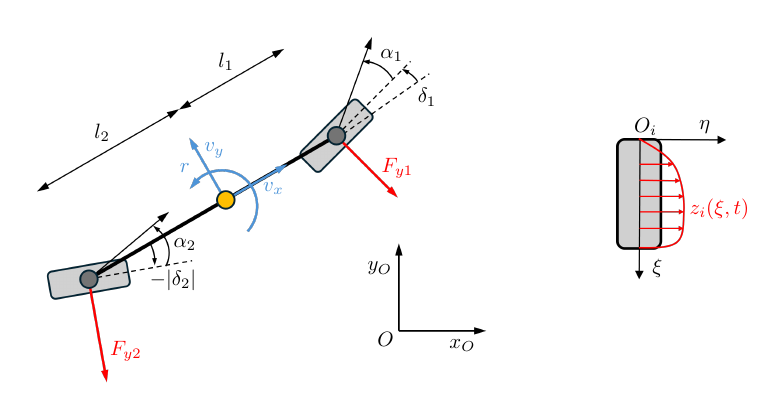} 
\caption{Left: Schematic of the single track model, with its kinematic (blue), dynamic (red), and geometric (black) variables; right: schematic of a single tyre with corresponding reference frame and lateral bristle deflection (in red).}
\label{figureForcePostdoc}
\end{figure}

\subsubsection{Distributed tyre dynamics}\label{esct:distrYr}
As mentioned in Sect.~\ref{sect:bristleDyn}, the scalar PDE~\eqref{eq:FrBDPDE00} can describe several rolling contact systems. In particular, whereas the situation illustrated in Fig.~\ref{fig:LuGre2} deals with the longitudinal case, Eq.~\eqref{eq:FrBDPDE00} may also be employed to describe the lateral dynamics of the tyre, if the variable $z(\xi,t)$ is interpreted as the lateral deflection of a bristle element (see Fig.~\ref{figureForcePostdoc}), and all the involved quantities are appropriately specified in the lateral direction \cite{Deur1,Deur2}. In particular, when the vehicle drives at a constant longitudinal speed $v_x$, in the absence of longitudinal slip, the free rolling speed of the tyres amounts approximately to $V\ped{r}\approx v_x$. Consequently, coherently with the modelling approach of Sect.~\ref{sect:FrictionModel}, the distributed dynamics of each axle may be captured by the following semilinear PDE, postulated on a unit domain:
\begin{subequations}\label{eq:FrBD}
\begin{align}
& \dpd{z_{i}(\xi,t)}{t} + \dfrac{v_x}{L_{i}}\dpd{z_{i}(\xi,t)}{\xi} = -\dfrac{\sigma_{0,i} \abs{v_{i}(t)}_\varepsilon}{g_{i}\bigl(v_{i}(t);\chi_1\bigr)}z_{i}(\xi,t) + \dfrac{2\mu_{i}\bigl(v_{i}(t)\bigr)}{g_{i}\bigl(v_{i}(t);\chi_1\bigr)}v_{i}(t), \quad (\xi,t) \in (0,1)\times(0,T), \label{eq:FrBDPDE} \\
 & z_{i}(0,t)  = 0, \quad t \in (0,T), 
\end{align}
\end{subequations}
where all the symbols have the same meaning as in Sect.~\ref{sect:FrictionModel}, but refer to the axle $i \in \{1,2\}$ via the corresponding subscripts, and 
\begin{align}\label{eq:gij}
g_{i}(v;\chi_1) \triangleq \chi_1\sigma_{1,i} \abs{v}_\varepsilon + \mu_{i}(v).
\end{align}
It is worth clarifying that the variable $z_i(\xi,t) \in \mathbb{R}$, $i\in \{1,2\}$, represents the sum of the lateral deformations of the bristles attached to the left and right tyres mounted on the same axle, which justifies the factor of two multiplying the input on the right-hand side of~\eqref{eq:FrBD}. In contrast, the other model parameters refer to the individual tyre of the axle. As for the distributed FrBD formulation, the PDE~\eqref{eq:FrBD} encompasses all the friction models discussed in Sect.~\ref{sect:FrictionModel}, which are ultimately equivalent when $\sigma_{1,i} =  0$. However, in many cases of practical interest, the effect of the micro-damping and viscous terms on the transient dynamics of the tyre is minor, and may be fairly neglected. On the contrary, relaxation phenomena originating from the compliance of the tyre carcass may play a crucial role. To this end, the following variant of the PDE~\eqref{eq:FrBD}, derived in \cite{CarcassDyn}\footnote{In \cite{CarcassDyn}, more complex versions were derived accounting for the contributions of the damping and viscous terms in the computation of the friction force; however, the resulting increase in complexity -- with an additional nonlinear ODE required to describe the evolution of the transient tyre force -- makes the general formulation less appealing for studies concerned with full vehicle dynamics. Besides, the micro-damping and viscous coefficients are always found to be very small, if not zero, in the literature (see, e.g., the values reported in \cite{Tsiotras1,Tsiotras2,Deur0,Deur1,Deur2}).}, is also considered:
\begin{align}\label{eq:ooCarcass}
\begin{split}
& \dpd{z_{i}(\xi,t)}{t} + \dfrac{v_x}{L_{i}}\dpd{z_{i}(\xi,t)}{\xi} = -\dfrac{\sigma_{0,i} \abs{v_{i}(t)}_\varepsilon}{\mu_{i}\bigl(v_{i}(t)\bigr)}\Biggl(z_{i}(\xi,t)-\psi_{i}\int_0^1 \bar{p}_{i}(\xi)z_{i}(\xi,t)\dif \xi \Biggr) \\
& \qquad \qquad \qquad \qquad \qquad \qquad \quad +v_x\dfrac{\psi_{i}}{L_i}\Biggl( \bar{p}_{i}(1)z_{i}(1,t)-\int_0^1 \dod{\bar{p}_{i}(\xi)}{\xi}z_{i}(\xi,t)\dif\xi\Biggr) + 2\phi_{i}v_{i}(t), \\
& \qquad \qquad \qquad \qquad \qquad \qquad \quad \quad (\xi,t) \in (0,1)\times(0,T), 
\end{split}\\
 & z_{i}(0,t)  = 0, \quad t \in (0,T), 
\end{align}
with the nondimensional pressure distribution $\bar{p}_i \in C^1([0,1];\mathbb{R}_{\geq 0})$, $i\in \{1,2\}$, defined as
\begin{align}
\bar{p}_i(\xi) \triangleq \dfrac{L_i}{F_{zi}}p_i(\xi), \quad i \in \{1,2\},
\end{align}
and
\begin{subequations}\label{eq:etaPsi}
\begin{align}
\phi_{i} & \triangleq \dfrac{w_{i}}{\sigma_{0,i}F_{zi} + w_{i}}, \\
\psi_{i} & \triangleq \dfrac{\sigma_{0,i}F_{zi}}{\sigma_{0,i}F_{zi} + w_{i}}, \quad i\in \{1,2\},
\end{align}
\end{subequations}
where $w_i \in \mathbb{R}_{>0}$, $i \in \{1,2\}$, denotes the lateral stiffness of the tyre carcass.
It is crucial to observe that, according to Eqs.~\eqref{eq:etaPsi}, $\phi_i,\psi_i \in (0,1)$, $i\in\{1,2\}$, which plays a crucial role in determining both the global well-posedness and the stability of the model. Furthermore, as might be intuitively expected, lengthy but straightforward manipulations confirm that the steady-state solution of Eq.~\eqref{eq:ooCarcass} coincides with that of the simpler PDE~\eqref{eq:FrBD}.

As for the distributed FrBD model described in Sect.~\ref{Sect:friction}, the tyre forces on each axle are computed according to
\begin{align}\label{eq:Forces1}
\begin{split}
F_{yi}(t) & = L_{i}\int_0^1 p_{i}(\xi)\Biggl[\sigma_{0,i}z_{i}(\xi,t) + \sigma_{1,i}\biggl(\dod{z_{i}(\xi,t)}{t}-\dfrac{\chi_2v_x}{L_i}\dpd{z_i(\xi,t)}{\xi}\biggr) + 2\sigma_{2,i}v_{i}(t) \Biggr] \dif \xi \\
& = F_{zi}\int_0^1 \bar{p}_{i}(\xi)\Biggl[\sigma_{0,i}z_{i}(\xi,t) + \sigma_{1,i}\biggl(\dod{z_{i}(\xi,t)}{t}-\dfrac{\chi_2v_x}{L_i}\dpd{z_i(\xi,t)}{\xi}\biggr) + 2\sigma_{2,i}v_{i}(t) \Biggr] \dif \xi, \quad i \in \{1,2\},
\end{split}
\end{align}
where the factor of two multiplying the rigid relative velocity appears because $F_{yi}(t)$, $i\in \{1,2\}$, describe the total axle forces, and not those generated by the single tyres. Obviously, when the PDE~\eqref{eq:ooCarcass} is employed, $\sigma_{1,i} = \sigma_{2,i} = 0$, $i \in \{1,2\}$, in Eq.~\eqref{eq:Forces1}.

Finally, different expressions for the nondimensional contact pressure may be postulated. Starting with Eqs.~\eqref{eq:pConst},~\eqref{eq:pExp}, and~\eqref{eq:pParab}, it is possible to deduce the relationship
\begin{align}\label{eq:pConstNorm}
\bar{p}_i(\xi) = \bar{p}_{0,i} = 1, 
\end{align}
for the constant one,
\begin{align}
\bar{p}_i(\xi) = \bar{p}_{0,i}\eu^{-a_i\xi} = \dfrac{a_i\eu^{-a_i\xi}}{1-\eu^{-a_i}},
\end{align}
with $a_i\in \mathbb{R}_{>0}$, for the exponentially decreasing one, and
\begin{align}
\bar{p}_i(\xi) = \bar{p}_{0,i}\xi(1-\xi) = 6\xi(1-\xi)
\end{align}
for the parabolic one.
Finally, it is worth noting that Eq.~\eqref{eq:pInt} identically implies
\begin{align}
\int_0^1 \bar{p}_i(\xi)\dif \xi = 1.
\end{align}

Well-posedness for the PDE~\eqref{eq:ooCarcass} has been established in \cite{MScthesis} invoking classic results available for non-autonomous equations. Stability properties may be proved for certain combinations of model parameters by resorting to standard Lyapunov arguments. However, energy estimates obtained by applying this procedure are inevitably conservative. In practice, stability must be verified numerically for the specific system under consideration. Alternatively, spectral methods may be applied after linearisation of~\eqref{eq:ooCarcass}. For instance, when $\bar{p}(\xi)$ reads as in Eq.~\eqref{eq:pConstNorm}, linearising around small deformations and rigid relative velocities yields the linear full contact patch model analysed in \cite{MioSolo}. Concerning instead the dissipative behaviour of the system described by Eqs.~\eqref{eq:ooCarcass}-\eqref{eq:Forces1}, it is essential to emphasise that the PDE~\eqref{eq:ooCarcass} does not simply describe a friction model, but a mechanical system with friction. Therefore, passivity should not be regarded as an intrinsic property, although a dissipative behaviour might be expected from the interconnected tyre-carcass system.

\subsubsection{Rigid relative velocities and apparent slip angles}\label{sect:rigidRelVel}
A last set of equations is required to link the expressions for the rigid relative velocities appearing in Eqs.~\eqref{eq:FrBD},~\eqref{eq:ooCarcass}, and~\eqref{eq:Forces1} with the kinematic variables describing the lateral motion of the vehicle. Specifically, the rigid relative velocities $v_{i}(t)$ are related to the apparent slip angles $\alpha_{i}(t)$, $i \in \{1,2\}$, via the following relationships \cite{Guiggiani}: 
\begin{align}
v_{i}(t) = v_x\alpha_{i}(t),
\end{align}
where, with some abuse of notation,
\begin{subequations}\label{eq:slipAngles}
\begin{align}
\alpha_{1}(t) &= \alpha_{1}\bigl(v_y(t),r(t),\delta_{1}(t)\bigr) \approx \dfrac{v_y(t) + l_1r(t)}{v_x}-\delta_{1}(t), \\
\alpha_{2}(t) &= \alpha_{2}\bigl(v_y(t),r(t),\delta_{2}(t)\bigr) \approx \dfrac{v_y(t)-l_2r(t)}{v_x}-\chi_3\delta_{2}(t),
\end{align}
\end{subequations}
where the parameter $\chi_3 \in \{0,1\}$ describes the actuation at the rear wheels.
Equations~\eqref{eq:bycicle}-\eqref{eq:slipAngles} permit the derivation of several different semilinear single-track formulations, depending on the adopted models of friction and tyre carcass. The next Sect.~\ref{sect:stateSpace} provides a compact, unified representation for this family of single-track models.

\subsection{State-space representation}\label{sect:stateSpace}
In order to analyse the dynamical behaviour of the single-track models derived in Sect.~\ref{sect:SingleDer}, it is beneficial to introduce a general state-space representation covering both the variants with rigid and flexible carcass. To this end, the following semilinear ODE-PDE system is considered:
\begin{subequations}\label{eq:originalSystems}
\begin{align}
& \dot{\bm{x}}(t) = \mathbf{A}_1\bm{x}(t) + \mathbf{G}_1\biggl[(\mathscr{K}_1\bm{z})(t) + \mathbf{\Sigma}\Bigl(\bm{v}\bigl(\bm{x}(t),\bm{\delta}(t)\bigr)\Bigr)(\mathscr{K}_2\bm{z})(t)+\bm{h}_1\Bigl(\bm{v}\bigl(\bm{x}(t),\bm{\delta}(t)\bigr)\Bigr)\biggr], \quad t \in (0,T), \\
\begin{split}
& \dpd{\bm{z}(\xi,t)}{t} + \mathbf{\Lambda}\dpd{\bm{z}(\xi,t)}{\xi} = \mathbf{\Sigma}\Bigl(\bm{v}\bigl(\bm{x}(t),\bm{\delta}(t)\bigr)\Bigr)\bigl[\bm{z}(\xi,t) + (\mathscr{K}_3\bm{z})(t)\bigr] \\
& \qquad \qquad \qquad \qquad \qquad \quad + (\mathscr{K}_4\bm{z})(t) + \bm{h}_2 \Bigl(\bm{v}\bigl(\bm{x}(t),\bm{\delta}(t)\bigr)\Bigr), \quad (\xi,t) \in (0,1)\times(0,T), 
\end{split}\\
& \bm{z}(0,t) = \bm{0}, \quad t \in (0,T), \label{eq:BCoriginal}
\end{align}
\end{subequations}
where $\mathbb{R}^2 \ni \bm{x}(t) \triangleq [v_y(t) \; r(t)]^{\mathrm{T}}$ denotes the lumped state vector, $\mathbb{R}^2 \ni \bm{z}(\xi,t) \triangleq [z_1(\xi,t)\; z_2(\xi,t)]^{\mathrm{T}}$ represents the distributed state vector, $\mathbb{R}^2 \ni \bm{\delta}(t) \triangleq [\delta_1(t) \; \delta_2(t)]^{\mathrm{T}}$ indicates the input, and the rigid relative velocity $\bm{v} \in C^1(\mathbb{R}^{4 };\mathbb{R}^{2})$ reads
\begin{align}\label{eq:relVel}
\bm{v}(\bm{x},\bm{\delta}) = \mathbf{A}_2\bm{x} + \mathbf{G}_2\bm{\delta}.
\end{align} 
In~\eqref{eq:originalSystems} and~\eqref{eq:relVel}, the diagonal matrix $\mathbf{GL}_{2 }(\mathbb{R})\cap \mathbf{Sym}_{2 }(\mathbb{R})  \ni \mathbf{\Lambda}\succ \mathbf{0}$ collects the transport velocities, $\mathbf{\Sigma} \in C^0(\mathbb{R}^{2 };\mathbf{M}_{2 }(\mathbb{R}))$ represents the nonlinear source matrix, the matrices $\mathbf{A}_1, \mathbf{A}_2, \mathbf{G}_1, \mathbf{G}_2\in  \mathbf{M}_{2}(\mathbb{R})$ have constant coefficients, and $\bm{h}_1, \bm{h}_2 \in C^0(\mathbb{R}^{2 };\mathbb{R}^{2 })$ are vector-valued functions.
Finally, the operators $(\mathscr{K}_1\bm{\zeta})$, $(\mathscr{K}_2\bm{\zeta})$, $(\mathscr{K}_3\bm{\zeta})$, $(\mathscr{K}_4\bm{\zeta})$ satisfy $\mathscr{K}_1, \mathscr{K}_4 \in\mathscr{L}(H^1((0,1);\mathbb{R}^{2 });\mathbb{R}^{2 })$ and $\mathscr{K}_2, \mathscr{K}_3 \in \mathscr{L}(L^2((0,1);\mathbb{R}^{2 });\mathbb{R}^{2 })$, and are given by
\begin{subequations}\label{eq;operatorsKs}
\begin{align}
(\mathscr{K}_1\bm{\zeta}) &\triangleq   \int_0^1 \mathbf{K}_1(\xi) \bm{\zeta}(\xi) \dif \xi + \mathbf{K}_2\bm{\zeta}(1), \label{eq:operatorK_1}\\
(\mathscr{K}_2\bm{\zeta}) &\triangleq   \int_0^1\mathbf{K}_3(\xi)\bm{\zeta}(\xi) \dif \xi, \label{eq:operatorK_2} \\
(\mathscr{K}_3\bm{\zeta}) &\triangleq   \int_0^1 \mathbf{K}_4(\xi) \bm{\zeta}(\xi) \dif \xi, \label{eq:operatorK_3}\\
(\mathscr{K}_4\bm{\zeta}) &\triangleq   \int_0^1 \mathbf{K}_5(\xi) \bm{\zeta}(\xi) \dif \xi+ \mathbf{K}_6\bm{\zeta}(1), \label{eq:operatorK_4} 
\end{align}
\end{subequations}
with $\mathbf{K}_1,\mathbf{K}_3, \mathbf{K}_4, \mathbf{K}_5 \in C^0([0,1];\mathbf{M}_{2 }(\mathbb{R}))$, and $\mathbf{K}_2, \mathbf{K}_6 \in \mathbf{M}_{2 }(\mathbb{R})$. 

The semilinear representation~\eqref{eq:originalSystems} provides a compact and powerful framework to investigate the dynamics of the single-track models developed in this paper. Indeed, as shown in the next Sects.~\ref{sect:semiRigid} and~\ref{sect:semiFlex}, adequate specification of the matrices figuring in Eqs.~\eqref{eq:originalSystems},~\eqref{eq:relVel}, and~\eqref{eq;operatorsKs} may accommodate both the formulations with rigid and compliant tyre carcass.

\subsubsection{Semilinear single-track models with rigid carcass}\label{sect:semiRigid}
The semilinear single-track model with rigid carcass admits a state-space representation in the form described by Eqs.~\eqref{eq:originalSystems},~\eqref{eq:relVel}, and~\eqref{eq;operatorsKs}, with
\begin{align}\label{eq:ssSiRig}
\mathbf{A}_1 & \triangleq \begin{bmatrix} 0 & -v_x \\ 0 & 0 \end{bmatrix}, \quad \mathbf{G}_1 \triangleq  -\begin{bmatrix} \dfrac{1}{m} &  \dfrac{1}{m} \\
 \dfrac{l_1}{I_z}&  -\dfrac{l_2}{I_z}\end{bmatrix}, \nonumber \\ 
\mathbf{K}_1(\xi) & \triangleq \begin{bmatrix} F_{z1}\biggl(\sigma_{0,1}\bar{p}_1(\xi) + \chi_2\dfrac{v_x\sigma_{1,1}}{L_1}\dod{\bar{p}_1(\xi)}{\xi} \biggr) & 0 \\ 0 & F_{z2}\biggl(\sigma_{0,2}\bar{p}_2(\xi) + \chi_2\dfrac{v_x\sigma_{1,2}}{L_2}\dod{\bar{p}_2(\xi)}{\xi} \biggr)\end{bmatrix}, \nonumber \\
\mathbf{K}_2 & \triangleq -\begin{bmatrix} \chi_2F_{z1}\dfrac{v_x\sigma_{1,1}}{L_1}\bar{p}_1(1) & 0 \\ 0 & \chi_2F_{z2}\dfrac{v_x\sigma_{1,2}}{L_2}\bar{p}_2(1)  \end{bmatrix}, \quad \mathbf{K}_3(\xi) \triangleq \begin{bmatrix} F_{z1}\sigma_{1,1}\bar{p}_1(\xi) & 0 \\ 0 &F_{z2}\sigma_{1,2}\bar{p}_2(\xi)   \end{bmatrix}, \nonumber \\
\bm{h}_1(\bm{v}) & \triangleq 2\begin{bmatrix} F_{z1}\biggl(\sigma_{1,1}\dfrac{\mu_1(v_1)}{g_1(v_1;\chi_1)} + \sigma_{2,1}\biggr)v_1  \\ F_{z2}\biggl(\sigma_{1,2}\dfrac{\mu_2(v_2)}{g_2(v_2;\chi_1)} + \sigma_{2,2}\biggr)v_2 \end{bmatrix}, \quad  \mathbf{\Lambda} \triangleq \begin{bmatrix} \dfrac{v_x}{L_1} & 0 \\ 0 & \dfrac{v_x}{L_2}\end{bmatrix}, \nonumber \\
\mathbf{\Sigma}(\bm{v}) & \triangleq -\begin{bmatrix} \dfrac{\sigma_{0,1}\abs{v_{1}}_\varepsilon}{g_{1}(v_{1};\chi_1)} & 0 \\
0 & \dfrac{\sigma_{0,2}\abs{v_{2}}_\varepsilon}{g_{2}(v_{2};\chi_1)}  \end{bmatrix}, \quad
\bm{h}_2(\bm{v})  \triangleq 2\begin{bmatrix} \dfrac{\mu_1(v_1)}{g_1(v_1;\chi_1)}v_1 \\ \dfrac{\mu_2(v_2)}{g_2(v_2;\chi_1)}v_2 \end{bmatrix}, \nonumber \\
\end{align}
$\mathbf{K}_4 = \mathbf{K}_5 = \mathbf{K}_6 = \mathbf{0}$, and the matrices in Eq.~\eqref{eq:relVel} reading
\begin{align}\label{eq:A2G2}
\mathbf{A}_2 & \triangleq \begin{bmatrix}1 & l_1 \\ 1 & -l_1 \end{bmatrix}, && \mathbf{G}_2  \triangleq -v_x\begin{bmatrix} 1 & 0 \\ 0 & \chi_3\end{bmatrix}.
\end{align}

\subsubsection{Semilinear single-track models with flexible carcass}\label{sect:semiFlex}
The semilinear single-track model with flexible carcass may be put compactly in the form described by Eqs.~\eqref{eq:originalSystems},~\eqref{eq:relVel}, and~\eqref{eq;operatorsKs}, with $\mathbf{A}_1$, $\mathbf{G}_1$, and $\mathbf{\Lambda}$ as in~\eqref{eq:ssSiRig}, $\mathbf{K}_2 = \mathbf{K}_3 = \mathbf{0}$, $\bm{h}_1(\bm{v}) = \bm{0}$, $\mathbf{A}_2$ and $\mathbf{G}_2$ according to~\eqref{eq:A2G2}, and
\begin{align}\label{eq:matFlex}
\mathbf{K}_1(\xi) & \triangleq \begin{bmatrix} F_{z1}\sigma_{0,1}\bar{p}_1(\xi) & 0 \\ 0 &  F_{z2}\sigma_{0,2}\bar{p}_2(\xi) \end{bmatrix}, && \mathbf{\Sigma}(\bm{v})  \triangleq -\begin{bmatrix} \dfrac{\sigma_{0,1}\abs{v_{1}}_\varepsilon}{\mu_{1}(v_{1})} & 0 \\
0 & \dfrac{\sigma_{0,2}\abs{v_{2}}_\varepsilon}{\mu_{2}(v_{2})}  \end{bmatrix}, \nonumber \\
 \mathbf{K}_4(\xi) & \triangleq -\begin{bmatrix}\psi_{1}\bar{p}_{1}(\xi) & 0  \\ 0 &\psi_{2}\bar{p}_{2}(\xi)  \end{bmatrix},  && \mathbf{K}_5(\xi)  \triangleq -v_x\begin{bmatrix}\dfrac{\psi_{1}}{L_{1}}\dod{\bar{p}_{1}(\xi)}{\xi} & 0 \\ 0 &\dfrac{\psi_{2}}{L_{2}}\dod{\bar{p}_{2}(\xi)}{\xi}   \end{bmatrix},  \nonumber\\
\mathbf{K}_6 & \triangleq v_x\begin{bmatrix}\dfrac{\psi_{1}\bar{p}_{1}(1)}{L_{1}} & 0  \\ 0 &\dfrac{\psi_{2}\bar{p}_{2}(1)}{L_{2}} \end{bmatrix}, && \bm{h}_2(\bm{v}) \triangleq 2\begin{bmatrix} \phi_1 & 0 \\ 0 & \phi_2 \end{bmatrix}\bm{v}.
\end{align}
Having shown that the ODE-PDE representation~\eqref{eq:originalSystems} effectively covers both the model variants considered in the manuscript, the next Sect.~\ref{sect:well} proceeds to study its well-posedness. Whereas some technical details are confined to Appendix~\ref{app:theorems}, it is essential to clarify that the following analysis does not aim at full generality, striving instead to achieve an armonious balance between rigour and physical insight. Indeed, beyond their mere theoretical relevance, the results advocated in Sect.~\ref{sect:well} permit drawing important considerations about the advantages and drawbacks of the different friction models reviewed previously in Sect.~\ref{Sect:friction}.

\subsection{Well-posedness}\label{sect:well}
For the semilinear ODE-PDE interconnection~\eqref{eq:originalSystems}, well-posedness may be proved locally within different functional settings. In particular, this paper focuses on the notions of \emph{mild} and \emph{classical solutions}, whose existence and uniqueness may be established by resorting to classic semigroup arguments. To this end, the Hilbert spaces $\mathcal{X}\triangleq \mathbb{R}^{2}\times L^2((0,1);\mathbb{R}^{2 })$ and $\mathcal{Y}\triangleq \mathbb{R}^{2}\times H^1((0,1);\mathbb{R}^{2 })$, equipped respectively with norms $\norm{(\bm{y}, \bm{\zeta}(\cdot))}_{\mathcal{X}}^2 \triangleq \norm{\bm{y}}_{2}^2 + \norm{\bm{\zeta}(\cdot)}_{L^2((0,1);\mathbb{R}^{2 })}^2$ and $\norm{(\bm{y}, \bm{\zeta}(\cdot))}_{\mathcal{Y}}^2 \triangleq \norm{\bm{y}}_{2}^2 + \norm{\bm{\zeta}(\cdot)}_{H^1((0,1);\mathbb{R}^{2 })}^2$, are considered. Accordingly, Theorems~\ref{thm:mild} and~\ref{thm:class} deliver the first theoretical results of the manuscript.

\begin{theorem}[Local existence and uniqueness of mild solutions]\label{thm:mild}
Suppose that $\mathbf{\Sigma} \in C^0(\mathbb{R}^{2 };\mathbf{M}_{2 }(\mathbb{R}))$ and $\bm{h}_1, \bm{h}_2\in C^0(\mathbb{R}^{2 };\mathbb{R}^{2 })$ are locally Lipschitz continuous, and $\bm{\delta} \in C^0([0,T];\mathbb{R}^{2})$. Then, for all initial conditions (ICs) $(\bm{x}_0,\bm{z}_0) \triangleq (\bm{x}(0),\bm{z}(\cdot,0)) \in \mathcal{X}$, there exists $ t\ped{max} \leq \infty$ such that the ODE-PDE system~\eqref{eq:originalSystems} admits a unique \emph{mild solution} $(\bm{x},\bm{z}) \in C^0([0,t\ped{max});\mathcal{X})$.
\begin{proof}
See Appendix~\ref{app:WellP}.
\end{proof}
\end{theorem}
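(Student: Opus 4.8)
The plan is to recast the ODE--PDE interconnection~\eqref{eq:originalSystems} as an abstract semilinear Cauchy problem $\dot{\bm{w}}(t)=\mathcal{A}\bm{w}(t)+\mathcal{F}(t,\bm{w}(t))$ on $\mathcal{X}=\mathbb{R}^{2}\times L^{2}((0,1);\mathbb{R}^{2})$ for the state $\bm{w}(t)\triangleq(\bm{x}(t),\bm{z}(\cdot,t))$, and then to invoke the classical local existence and uniqueness theory for semilinear evolution equations driven by a $C_{0}$-semigroup generator and a locally Lipschitz nonlinearity. I would let the linear generator collect the transport dynamics, the drift $\mathbf{A}_{1}$, and the two \emph{linear} nonlocal operators, namely $\mathcal{A}(\bm{y},\bm{\zeta})\triangleq\bigl(\mathbf{A}_{1}\bm{y}+\mathbf{G}_{1}(\mathscr{K}_{1}\bm{\zeta}),\,-\mathbf{\Lambda}\tfrac{\mathrm{d}\bm{\zeta}}{\mathrm{d}\xi}+(\mathscr{K}_{4}\bm{\zeta})\bigr)$ with domain $\mathscr{D}(\mathcal{A})\triangleq\mathbb{R}^{2}\times\{\bm{\zeta}\in H^{1}((0,1);\mathbb{R}^{2})\mathrel{|}\bm{\zeta}(0)=\bm{0}\}$ --- precisely the set on which $\mathscr{K}_{1},\mathscr{K}_{4}$, bounded on $H^{1}$ by~\eqref{eq;operatorsKs}, are pointwise meaningful, since $H^{1}((0,1);\mathbb{R}^{2})\hookrightarrow C^{0}([0,1];\mathbb{R}^{2})$. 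Everything that depends on the current state through the affine map $\bm{v}(\bm{x},\bm{\delta}(t))=\mathbf{A}_{2}\bm{x}+\mathbf{G}_{2}\bm{\delta}(t)$ is gathered in $\mathcal{F}(t,(\bm{y},\bm{\zeta}))\triangleq\bigl(\mathbf{G}_{1}[\mathbf{\Sigma}(\bm{v})(\mathscr{K}_{2}\bm{\zeta})+\bm{h}_{1}(\bm{v})],\,\mathbf{\Sigma}(\bm{v})[\bm{\zeta}+(\mathscr{K}_{3}\bm{\zeta})]+\bm{h}_{2}(\bm{v})\bigr)$ evaluated at $\bm{v}=\bm{v}(\bm{y},\bm{\delta}(t))$; this is a genuine map $[0,T]\times\mathcal{X}\to\mathcal{X}$ because $\mathscr{K}_{2},\mathscr{K}_{3}\in\mathscr{L}(L^{2}((0,1);\mathbb{R}^{2});\mathbb{R}^{2})$ are already bounded on $\mathcal{X}$.

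The first substantive step is to show $\mathcal{A}\in\mathscr{G}(\mathcal{X};1,\omega)$ for some $\omega$. Splitting $\mathcal{A}=\mathcal{A}_{0}+\mathcal{P}$ with $\mathcal{A}_{0}\triangleq\mathbf{A}_{1}\oplus(-\mathbf{\Lambda}\tfrac{\mathrm{d}}{\mathrm{d}\xi})$ and $\mathcal{P}\triangleq(\mathbf{G}_{1}\mathscr{K}_{1},\mathscr{K}_{4})$: as $\mathbf{\Lambda}\succ\mathbf{0}$ is diagonal, $-\mathbf{\Lambda}\tfrac{\mathrm{d}}{\mathrm{d}\xi}$ with the inflow condition $\bm{\zeta}(0)=\bm{0}$ decouples into two scalar transport operators, each generating the outflow left-shift semigroup ($\zeta(\xi)\mapsto\zeta(\xi-(v_{x}/L_{i})t)$ for $\xi>(v_{x}/L_{i})t$, and $0$ otherwise), a contraction $C_{0}$-semigroup on $L^{2}(0,1)$ that is moreover nilpotent for $t\geq L_{i}/v_{x}$; tensoring with the bounded group $\eu^{\mathbf{A}_{1}t}$ yields $\mathcal{A}_{0}\in\mathscr{G}(\mathcal{X};1,\norm{\mathbf{A}_{1}})$. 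The difficulty --- and I expect this to be \emph{the} main obstacle --- is that $\mathcal{P}$ is bounded from $\mathcal{Y}$ to $\mathcal{X}$ but \emph{not} bounded on $\mathcal{X}$, owing to the outflow-trace contributions $\mathbf{K}_{2}\bm{\zeta}(1)$ in $\mathscr{K}_{1}$ and $\mathbf{K}_{6}\bm{\zeta}(1)$ in $\mathscr{K}_{4}$ (and $\mathbf{K}_{6}\neq\mathbf{0}$ for the flexible-carcass model~\eqref{eq:matFlex}), so one cannot perturb by a bounded operator and must argue more carefully. The way around it is to exploit that the outflow trace $\bm{\zeta}\mapsto\bm{\zeta}(1)$ is an \emph{admissible} observation operator for the transport semigroup --- for $L^{2}$ data and $L^{2}$-in-space forcing the trace $\bm{z}(1,\cdot)$ lies in $L^{2}_{\mathrm{loc}}$ --- whence the linear feedback closed by $\mathbf{K}_{2}$, $\mathbf{K}_{6}$ is well posed and $\mathcal{A}=\mathcal{A}_{0}+\mathcal{P}$ still generates a $C_{0}$-semigroup on $\mathcal{X}$; concretely, integrating the transport part along characteristics converts the $\bm{\zeta}(1)$-feedback into a renewal-type Volterra equation, solved by a contraction argument on a short interval and extended afterwards by the nilpotency of the shift. (When $\sigma_{1,i}=0$ in the rigid model one has $\mathbf{K}_{2}=\mathbf{0}$, $\mathscr{K}_{4}=\mathbf{0}$, and $\mathcal{P}$ is outright $\mathcal{X}$-bounded, so this point evaporates and $\mathcal{A}$ generates by elementary bounded-perturbation arguments.)

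It then remains to verify that $\mathcal{F}$ fits the hypotheses of the local-well-posedness theorem: $\mathcal{F}(\cdot,\bm{w})$ continuous for each fixed $\bm{w}\in\mathcal{X}$, and $\mathcal{F}(t,\cdot)$ locally Lipschitz on $\mathcal{X}$, uniformly for $t$ in compact subsets of $[0,T]$. Since $\bm{x}\mapsto\bm{v}(\bm{x},\bm{\delta}(t))$ is affine with coefficients bounded uniformly for $t$ in compacts (using $\bm{\delta}\in C^{0}([0,T];\mathbb{R}^{2})$), the compositions $\mathbf{\Sigma}(\bm{v}(\cdot,\bm{\delta}(t)))$, $\bm{h}_{1}(\bm{v}(\cdot,\bm{\delta}(t)))$, $\bm{h}_{2}(\bm{v}(\cdot,\bm{\delta}(t)))$ inherit local Lipschitz continuity in $\bm{x}$ (uniformly in $t$) from the standing assumption on $\mathbf{\Sigma},\bm{h}_{1},\bm{h}_{2}$, while $\mathscr{K}_{2},\mathscr{K}_{3}$ enter only as bounded linear factors in $\bm{\zeta}$; finite products and sums of such maps remain locally Lipschitz $\mathcal{X}\to\mathcal{X}$, and continuity in $t$ follows from continuity of $\bm{\delta}$. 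With $\mathcal{A}$ a $C_{0}$-semigroup generator and $\mathcal{F}$ as above, the standard fixed-point construction on $C^{0}([0,\tau];\mathcal{X})$ for the variation-of-constants integral equation yields a maximal $t\ped{max}\leq\infty$ together with a unique mild solution $(\bm{x},\bm{z})\in C^{0}([0,t\ped{max});\mathcal{X})$ of~\eqref{eq:originalSystems}, which is the claim; the attendant blow-up alternative ($t\ped{max}<\infty\Rightarrow\norm{(\bm{x}(t),\bm{z}(\cdot,t))}_{\mathcal{X}}\to\infty$ as $t\uparrow t\ped{max}$) comes for free and will be the hook for the subsequent global result.
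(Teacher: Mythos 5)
Your overall architecture (abstract semilinear Cauchy problem on $\mathcal{X}$, $C_0$-semigroup generator plus locally Lipschitz nonlinearity, Pazy-type fixed point on the variation-of-constants formula) matches the paper, and your treatment of the nonlinear term $\mathcal{F}$ is essentially identical to the paper's verification that $\bm{f}$ is continuous in $t$ and locally Lipschitz on $\mathcal{X}$ uniformly on compacts. Where you genuinely diverge is in how the boundary-trace terms $\mathbf{K}_2\bm{\zeta}(1)$ and $\mathbf{K}_6\bm{\zeta}(1)$ are absorbed into the generator. You correctly identify this as the main obstacle -- these maps are $\mathcal{Y}$-to-$\mathcal{X}$ bounded but not $\mathcal{X}$-bounded -- and you propose to resolve it through admissibility of the outflow trace for the transport semigroup, closing the loop via a renewal-type Volterra equation along characteristics (equivalently, a Miyadera--Voigt-type perturbation). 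That route is workable. The paper instead keeps the trace terms \emph{inside} the unbounded operator $\mathscr{A}_0$ from the outset (its bounded perturbation $\mathscr{A}_1$ contains only the integral operators $\mathbf{K}_1$, $\mathbf{K}_5$), proves closedness by explicitly inverting $\mathscr{A}_0-\lambda I_{\mathcal{X}}$, and establishes quasi-dissipativity of both $\mathscr{A}_0$ and its adjoint: integration by parts produces the boundary dissipation $-\tfrac12\bm{\zeta}^{\mathrm{T}}(1)\mathbf{\Lambda}\bm{\zeta}(1)$, and Young's inequality with $\varepsilon=2/\lambda\ped{min}(\mathbf{\Lambda})$ absorbs the trace contributions into that term, after which Lumer--Phillips (in the form requiring quasi-dissipativity of the operator and its adjoint together with closedness and density) yields $\mathscr{A}_0\in\mathscr{G}(\mathcal{X};1,\omega_0)$ with the explicit constant $\omega_0$ of Eq.~\eqref{eq:omega0}. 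The paper's route buys a concrete growth bound and reuses the same integration-by-parts estimate in the global-existence proof of Theorem~\ref{thm:global}; your route avoids computing the adjoint and the resolvent but requires either the admissibility/regular-systems machinery or a hands-on characteristics computation, and does not by itself deliver the quantitative $\omega_0$. Both are sound; to make yours airtight you would still need to actually carry out the contraction argument for the Volterra equation (or cite a Miyadera--Voigt perturbation theorem with the verified smallness condition), which is the one step you sketch rather than prove.
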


\begin{theorem}[Local existence and uniqueness of classical solutions]\label{thm:class}
Suppose that $\mathbf{\Sigma} \in C^1(\mathbb{R}^{2 };\mathbf{M}_{2 }(\mathbb{R}))$, $\bm{h}_1, \bm{h}_2\in C^1(\mathbb{R}^{2 };\mathbb{R}^{2 })$, and $\bm{\delta} \in C^1([0,T];\mathbb{R}^{2})$. Then, for all ICs $(\bm{x}_0,\bm{z}_0) \in \mathcal{Y}$ satisfying the BC~\eqref{eq:BCoriginal}, there exists $ t\ped{max} \leq \infty$ such that the ODE-PDE system~\eqref{eq:originalSystems} admits a unique \emph{classical solution} $(\bm{x},\bm{z}) \in C^1([0,t\ped{max});\mathcal{X}) \cap C^0([0,t\ped{max}); \mathcal{Y})$ satisfying the BC~\eqref{eq:BCoriginal}.
\begin{proof}
See Appendix~\ref{app:WellP}.
\end{proof}
\end{theorem}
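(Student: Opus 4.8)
The plan is to recast the ODE-PDE system~\eqref{eq:originalSystems} as a semilinear abstract Cauchy problem on the Hilbert space $\mathcal{X}$, of the form $\dot{\bm{w}}(t) = \mathscr{A}\bm{w}(t) + \mathscr{F}(t,\bm{w}(t))$, where $\bm{w} = (\bm{x},\bm{z})$, the unbounded operator $\mathscr{A}$ encodes the linear transport part together with the constant-coefficient block $\mathbf{A}_1$, and $\mathscr{F}$ absorbs everything nonlinear: the source terms $\mathbf{\Sigma}(\bm{v}(\bm{x},\bm{\delta}))[\,\cdot + \mathscr{K}_3\,]$, the affine nonlocal couplings $\mathscr{K}_1,\mathscr{K}_2,\mathscr{K}_4$, and the driving terms $\bm{h}_1,\bm{h}_2$. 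First I would define $\mathscr{A}$ with domain $\mathscr{D}(\mathscr{A}) = \mathbb{R}^2 \times \{\bm{\zeta}\in H^1((0,1);\mathbb{R}^2) : \bm{\zeta}(0) = \bm{0}\}$ acting as $\mathscr{A}(\bm{x},\bm{\zeta}) = (\mathbf{A}_1\bm{x}, -\mathbf{\Lambda}\,\partial_\xi\bm{\zeta})$ and verify that $\mathscr{A}$ generates a $C_0$-semigroup on $\mathcal{X}$. This is standard: $\mathbf{A}_1$ is a bounded operator on $\mathbb{R}^2$ so contributes a norm-continuous group, while $-\mathbf{\Lambda}\,\partial_\xi$ with the homogeneous Dirichlet-type boundary condition at $\xi=0$ is the generator of the (nilpotent) shift semigroup on $L^2((0,1);\mathbb{R}^2)$ — since $\mathbf{\Lambda}\succ\mathbf{0}$ is diagonal, each component is a left-transport with speed $v_x/L_i$, and after finite time the semigroup annihilates any function supported initially; a dissipativity/Lumer--Phillips computation or direct appeal to the known result for transport equations closes this. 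The sum $\mathscr{A}$ then generates a $C_0$-semigroup by the bounded perturbation theorem.

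Second, I would establish that $\mathscr{F}(t,\cdot):\mathcal{X}\to\mathcal{X}$ is well-defined and locally Lipschitz, uniformly on compact time intervals. The continuity of $t\mapsto\bm{\delta}(t)$ and the formula $\bm{v}(\bm{x},\bm{\delta}) = \mathbf{A}_2\bm{x}+\mathbf{G}_2\bm{\delta}$ make $t\mapsto\bm{v}$ continuous and locally Lipschitz in $\bm{x}$; composing with the locally Lipschitz maps $\mathbf{\Sigma},\bm{h}_1,\bm{h}_2$ preserves this on bounded sets. The key structural observation is that each operator $\mathscr{K}_j$ appearing in $\mathscr{F}$ is bounded from the space on which it acts into $\mathbb{R}^2$: by hypothesis $\mathscr{K}_2,\mathscr{K}_3\in\mathscr{L}(L^2;\mathbb{R}^2)$, so the terms $\mathbf{\Sigma}(\bm{v})(\mathscr{K}_2\bm{z})$ and $\mathbf{\Sigma}(\bm{v})(\mathscr{K}_3\bm{z})$ are fine on $\mathcal{X}$; the terms $\mathscr{K}_1\bm{z}$ and $\mathscr{K}_4\bm{z}$ involve point evaluation at $\xi=1$ and therefore genuinely require $H^1$, i.e. they are in $\mathscr{L}(H^1;\mathbb{R}^2)$ only. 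In the \emph{mild} setting this is still acceptable because $\mathscr{K}_1\bm{z}$ and $\mathscr{K}_4\bm{z}$ feed only into the \emph{ODE} block (the $\mathbf{G}_1[\cdots]$ term and the $\bm{h}$-free part of the $\bm{z}$-equation routed through $\mathscr{K}_4$) — wait, $\mathscr{K}_4\bm{z}$ appears in the PDE block. Here I would use the crucial fact that for the rigid-carcass model $\mathbf{K}_2,\mathbf{K}_6$ vanish unless $\chi_2\sigma_{1,i}\neq0$, and for the flexible model $\mathbf{K}_6$ is a boundary term that, together with $\mathbf{K}_5$ containing $\mathrm{d}\bar p_i/\mathrm{d}\xi$, must be handled; the clean route is to note that mild solutions are defined by the variation-of-constants integral, and the $H^1$-valued trace operators still make sense when tested against the smoothing from the semigroup — but to avoid subtlety I would restrict, in the mild case, to the observation that $\mathscr{K}_1,\mathscr{K}_4$ act boundedly $\mathcal{Y}\to\mathbb{R}^2$ and that the contributions they make can be bounded in $\mathcal{X}$-norm using the integral parts $\mathbf{K}_1,\mathbf{K}_5\in C^0$, treating the boundary pieces $\mathbf{K}_2\bm\zeta(1),\mathbf{K}_6\bm\zeta(1)$ via an interpolation/trace inequality only where they are nonzero. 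This bookkeeping is the main obstacle: reconciling the $H^1$-valued nature of $\mathscr{K}_1,\mathscr{K}_4$ with an $\mathcal{X}$-valued fixed point.

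Third, with $\mathscr{A}$ a generator and $\mathscr{F}$ locally Lipschitz on $\mathcal{X}$ uniformly on compact time intervals, I invoke the standard local existence-uniqueness theorem for semilinear evolution equations (e.g. Pazy, Thm. 6.1.4) to obtain a unique mild solution $(\bm{x},\bm{z})\in C^0([0,t\ped{max});\mathcal{X})$ with the blow-up alternative, proving Theorem~\ref{thm:mild}. For Theorem~\ref{thm:class}, I would upgrade regularity: under the stronger hypotheses $\mathbf{\Sigma},\bm{h}_1,\bm{h}_2\in C^1$ and $\bm{\delta}\in C^1$, the map $(t,\bm{w})\mapsto\mathscr{F}(t,\bm{w})$ becomes continuously (Fréchet-)differentiable, hence in particular locally Lipschitz jointly in $(t,\bm{w})$. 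Then the classical regularity theorem for semilinear equations (Pazy, Thm. 6.1.5 / 6.3.1) states that whenever the initial datum lies in $\mathscr{D}(\mathscr{A})$ — which is precisely $\mathcal{Y}$ intersected with the boundary condition~\eqref{eq:BCoriginal} — the mild solution is in fact a classical solution, i.e. $(\bm{x},\bm{z})\in C^1([0,t\ped{max});\mathcal{X})\cap C^0([0,t\ped{max});\mathcal{Y})$ and satisfies~\eqref{eq:originalSystems} pointwise, with the boundary condition propagated in time because $\mathscr{D}(\mathscr{A})$ is invariant under the flow. One subtlety to check carefully is that $\mathscr{F}$ maps $\mathcal{Y}$ into $\mathcal{Y}$ (not merely $\mathcal{X}$ into $\mathcal{X}$) so that the $C^0([0,t\ped{max});\mathcal{Y})$ conclusion holds: this follows because on $\mathcal{Y}$ the finite-rank, $\mathbb{R}^2$-valued outputs of the $\mathscr{K}_j$ are smooth in $\xi$ (constant), $\mathbf{\Sigma}(\bm{v})\bm{z}$ preserves $H^1$ since $\mathbf{\Sigma}(\bm v)$ is a constant-in-$\xi$ matrix, and $\bm{h}_2(\bm v)$ is constant in $\xi$; differentiability in $t$ then transfers through the $C^1$ assumptions. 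The blow-up alternative in both theorems gives the stated $t\ped{max}\leq\infty$, and global well-posedness (deferred to later in the paper) is what the dissipativity of the FrBD structure will eventually secure.
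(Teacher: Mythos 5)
Your overall strategy (abstract semilinear Cauchy problem on $\mathcal{X}$, semigroup generation for the linear part, local Lipschitz/$C^1$ nonlinearity, Pazy Thm.~6.1.4 for mild and Thm.~6.1.5 for classical solutions) is the same as the paper's, and your third step --- upgrading to classical solutions from ICs in $\mathscr{D}(\mathscr{A})=\{(\bm{y},\bm{\zeta})\in\mathcal{Y}: \bm{\zeta}(0)=\bm{0}\}$ once the nonlinearity is $C^1$ --- is exactly what the paper does. (Your worry about $\mathscr{F}$ mapping $\mathcal{Y}$ into $\mathcal{Y}$ is not needed: Pazy's Thm.~6.1.5 only requires $\bm{f}\in C^1(\mathcal{X}\times[0,T];\mathcal{X})$ together with the IC in $\mathscr{D}(\mathscr{A})$.)

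However, there is a genuine gap at the point you yourself flag and leave unresolved: the boundary-trace terms $\mathbf{K}_2\bm{\zeta}(1)$ and $\mathbf{K}_6\bm{\zeta}(1)$ inside $\mathscr{K}_1$ and $\mathscr{K}_4$. With your choice of generator $\mathscr{A}(\bm{x},\bm{\zeta})=(\mathbf{A}_1\bm{x},-\mathbf{\Lambda}\partial_\xi\bm{\zeta})$, these terms must live in the nonlinearity, but point evaluation at $\xi=1$ is \emph{not} a bounded map $L^2((0,1);\mathbb{R}^2)\to\mathbb{R}^2$, so $\mathscr{F}$ is not locally Lipschitz on $\mathcal{X}$ and the fixed-point argument does not close. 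Neither of your proposed escapes works: the transport semigroup is a pure shift with no smoothing, and no interpolation between $L^2$ and $L^2$ controls a trace. The paper's resolution is a different splitting: the trace couplings are placed \emph{inside} the unbounded generator, defining $\mathscr{A}_0(\bm{y},\bm{\zeta})=(\mathbf{A}_1\bm{y}+\mathbf{G}_1\mathbf{K}_2\bm{\zeta}(1),\,-\mathbf{\Lambda}\partial_\xi\bm{\zeta}+\mathbf{K}_6\bm{\zeta}(1))$ on the same domain, while only the genuinely bounded integral parts $\int_0^1\mathbf{K}_1\bm{\zeta}$ and $\int_0^1\mathbf{K}_5\bm{\zeta}$ are treated as a bounded perturbation $\mathscr{A}_1\in\mathscr{B}(\mathcal{X})$. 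Generation for $\mathscr{A}_0$ is then proved via Lumer--Phillips by showing closedness (explicit resolvent) and quasi-dissipativity of both $\mathscr{A}_0$ and $\mathscr{A}_0^*$: the key mechanism is that integrating the transport term by parts produces the negative boundary contribution $-\tfrac12\bm{\zeta}^{\mathrm{T}}(1)\mathbf{\Lambda}\bm{\zeta}(1)$, which absorbs the trace cross-terms via Young's inequality with $\varepsilon=2/\lambda\ped{min}(\mathbf{\Lambda})$. Without this (or an equivalent device), your argument covers only the parameter configurations with $\mathbf{K}_2=\mathbf{K}_6=\mathbf{0}$, not the general system~\eqref{eq:originalSystems} to which the theorem applies.
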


Unfortunately, due to the semilinear nature of the interconnection~\eqref{eq:originalSystems}, Theorems~\ref{thm:mild} and~\ref{thm:class} assert the existence of solutions defined only locally in time. Nonetheless, under the same assumptions as Theorem~\ref{thm:class}, it is possible to enounce global well-posedness by imposing stricter requirements on the functions $\mathbf{\Sigma}(\cdot)$, $\bm{h}_1(\cdot)$, and $\bm{h}_2(\cdot)$ appearing in Eqs.~\eqref{eq:originalSystems}. In this context, a stronger result, concerned with mild solutions, is formalised in Theorem~\ref{thm:global} below.

\begin{theorem}[Global existence and uniqueness of mild solutions]\label{thm:global}
Suppose that $\mathbf{\Sigma} \in C^1(\mathbb{R}^{2 };\mathbf{M}_{2 }(\mathbb{R}))$, $\bm{\delta} \in C^1(\mathbb{R}_{\geq 0};\mathbb{R}^{2}) \cap L^\infty(\mathbb{R}_{\geq 0};\mathbb{R}^{2 })$, and $\bm{h}_1, \bm{h}_2\in C^1(\mathbb{R}^{2 };\mathbb{R}^{2 })$ satisfy a linear growth condition, that is, there exist $L_{h_1}, L_{h_2}\in \mathbb{R}_{\geq 0}$ and $b_1,b_2 \in \mathbb{R}_{\geq 0}$ such that
\begin{subequations}\label{eq:hCond}
\begin{align}
\norm{\bm{h}_1(\bm{v})}_2 & \leq L_{h_1} \norm{\bm{v}}_2 + b_1, \\
\norm{\bm{h}_2(\bm{v})}_2 & \leq L_{h_2} \norm{\bm{v}}_2 + b_2.
\end{align}
\end{subequations}
Moreover, assume that one of the following Hypotheses holds:
\begin{enumerate}[(H.1)]
\item $\mathbf{K}_2 =\mathbf{K}_3 = \mathbf{0}$, and there exists a diagonal matrix-valued function $C^0([0,1];\mathbf{Sym}_{2}(\mathbb{R})) \ni \mathbf{P} \triangleq \diag\{P_1, P_2\}$, with $\mathbf{P}(\xi) \succ \mathbf{0}$, such that, for every $(\bm{y},\bm{\zeta}) \in \mathbb{R}^{2 }\times L^2((0,1);\mathbb{R}^{2 })$,
\begin{align}\label{eq:ineqP}
\int_0^1 \bm{\zeta}^{\mathrm{T}}(\xi)\mathbf{P}(\xi) \mathbf{\Sigma}(\bm{y})\bigl[\bm{\zeta}(\xi) + (\mathscr{K}_3\bm{\zeta})\bigr] \dif \xi \leq 0.
\end{align}\label{th:ext.1}
\item For all $\bm{y} \in \mathbb{R}^{2 }$, the matrix $\mathbf{\Sigma}(\bm{y}) \in \mathbf{M}_{2}(\mathbb{R})$ satisfies
\begin{align}
\norm{\mathbf{\Sigma}(\bm{y})} \leq M_{\mathbf{\Sigma}}, 
\end{align}
for some $M_{\mathbf{\Sigma}} \in \mathbb{R}_{\geq 0}$. \label{th:ext.2}
\end{enumerate}
Then, for all ICs $(\bm{x}_0,\bm{z}_0)\in \mathcal{X}$, the ODE-PDE system~\eqref{eq:originalSystems} admits a unique global mild solution $(\bm{x},\bm{z}) \in C^0(\mathbb{R}_{\geq 0};\mathcal{X})$.
\begin{proof}
See Appendix~\ref{app:thmGlobal}
\end{proof}
\end{theorem}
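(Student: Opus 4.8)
The plan is to extend the local mild solution from Theorem~\ref{thm:mild} (applicable since the hypotheses of Theorem~\ref{thm:global} are strictly stronger) to a global one by proving an \emph{a priori} bound on $\norm{(\bm{x}(t),\bm{z}(\cdot,t))}_{\mathcal{X}}$ that grows at most exponentially on any finite interval. By the standard blow-up alternative for semilinear evolution equations --- either $t\ped{max} = \infty$, or $\limsup_{t\uparrow t\ped{max}}\norm{(\bm{x}(t),\bm{z}(\cdot,t))}_{\mathcal{X}} = \infty$ --- such a bound forces $t\ped{max} = \infty$. First I would set up an energy/Lyapunov functional on $\mathcal{X}$. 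For the $\bm{z}$-component I would use the weighted $L^2$ energy $E_{\bm{z}}(t) \triangleq \int_0^1 \bm{z}^{\mathrm{T}}(\xi,t)\mathbf{P}(\xi)\bm{z}(\xi,t)\dif\xi$ (with $\mathbf{P} = I_2$ in case (H.2)), and for the $\bm{x}$-component simply $\norm{\bm{x}(t)}_2^2$; the total functional is a suitable linear combination. Differentiating $E_{\bm{z}}$ along classical solutions and integrating by parts the transport term $\mathbf{\Lambda}\partial_\xi\bm{z}$, the boundary term at $\xi=0$ vanishes by the BC~\eqref{eq:BCoriginal} and the one at $\xi=1$ has a sign that can be controlled (the diagonal structure of $\mathbf{\Lambda}$ and $\mathbf{P}$ keeps this clean); a density argument then transfers the estimate to mild solutions.

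Next I would bound the source terms. The crucial dissipativity input is the hypothesis: under (H.1), the quadratic form $\int_0^1\bm{\zeta}^{\mathrm{T}}\mathbf{P}\mathbf{\Sigma}(\bm{y})[\bm{\zeta}+(\mathscr{K}_3\bm{\zeta})]\dif\xi \leq 0$ kills exactly the potentially sign-indefinite contribution of the nonlinear source matrix to $\dot E_{\bm{z}}$; under (H.2) the same contribution is instead bounded by $M_{\mathbf{\Sigma}}(1 + \norm{\mathscr{K}_3})E_{\bm{z}}$ up to weight constants, which is already of the required linear-in-energy form. The operators $\mathscr{K}_1,\mathscr{K}_4$ only appear with their $H^1$-bounds, but since they feed into $\dot{\bm{x}}$ and the $\bm{z}$-equation through $L^2$-type pairings against $\bm{\zeta}(\xi)$ and $\bm{\zeta}(1)$, I would need a trace-type control of $\norm{\bm{\zeta}(1)}_2$; here the transport structure helps --- along characteristics $\bm{z}(1,t)$ is determined by earlier data plus a bounded integral, so $\norm{\bm{z}(1,\cdot)}$ can be absorbed. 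The inhomogeneous terms $\bm{h}_1,\bm{h}_2$ enter linearly in $\norm{\bm{v}}_2$ by~\eqref{eq:hCond}, and $\bm{v}(\bm{x},\bm{\delta}) = \mathbf{A}_2\bm{x} + \mathbf{G}_2\bm{\delta}$ with $\bm{\delta} \in L^\infty$, so $\norm{\bm{v}}_2 \leq \norm{\mathbf{A}_2}\norm{\bm{x}}_2 + \norm{\mathbf{G}_2}\norm{\bm{\delta}}_{L^\infty} $, again linear in the state plus a constant. Collecting all contributions, Young's and Cauchy--Schwarz inequalities yield a differential inequality $\dod{}{t}\bigl(\norm{\bm{x}(t)}_2^2 + E_{\bm{z}}(t)\bigr) \leq c_1\bigl(\norm{\bm{x}(t)}_2^2 + E_{\bm{z}}(t)\bigr) + c_2$ for constants $c_1, c_2 \geq 0$ depending only on the data, and Grönwall closes the argument, giving $t\ped{max} = \infty$ and hence a global mild solution in $C^0(\mathbb{R}_{\geq 0};\mathcal{X})$; uniqueness is inherited from the local theorem.

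I expect the main obstacle to be handling the nonlocal operators $\mathscr{K}_1,\dots,\mathscr{K}_4$ --- in particular the pointwise trace terms $\mathbf{K}_2\bm{\zeta}(1)$ and $\mathbf{K}_6\bm{\zeta}(1)$ --- within a pure $\mathcal{X} = \mathbb{R}^2\times L^2$ energy estimate, since $\bm{\zeta}\mapsto\bm{\zeta}(1)$ is not $L^2$-bounded. Under (H.1) this is sidestepped because $\mathbf{K}_2 = \mathbf{K}_3 = \mathbf{0}$ is assumed, so only the $H^1$-continuous $\mathscr{K}_4$ (which contains $\mathbf{K}_6\bm{\zeta}(1)$) remains, and that term must be controlled via the transport dynamics rather than a naive trace bound --- concretely, by writing $\bm{z}(1,t)$ through the method of characteristics (Sect.~\ref{sect:Sol}) as $\bm{z}(1,t) = \int$ of source over the characteristic plus zero boundary data, producing an $L^2$-in-time-and-space bound. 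Under (H.2), $\mathbf{K}_2,\mathbf{K}_3$ may be nonzero, so the same characteristic representation is needed for $\mathscr{K}_1$ as well. The secondary subtlety is that the estimate is first derived for classical solutions (where $\partial_\xi\bm{z}$ and $\dot E_{\bm{z}}$ make classical sense) and then extended to mild solutions by approximating the IC in $\mathcal{X}$ by a sequence in $\mathcal{Y}$ compatible with the BC and passing to the limit using continuous dependence --- a routine but necessary step that I would state explicitly.
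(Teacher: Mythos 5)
Your proposal follows essentially the same route as the paper's proof in Appendix~\ref{app:thmGlobal}: a weighted Lyapunov functional $V = \tfrac{1}{2}\norm{\bm{x}}_2^2 + \tfrac{1}{2}\int_0^1\bm{z}^{\mathrm{T}}\mathbf{P}(\xi)\bm{z}\,\dif\xi$ (with $\mathbf{P}$ from Hypothesis~(H.1), or any fixed positive-definite weight under~(H.2)), differentiation along classical solutions with integration by parts of the transport term, use of~(H.1)/(H.2) to dispose of the $\mathbf{\Sigma}$-contribution, the linear growth of $\bm{h}_1,\bm{h}_2$ combined with $\bm{v}=\mathbf{A}_2\bm{x}+\mathbf{G}_2\bm{\delta}$ and $\bm{\delta}\in L^\infty$ for the inhomogeneous terms, a Gr\"onwall bound that precludes finite-time blow-up of the $\mathcal{X}$-norm, and a density/continuous-dependence argument (Theorem~6.1.2 in \cite{Pazy}) to pass from ICs in $\mathscr{D}(\mathscr{A})$ to general ICs in $\mathcal{X}$. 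All of this matches the published argument.

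The one place you diverge --- and where you make the step harder than it needs to be --- is the control of the trace terms $\mathbf{K}_2\bm{z}(1,t)$ and $\mathbf{K}_6\bm{z}(1,t)$ hidden in $\mathscr{K}_1$ and $\mathscr{K}_4$. You propose representing $\bm{z}(1,t)$ via the method of characteristics to obtain a separate $L^2$-in-time bound. The paper does not need this detour: integrating the transport term by parts already produces the good-sign boundary contribution $-\tfrac{1}{2}\bm{z}^{\mathrm{T}}(1,t)\mathbf{P}(1)\mathbf{\Lambda}\bm{z}(1,t)$, and the $\tfrac{1}{2\varepsilon}\norm{\bm{z}(1,t)}_2^2$ terms generated by Cauchy--Schwarz and Young's inequality in the $\mathscr{K}_1$- and $\mathscr{K}_4$-pairings are absorbed into it by choosing $\varepsilon = 2/\lambda\ped{min}\bigl(\mathbf{P}(1)\mathbf{\Lambda}\bigr)$. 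This is precisely the ``sign that can be controlled'' you mention in passing, so the characteristics representation is superfluous (though it would also work). With that simplification your outline coincides with the paper's proof.
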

For simplicity, Theorem~\ref{thm:global} is stated above owing to the assumption of smooth matrices and inputs, as also required by Theorem~\ref{thm:class}; the extension to non-differentiable coefficients and inputs would involve subtle technicalities that could obscure the main message of the paper, and is therefore not attempted here. 
Instead, it is worth commenting on the conditions under which the inequalities~\eqref{eq:hCond}, as well as Hypotheses~\ref{th:ext.1} and~\ref{th:ext.2}, hold. Concerning the bounds in Eqs.~\eqref{eq:hCond}, inspection of~\eqref{eq:ssSiRig} and~\eqref{eq:matFlex} immediately reveals that they are identically satisfied by the Dahl, LuGre, and FrBD friction models. Considering first the semilinear single-track formulation with rigid tyre carcass described in Sect.~\ref{sect:semiRigid}, Hypothesis~\ref{th:ext.2} is only verified by the FrBD model (with $\chi_1 =1$ in Eq.~\eqref{eq:ssSiRig}). However, adopting a distributed Dahl model ($\sigma_{1,i} = \sigma_{2,i} = 0$) ensures the fulfillment of Hypothesis~\ref{th:ext.1} for any combination of physical parameters; the same condition is verified if a linear viscous damping term is incorporated into the expressions for the friction coefficients $\mu_1(\cdot)$ and $\mu_2(\cdot)$, as done by the FrBD model. Moving instead to the variant with a flexible carcass, as detailed in Sect.~\ref{sect:semiFlex}, it is clear that Hypothesis~\ref{th:ext.2} is again satisfied whenever $\mu_1(\cdot)$ and $\mu_2(\cdot)$ grow linearly in their arguments, consistently with the modelling approach of the FrBD formulation. Finally, sufficient conditions for Hypothesis~\ref{th:ext.1} to hold are enounced by Proposition~\ref{lemma:P} below.

\begin{proposition}\label{lemma:P}
Consider the matrices $\mathbf{\Sigma}(\bm{y})$ and $\mathbf{K}_4(\xi)$ as in Eq.~\eqref{eq:matFlex}, and suppose that $\bar{p}_i \in C^1([0,1];\mathbb{R}_{>0})$, $i \in \{1,2\}$, and $\psi_i \in (0,1)$ satisfies
\begin{align}\label{eq:psiComnd}
\psi_i \norm{\bar{p}_i(\cdot)}_\infty \leq 1, \quad i \in \{1,2\}.
\end{align}
Then, the matrix $\mathbf{P}(\xi)$ in Hypothesis~\ref{th:ext.1} may be selected as $\mathbf{P}(\xi) = \diag\{\bar{p}_1(\xi), \bar{p}_2(\xi)\}$.
\begin{proof}
First, it should be observed that $\bar{p}_i\in C^1([0,1];\mathbb{R}_{>0})$ implies that $\mathbf{P}(\xi) \triangleq \diag\{\bar{p}_1(\xi),\bar{p}_2(\xi)\}$ is positive definite, i.e., $\mathbf{P}(\xi) \succ \mathbf{0}$, with $\mathbf{P} \in C^1([0,1];\mathbf{Sym}_{2}(\mathbb{R}))$. Moreover, if Eq.~\eqref{eq:psiComnd} holds, computing~\eqref{eq:ineqP} gives, after simple manipulations,
\begin{align}\label{eq:ineqP2}
\begin{split}
\int_0^1 \bm{\zeta}^{\mathrm{T}}(\xi)\mathbf{P}(\xi) \mathbf{\Sigma}(\bm{y})\bigl[\bm{\zeta}(\xi) + (\mathscr{K}_3\bm{\zeta})\bigr] \dif \xi &\leq -\dfrac{\sigma_{0,1}\abs{y_1}_\varepsilon}{\mu_1(y_1)}\int_0^1 \bar{p}_1(\xi)\bigl(1-\psi_1\bar{p}_1(\xi)\bigr)\zeta_1^2(\xi) \dif \xi \\
& \quad -\dfrac{\sigma_{0,2}\abs{y_2}_\varepsilon}{\mu_2(y_2)}\int_0^1 \bar{p}_2(\xi)\bigl(1-\psi_2\bar{p}_2(\xi)\bigr)\zeta_2^2(\xi) \dif \xi \leq 0,
\end{split}
\end{align}
which concludes the proof.
\end{proof}
\end{proposition}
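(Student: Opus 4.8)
The plan is to verify directly the two requirements that Hypothesis~\ref{th:ext.1} imposes on the weight, for the candidate $\mathbf{P}(\xi) = \diag\{\bar{p}_1(\xi),\bar{p}_2(\xi)\}$: namely that $\mathbf{P}$ is admissible (diagonal, symmetric, pointwise positive definite, continuous), and that the integral inequality~\eqref{eq:ineqP} holds for every pair $(\bm{y},\bm{\zeta}) \in \mathbb{R}^{2}\times L^2((0,1);\mathbb{R}^{2})$. The first point I would dispatch immediately from the hypothesis $\bar{p}_i \in C^1([0,1];\mathbb{R}_{>0})$: strict positivity of each diagonal entry makes $\mathbf{P}(\xi)$ positive definite at every $\xi$, and $C^1$ regularity a fortiori yields $\mathbf{P}\in C^0([0,1];\mathbf{Sym}_2(\mathbb{R}))$.

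For the inequality, the first thing I would exploit is that every matrix entering~\eqref{eq:ineqP} --- the weight $\mathbf{P}(\xi)$, the source $\mathbf{\Sigma}(\bm{y})$, and the kernel $\mathbf{K}_4(\xi)$ of $\mathscr{K}_3$ read off from~\eqref{eq:matFlex} and~\eqref{eq:operatorK_3} --- is diagonal, so the quadratic form decouples completely into two scalar contributions indexed by $i\in\{1,2\}$. Writing $\mathbf{\Sigma}(\bm{y}) = \diag\{\lambda_1(\bm{y}),\lambda_2(\bm{y})\}$ with $\lambda_i(\bm{y}) = -\sigma_{0,i}\abs{y_i}_\varepsilon/\mu_i(y_i)$, which is nonpositive because $\mu_i \geq \mu\ped{min} > 0$ and $\abs{\cdot}_\varepsilon \geq 0$, and noting that $(\mathscr{K}_3\bm{\zeta})_i = -\psi_i\int_0^1\bar{p}_i(\xi)\zeta_i(\xi)\dif\xi$, the left-hand side of~\eqref{eq:ineqP} collapses to $\sum_{i=1}^2 \lambda_i(\bm{y})\bigl[\int_0^1\bar{p}_i(\xi)\zeta_i^2(\xi)\dif\xi - \psi_i\bigl(\int_0^1\bar{p}_i(\xi)\zeta_i(\xi)\dif\xi\bigr)^2\bigr]$. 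Since each $\lambda_i(\bm{y})\leq 0$, all that would remain is to show the bracketed scalar is nonnegative for each $i$.

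That last nonnegativity is the only spot where the normalisation $\int_0^1\bar{p}_i(\xi)\dif\xi = 1$ and the smallness of $\psi_i$ are actually used, through a weighted Cauchy--Schwarz bound on $\int_0^1\bar{p}_i\zeta_i\dif\xi$; I would pair $\bar{p}_i\zeta_i$ with the constant $1$, obtaining $\bigl(\int_0^1\bar{p}_i\zeta_i\bigr)^2 \leq \int_0^1\bar{p}_i^2\zeta_i^2$, so that the bracket dominates $\int_0^1\bar{p}_i(\xi)\bigl(1-\psi_i\bar{p}_i(\xi)\bigr)\zeta_i^2(\xi)\dif\xi$, which is manifestly nonnegative precisely under the pointwise condition $\psi_i\norm{\bar{p}_i(\cdot)}_\infty \leq 1$ postulated in~\eqref{eq:psiComnd} (one could equally pair $\sqrt{\bar{p}_i}$ with $\sqrt{\bar{p}_i}\,\zeta_i$ and recover a nonnegative bracket from $\psi_i\in(0,1)$ alone, but the chosen route ties the hypothesis transparently to the structure of the weight). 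I do not expect a genuine obstacle here: the statement is essentially bookkeeping, and the only care needed is in selecting the Cauchy--Schwarz weighting so that the cross term produced by $\mathscr{K}_3$ is absorbed while the normalisation of $\bar{p}_i$ can be invoked.
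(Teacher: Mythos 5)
Your argument is correct and follows essentially the same route as the paper's proof: the diagonal structure of $\mathbf{P}$, $\mathbf{\Sigma}$, and $\mathbf{K}_4$ decouples the quadratic form, and the weighted Cauchy--Schwarz estimate you make explicit is precisely the ``simple manipulation'' that produces the paper's intermediate bound $-\tfrac{\sigma_{0,i}\abs{y_i}_\varepsilon}{\mu_i(y_i)}\int_0^1 \bar{p}_i(\xi)\bigl(1-\psi_i\bar{p}_i(\xi)\bigr)\zeta_i^2(\xi)\dif\xi$, which is nonpositive under~\eqref{eq:psiComnd}. One small remark: in your chosen pairing it is the unit length of the domain (not the normalisation $\int_0^1\bar{p}_i\dif\xi=1$) that enters, whereas your parenthetical alternative pairing $\sqrt{\bar{p}_i}$ with $\sqrt{\bar{p}_i}\,\zeta_i$ does use that normalisation and, as you note, would in fact establish the inequality from $\psi_i\in(0,1)$ alone.
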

The conditions presented in Proposition~\ref{lemma:P} require the specification of a particular pressure distribution within the tyre's contact patch, such as a constant or exponentially decreasing profile (the compactly supported parabolic one being clearly not suitable). In this context, the following Remark~\ref{remark:1} highlights some key considerations.
\begin{remark}\label{remark:1}
Since $\psi_i \in (0,1)$, $i \in \{1,2\}$, the inequality~\eqref{eq:psiComnd} is automatically satisfied when $\norm{\bar{p}_i(\cdot)}_\infty = 1$, as it happens for the case of a constant pressure distribution. In contrast, for an exponentially decreasing profile, the maximum pressure may be expressed as $\norm{\bar{p}_i(\cdot)}_\infty \equiv \bar{p}_i(0) = \frac{a_i}{1-\eu^{-a_i}} > 1$. If the parameter $a_i$ is allowed to be arbitrarily small, then Eq.~\eqref{eq:psiComnd} is virtually satisfied for any value of $\psi_i \in (0,1)$, $i \in \{1,2\}$. In practice, the constant pressure profile is commonly used in the literature and typically yields accurate results concerning the computation of the tyre forces. However, exponentially decreasing profiles offer the additional benefit of guaranteeing strict dissipativity of the friction model. This property is particularly advantageous when it comes to the synthesis of dissipativity-based controllers and estimators, making exponential pressure distributions highly desirable in control-oriented applications.
\end{remark}
In essence, the distributed Dahl and FrBD models are the only formulations that satisfy Hypotheses~\ref{th:ext.1} and~\ref{th:ext.2} across all relevant cases of practical interest. This is intuitively expected, as both models are grounded in physically consistent principles, as explained in \cite{FrBD}. In contrast, the LuGre model is derived from a more heuristic rationale, which lacks a fully rigorous physical justification.

\section{Linear and linearised models}\label{sect:linearLinearised}
The present section is dedicated to the analysis of a linearised version of the ODE-PDE system~\eqref{eq:originalSystems}. Indeed, besides providing important insights about the local behaviour of the semilinear single-track models derived in Sect.~\ref{sect:SemilienarModels}, linearisation may additionally facilitate the synthesis of controllers and observers.
In this context, the next Sect.~\ref{sect:LinearEqui} introduces the linearised equations of the model, whereas Sect.~\ref{sect.Stabiliy} derives necessary and sufficient conditions for uniform exponential stability, and then moves to study the frequency response of the linearised system.

\subsection{Equilibria and linearisation}\label{sect:LinearEqui}
To proceed with linearisation of the ODE-PDE system~\eqref{eq:originalSystems}, its equilibria need first to be determined, which is accomplished in Sect.~\ref{sect:equi}. The complete set of equations for the linearised single-track models is then provided in Sect.~\ref{sect:linearisation}, where some technical results are also enounced.

\subsubsection{System's equilibria}\label{sect:equi}
The first step in the linearisation of Eq.~\eqref{eq:originalSystems} consists in the determination the equilibria $(\bm{x}^\star, \bm{z}^\star) \in \mathcal{Y}$ associated with the constant input $\bm{\delta}(t) = \bm{\delta}^\star \in \mathbb{R}^2$. Since the PDEs governing the bristle dynamics, Eqs.~\eqref{eq:FrBD} and~\eqref{eq:ooCarcass}, are highly nonlinear, an analytical treatment is, in general, not possible, and numerical approaches are required. However, some preliminary insights may be gained by considering the steady-state form of Eq.~\eqref{eq:bycicle}:
\begin{subequations}
\begin{align}
r^\star  +\dfrac{1}{m v_x}\Bigl(F_{y1}\bigl(v_1(\bm{x}^\star,\delta_1^\star)\bigr) + F_{y2}\bigl(v_2(\bm{x}^\star, \delta_2^\star)\bigr)\Bigr) & = 0, \\
l_1F_{y1}\bigl(v_1(\bm{x}^\star,\delta_1^\star)\bigr) - l_2F_{y2}\bigl(v_2(\bm{x}^\star, \delta_2^\star)\bigr) & = 0, \\
\end{align}
\end{subequations}
where the stationary rigid relative velocities read
\begin{subequations}
\begin{align}
v_1(\bm{x}^\star,\delta_1^\star) & = v_y^\star + l_1r^\star-v_x\delta_1^\star, \\
v_2(\bm{x}^\star, \delta_2^\star) & = v_y^\star - l_2r^\star-\chi_3v_x\delta_2^\star.
\end{align}
\end{subequations}
and the steady-state forces $F_{yi}(v_i(\bm{x}^\star,\delta_i^\star))$, $i \in \{1,2\}$, are functions of the relative rigid velocities and bristle deformations via the integral relationships in Eq.~\eqref{eq:Forces1}. The stationary solution to Eqs.~\eqref{eq:FrBD} and~\eqref{eq:ooCarcass} reads, in turn,
\begin{align}\label{eq:CD22}
z_i^\star (\xi) = 2\sgn_\varepsilon\bigl(v_i(\bm{x}^\star,\delta_i^\star) \bigr)\dfrac{\mu_i\bigl(v_i(\bm{x}^\star,\delta_i^\star) \bigr)}{\sigma_{0,i}}\Biggl[1-\exp\biggl( -\dfrac{L_i\sigma_{0,i}\abs{v_i(\bm{x}^\star,\delta_i^\star) }_\varepsilon}{v_x g_i\bigl(v_i(\bm{x}^\star,\delta_i^\star);\chi_1)}\xi\biggr)\Biggr], \quad \xi \in [0,1],
\end{align}
for $i \in \{1,2\}$, with $\chi_1 = 0$ if the formulation with flexible tyre carcass is employed. Importantly, inspection of the above Eq.~\eqref{eq:CD22} reveals that $z_i^\star(\xi) = 0$ if and only if $v_i(\bm{x}^\star,\delta_i^\star) = 0$, $i \in \{1,2\}$, which, in conjunction with $r^\star = 0$, implies
\begin{subequations}
\begin{align}
 v_y^\star -v_x\delta_1^\star = 0, \\
v_y^\star -\chi_3v_x\delta_2^\star = 0,
\end{align}
\end{subequations}
and thus $\delta_1^\star = \chi_3\delta_2^\star = v_y^\star/v_x$. This means that there are infinite equilibria of the type $(\bm{x}^\star, \bm{z}^\star(\xi), \bm{\delta}^\star) = (v_y^\star, 0, \bm{0},\bm{\delta}^\star)$, but the condition $\bm{v}(\bm{x}^\star,\bm{\delta}) = \bm{0}$ necessarily requires $\bm{z}^\star(\xi) = \bm{0}$. This observation plays a vital role in the linearisation of Eq.~\eqref{eq:originalSystems}, which is addressed next in Sect.~\ref{sect:linearisation}. 

\subsubsection{Linearisation}\label{sect:linearisation}
With the equilibria $(\bm{x}^\star, \bm{z}^\star, \bm{\delta}^\star) \in \mathcal{Y}\times \mathbb{R}^2$ of the ODE-PDE system~\eqref{eq:originalSystems} determined as in Sect.~\ref{sect:equi}, it is possible to proceed with its linearisation. However, since the functions $\mathbf{\Sigma}(\cdot)$, $\bm{h}_1(\cdot)$, and $\bm{h}_2(\cdot)$ have not been assumed to be differentiable everywhere, it is first necessary to formulate certain hypotheses concerning their behaviour. In particular, in the remainder of this section, it is assumed that $\mathbf{\Sigma} \in C^1(\mathbb{R}^2\setminus \{\mathbf{0}\}; \mathbf{M}_2(\mathbb{R}))$, and $\bm{h}_1,\bm{h}_2 \in C^1(\mathbb{R}^2\setminus \{\mathbf{0}\}; \mathbb{R}^2)$, and additionally that $\lim_{\bm{v}\to \bm{0}}\nabla_{\bm{v}}\bm{h}_1(\bm{v})$ and $\lim_{\bm{v}\to \bm{0}} \nabla_{\bm{v}}\bm{h}_2(\bm{v})$ exist and are finite. Such conditions are always verified in practice. Owing to these premises, the ODE-PDE~\eqref{eq:originalSystems} interconnection may be linearised around an equilibrium $(\bm{x}^\star, \bm{z}^\star, \bm{\delta}^\star) \in \mathcal{Y}\times\mathbb{R}^2$. Introducing the perturbation variables $\mathbb{R}^2 \ni \tilde{\bm{x}}(t) \triangleq \bm{x}(t)-\bm{x}^\star$, $\mathbb{R}^2 \ni \tilde{\bm{z}}(\xi,t) \triangleq \bm{z}(\xi,t)-\bm{z}^\star(\xi)$, and $\mathbb{R}^2 \ni \tilde{\bm{\delta}}(t) \triangleq \bm{\delta}(t)-\bm{\delta}^\star$, the following linearised ODE-PDE system is obtained:
\begin{subequations}\label{eq:originalSystemsLin}
\begin{align}
\begin{split}
& \dot{\tilde{\bm{x}}}(t) = \tilde{\mathbf{A}}_1\bigl(\bm{x}^\star,\bm{\delta}^\star,\bm{z}^\star\bigr)\tilde{\bm{x}}(t) + \mathbf{G}_1\Bigl[(\mathscr{K}_1\tilde{\bm{z}})(t) + \mathbf{\Sigma}\bigl(\bm{v}(\bm{x}^\star,\bm{\delta}^\star)\bigr)(\mathscr{K}_2\tilde{\bm{z}})(t)+\tilde{\mathbf{B}}_1\bigl(\bm{x}^\star,\bm{\delta}^\star,\bm{z}^\star\bigr)\tilde{\bm{\delta}}(t)\Bigr], \quad t \in (0,T), \end{split}\\
\begin{split}
& \dpd{\tilde{\bm{z}}(\xi,t)}{t} + \mathbf{\Lambda}\dpd{\tilde{\bm{z}}(\xi,t)}{\xi} = \mathbf{\Sigma}\bigl(\bm{v}(\bm{x}^\star,\bm{\delta}^\star)\bigr)\bigl[\tilde{\bm{z}}(\xi,t) + (\mathscr{K}_3\tilde{\bm{z}})(t)\bigr]+ (\mathscr{K}_4\tilde{\bm{z}})(t) \\
& \qquad \qquad \qquad \qquad \qquad \quad  +\tilde{\mathbf{A}}_2\bigl(\bm{x}^\star,\bm{\delta}^\star,\bm{z}^\star(\xi)\bigr)\tilde{\bm{x}}(t) + \tilde{\mathbf{B}}_2 \bigl(\bm{x}^\star,\bm{\delta}^\star,\bm{z}^\star(\xi)\bigr) \tilde{\bm{\delta}}(t), \quad (\xi,t) \in (0,1)\times(0,T), 
\end{split}\\
& \tilde{\bm{z}}(0,t) = \bm{0}, \quad t \in (0,T), \label{eq:BCoriginalLin}
\end{align}
\end{subequations}
where
\begin{subequations}\label{eq:matricesTilde}
\begin{align}
\tilde{\mathbf{A}}_1\bigl(\bm{x}^\star,\bm{\delta}^\star,\bm{z}^\star\bigr) & \triangleq \mathbf{A}_1 + \mathbf{G}_1 \mathbf{H}_1\bigl(\bm{x}^\star,\bm{\delta}^\star,\bm{z}^\star\bigr)\mathbf{A}_2, \\
\tilde{\mathbf{A}}_2\bigl(\bm{x}^\star,\bm{\delta}^\star,\bm{z}^\star(\xi)\bigr)  & \triangleq \mathbf{H}_2\bigl(\bm{x}^\star,\bm{\delta}^\star,\bm{z}^\star(\xi)\bigr)\mathbf{A}_2, \\
\tilde{\mathbf{B}}_1\bigl(\bm{x}^\star,\bm{\delta}^\star,\bm{z}^\star\bigr) & \triangleq \mathbf{H}_1\bigl(\bm{x}^\star,\bm{\delta}^\star,\bm{z}^\star\bigr)\mathbf{G}_2, \\
\tilde{\mathbf{B}}_2 \bigl(\bm{x}^\star,\bm{\delta}^\star,\bm{z}^\star(\xi)\bigr) & \triangleq \mathbf{H}_2\bigl(\bm{x}^\star,\bm{\delta}^\star,\bm{z}^\star(\xi)\bigr)\mathbf{G}_2,
\end{align}
\end{subequations}
with
\begin{subequations}\label{eq:matricesH}
\begin{align}
 \mathbf{H}_1\bigl(\bm{x}^\star,\bm{\delta}^\star,\bm{z}^\star\bigr) & \triangleq \eval{\nabla_{\bm{v}}\mathbf{\Sigma}(\bm{v})^{\mathrm{T}}}_{\bm{v} = \bm{v}(\bm{x}^\star, \bm{\delta}^\star)}\diag\{(\mathscr{K}_2\bm{z}^\star)\} + \eval{\nabla_{\bm{v}}\bm{h}_1(\bm{v})^{\mathrm{T}}}_{\bm{v} = \bm{v}(\bm{x}^\star, \bm{\delta}^\star)}, \\
\mathbf{H}_2\bigl(\bm{x}^\star,\bm{\delta}^\star,\bm{z}^\star(\xi)\bigr) & \triangleq \eval{\nabla_{\bm{v}}\mathbf{\Sigma}(\bm{v})^{\mathrm{T}}}_{\bm{v} = \bm{v}(\bm{x}^\star, \bm{\delta}^\star)}\diag\{\bm{z}^\star(\xi)\} + \eval{\nabla_{\bm{v}}\bm{h}_2(\bm{v})^{\mathrm{T}}}_{\bm{v} = \bm{v}(\bm{x}^\star, \bm{\delta}^\star)}.
\end{align}
\end{subequations}
It is essential to remark that, based on the considerations drawn in Sect.~\ref{sect:equi} relatively to the structure of the equilibria $(\bm{x}^\star, \bm{z}^\star(\xi), \bm{\delta}^\star) = (v_y^\star, 0, \bm{0}, \bm{\delta}^\star)$, and the assumptions postulated on the functions $\bm{h}_1(\cdot)$ and $\bm{h}_2(\cdot)$, the quantities figuring in Eq.~\eqref{eq:matricesH} are well-defined. Indeed, since $\bm{v} = \bm{0}$ identically implies $\bm{z}^\star(\xi) = \bm{0}$, the first terms in Eq.~\eqref{eq:matricesH} must vanish at $\bm{v} = \bm{0}$ despite the possible non-differentiability of $\mathbf{\Sigma}(\cdot)$, whereas the second ones exist and are bounded. Therefore, the linearised single-track model described by Eqs.~\eqref{eq:originalSystemsLin},~\eqref{eq:matricesTilde}, and~\eqref{eq:matricesH} can adequately describe the lateral motion of the vehicle also in the limiting case of zero sideslip angles, or equivalently, rigid relative velocities.

For completeness, the global well-posedness of the linearised ODE-PDE system~\eqref{eq:originalSystemsLin} is asserted by Theorem~\ref{thm:Ulin} below.
\begin{theorem}[Global existence and uniqueness of solutions]\label{thm:Ulin}
The ODE-PDE system~\eqref{eq:originalSystemsLin} admits a unique mild solution $(\tilde{\bm{x}},\tilde{\bm{z}}) \in C^0([0,T];\mathcal{X})$ for all ICs $(\tilde{\bm{x}}_0,\tilde{\bm{z}}_0) \triangleq (\tilde{\bm{x}}(0), \tilde{\bm{z}}(\cdot,0)) \in \mathcal{X}$ and inputs $\tilde{\bm{\delta}} \in L^p((0,T);\mathbb{R}^2)$, $p\geq 1$. If, in addition, $(\tilde{\bm{x}}_0,\tilde{\bm{z}}_0) \in \mathcal{Y}$ satisfies the BC~\eqref{eq:BCoriginalLin}, and $\tilde{\bm{\delta}} \in C^1([0,T];\mathbb{R}^2)$, the solution is classical, that is, $(\tilde{\bm{x}}, \tilde{\bm{z}}) \in C^1([0,T];\mathcal{X})\cap C^0([0,T];\mathcal{Y})$ and satisfies the BC~\eqref{eq:BCoriginalLin}.
\begin{proof}
See Appendix~\ref{app:ProofLin}.
\end{proof}
\end{theorem}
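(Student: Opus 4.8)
The plan is to cast the linearised ODE-PDE system~\eqref{eq:originalSystemsLin} as an abstract Cauchy problem on the Hilbert space $\mathcal{X} = \mathbb{R}^2 \times L^2((0,1);\mathbb{R}^2)$ and then invoke standard semigroup theory for inhomogeneous linear evolution equations. The key observation is that, unlike the semilinear case~\eqref{eq:originalSystems}, the right-hand side here is \emph{affine} in the state $(\tilde{\bm{x}},\tilde{\bm{z}})$, because the matrices $\mathbf{\Sigma}(\bm{v}(\bm{x}^\star,\bm{\delta}^\star))$, $\tilde{\mathbf{A}}_1$, $\tilde{\mathbf{A}}_2$, $\tilde{\mathbf{B}}_1$, $\tilde{\mathbf{B}}_2$ are all \emph{constant} (evaluated at the fixed equilibrium). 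First I would define the generator candidate $\mathscr{A}_{\mathrm{lin}}$ on $\mathcal{X}$ by collecting the transport term $-\mathbf{\Lambda}\,\partial_\xi$, the bounded zeroth-order multiplication and integral-operator terms (coming from $\mathbf{\Sigma}$, $\mathscr{K}_1,\dots,\mathscr{K}_4$, $\tilde{\mathbf{A}}_1$, $\tilde{\mathbf{A}}_2$), with domain $\mathscr{D}(\mathscr{A}_{\mathrm{lin}}) = \{(\tilde{\bm{x}},\tilde{\bm{z}}) \in \mathcal{Y} : \tilde{\bm{z}}(0) = \bm{0}\}$, and the input operator $\mathscr{B}_{\mathrm{lin}}$ gathering $\mathbf{G}_1\tilde{\mathbf{B}}_1$ and $\tilde{\mathbf{B}}_2$; the system then reads $\dot{\bm{\omega}}(t) = \mathscr{A}_{\mathrm{lin}}\bm{\omega}(t) + \mathscr{B}_{\mathrm{lin}}\tilde{\bm{\delta}}(t)$ with $\bm{\omega} = (\tilde{\bm{x}},\tilde{\bm{z}})$.

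The second step is to show that $\mathscr{A}_{\mathrm{lin}}$ generates a $C_0$-semigroup on $\mathcal{X}$. I would do this by splitting $\mathscr{A}_{\mathrm{lin}} = \mathscr{A}_0 + \mathscr{P}$, where $\mathscr{A}_0$ is the decoupled part $\bigl(\bm{x},\bm{\zeta}(\cdot)\bigr) \mapsto \bigl(\mathbf{A}_1\bm{x},\, -\mathbf{\Lambda}\,\partial_\xi\bm{\zeta}(\cdot)\bigr)$ on the same domain, and $\mathscr{P} \in \mathscr{B}(\mathcal{X})$ collects \emph{all} the remaining terms. The operator $\mathscr{A}_0$ is the direct sum of a bounded operator on $\mathbb{R}^2$ and the generator of the (nilpotent) transport semigroup with zero inflow boundary condition on $L^2((0,1);\mathbb{R}^2)$ — this is classical and manifestly in $\mathscr{G}(\mathcal{X};1,\omega)$ for suitable $\omega$. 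Then $\mathscr{P}$ is bounded: the multiplication term $\mathbf{\Sigma}(\bm{v}^\star)\tilde{\bm{z}}$ is bounded on $L^2$, and the operators $\mathscr{K}_1,\dots,\mathscr{K}_4$, while nominally defined on $H^1$ for $\mathscr{K}_1,\mathscr{K}_4$, land in $\mathbb{R}^2$ and feed back through constant matrices, so one has to check that the $H^1$-trace/integral combination actually extends to a bounded map into $\mathcal{X}$ — here the point is that $\mathbf{K}_1,\mathbf{K}_5$ have $C^0$ entries and the trace at $\xi=1$ is combined with integration against smooth kernels, and the \emph{output} is an element of $\mathbb{R}^2 \subset \mathcal{X}$ (or a constant function in $L^2$), which is finite-dimensional, so no loss of regularity occurs in $\mathcal{X}$. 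By the bounded perturbation theorem for $C_0$-semigroup generators, $\mathscr{A}_{\mathrm{lin}} = \mathscr{A}_0 + \mathscr{P}$ generates a $C_0$-semigroup $T_{\mathrm{lin}}(t)$ on $\mathcal{X}$.

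The third step handles the inhomogeneous term. Since $\mathscr{B}_{\mathrm{lin}} \in \mathscr{B}(\mathbb{R}^2;\mathcal{X})$ (it is just multiplication by a constant matrix into the finite-dimensional factor, plus a constant function into $L^2$), for any $\tilde{\bm{\delta}} \in L^p((0,T);\mathbb{R}^2)$ with $p \geq 1$ the forcing $t \mapsto \mathscr{B}_{\mathrm{lin}}\tilde{\bm{\delta}}(t)$ is in $L^p((0,T);\mathcal{X}) \subset L^1((0,T);\mathcal{X})$, so the variation-of-constants (mild) solution $\bm{\omega}(t) = T_{\mathrm{lin}}(t)\bm{\omega}_0 + \int_0^t T_{\mathrm{lin}}(t-s)\mathscr{B}_{\mathrm{lin}}\tilde{\bm{\delta}}(s)\,\dif s$ is well-defined, unique, and continuous in $\mathcal{X}$, giving the first assertion. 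For the classical-solution claim, I would assume $\tilde{\bm{\delta}} \in C^1([0,T];\mathbb{R}^2)$ and $\bm{\omega}_0 \in \mathscr{D}(\mathscr{A}_{\mathrm{lin}}) = \{(\tilde{\bm{x}}_0,\tilde{\bm{z}}_0) \in \mathcal{Y} : \tilde{\bm{z}}_0(0)=\bm{0}\}$ and apply the standard regularity theorem (e.g.\ Pazy, Thm.~4.2.4, or the analogue in \cite{Kato,Tanabe}): when the forcing is $C^1$ in time and the initial datum lies in the domain, the mild solution is classical, i.e.\ $\bm{\omega} \in C^1([0,T];\mathcal{X}) \cap C^0([0,T];\mathscr{D}(\mathscr{A}_{\mathrm{lin}}))$, and the identification $\mathscr{D}(\mathscr{A}_{\mathrm{lin}}) \hookrightarrow \mathcal{Y}$ with the boundary condition gives exactly $(\tilde{\bm{x}},\tilde{\bm{z}}) \in C^1([0,T];\mathcal{X}) \cap C^0([0,T];\mathcal{Y})$ satisfying~\eqref{eq:BCoriginalLin}.

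The main obstacle I anticipate is the bookkeeping in step two: verifying carefully that $\mathscr{P}$ is genuinely bounded \emph{on $\mathcal{X}$} despite $\mathscr{K}_1,\mathscr{K}_4$ being \emph{a priori} only $H^1$-bounded. The resolution is that these operators are composed with constant matrices producing outputs in $\mathbb{R}^2$ or constant $L^2$-functions — finite rank — so the troublesome $\xi=1$ trace, though unbounded on $L^2$ in general, only ever appears multiplied by a matrix and re-injected; one must argue that in the coupled operator $\mathscr{A}_{\mathrm{lin}}$ these trace terms are absorbed into the domain requirement $\tilde{\bm{z}} \in H^1$ rather than treated as a bounded perturbation of $L^2$. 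Concretely, I would keep the $\mathbf{K}_2\bm{\zeta}(1)$ and $\mathbf{K}_6\bm{\zeta}(1)$ contributions \emph{inside} $\mathscr{A}_0$'s perturbation only on the domain $\mathcal{Y}$, invoking instead the theory of boundary-type perturbations or simply noting that on $\mathscr{D}(\mathscr{A}_{\mathrm{lin}}) \subset \mathcal{Y}$ the trace is controlled by the $H^1$-norm via the Sobolev embedding $H^1((0,1)) \hookrightarrow C^0([0,1])$, so that $\mathscr{A}_{\mathrm{lin}}$ is a relatively bounded (indeed $\mathscr{A}_0$-bounded with relative bound zero) perturbation of $\mathscr{A}_0$, which still yields a $C_0$-semigroup. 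Everything else — the affine structure, the bounded input operator, the $L^p$-forcing, and the classical-regularity upgrade — is routine once this generator step is secured.
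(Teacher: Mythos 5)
Your overall architecture — abstract Cauchy problem on $\mathcal{X}$, generator plus bounded input operator, variation of constants for $L^p$ forcing, and the standard Pazy-type regularity upgrade for the classical solution — matches the paper's proof, which writes $\tilde{\mathscr{A}} = \mathscr{A} + \mathscr{A}_2$ with $\mathscr{A}_2 \in \mathscr{B}(\mathcal{X})$ and reuses the generation property of $\mathscr{A}$ already secured for Theorem~\ref{thm:mild}. The third step (bounded $\mathscr{B}_{\mathrm{lin}}$, $L^p \subset L^1$ forcing, Corollary~4.2.5-type regularity) is correct and essentially identical to the paper's.

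However, there is a genuine gap in your generator step, and it sits exactly where the paper invests its technical effort. You place the transport operator $-\mathbf{\Lambda}\,\partial_\xi$ alone in $\mathscr{A}_0$ and push the trace terms $\mathbf{K}_2\bm{\zeta}(1)$, $\mathbf{K}_6\bm{\zeta}(1)$ into the perturbation, then propose to rescue this by observing that the trace is $\mathscr{A}_0$-bounded with relative bound zero, ``which still yields a $C_0$-semigroup.'' That last implication is false as a general principle: for generators of merely strongly continuous (non-analytic) semigroups, a relatively bounded perturbation with arbitrarily small relative bound need not generate; one needs either a bounded perturbation, a Miyadera--Voigt/Desch--Schappacher admissibility argument, or a dissipativity-preserving structure. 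The paper avoids the issue entirely by defining $\mathscr{A}_0$ in~\eqref{operator:A_0} to \emph{include} the trace terms, and then proving generation via Lumer--Phillips: Proposition~\ref{prop:Closed} establishes closedness by explicitly inverting $\mathscr{A}_0 - \lambda I_{\mathcal{X}}$, and Proposition~\ref{prop:quasiDiss} establishes quasi-dissipativity of $\mathscr{A}_0$ \emph{and its adjoint}, where the good-sign boundary term $-\tfrac{1}{2}\bm{\zeta}^{\mathrm{T}}(1)\mathbf{\Lambda}\bm{\zeta}(1)$ produced by integrating the transport term by parts absorbs the trace contributions through Young's inequality with a suitably chosen weight $\varepsilon = 2/\lambda\ped{min}(\mathbf{\Lambda})$. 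You do gesture at this correct route (``keep the $\mathbf{K}_2\bm{\zeta}(1)$ and $\mathbf{K}_6\bm{\zeta}(1)$ contributions inside $\mathscr{A}_0$''), but you do not carry it out, and the fallback you actually commit to does not close the argument. Only the genuinely bounded integral operators (the $\mathbf{K}_1$, $\mathbf{K}_5$ terms and the multiplication by $\mathbf{\Sigma}(\bm{v}^\star)$) can be treated as bounded perturbations on $\mathcal{X}$.
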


\subsection{Stability and transfer function}\label{sect.Stabiliy}
The present section is devoted to investigating the stability of the linearised ODE-PDE system~\eqref{eq:originalSystemsLin}. Specifically, the formal conditions to deduce (exponential) stability are stated in Sect.~\ref{app:Spectral}, where an exhaustive spectral analysis is conducted; Sect.~\ref{app:output} derives instead an explicit expression for the transfer function of the system, where the output is defined in terms of lumped states.
The theoretical findings of both Sects.~\ref{app:Spectral} and~\ref{app:output} are accompanied by illustrative numerical experiments. In the following, the explicit dependence on the variables $(\bm{x}^\star, \bm{z}^\star(\xi),\bm{\delta}^\star)$ appearing in Eqs.~\eqref{eq:originalSystemsLin},~\eqref{eq:matricesTilde}, and~\eqref{eq:matricesH} will be suppressed to alleviate the notation.

\subsubsection{Spectral analysis}\label{app:Spectral}
The stability of the mild solution $(\tilde{\bm{x}},\tilde{\bm{z}}) \in C^0([0,T];\mathcal{X})$ of the linearised ODE-PDE system~\eqref{eq:originalSystemsLin} may be investigated starting with an analytical expression for the resolvent $\mathscr{R}(\lambda, \tilde{\mathscr{A}}) \triangleq (\lambda I_{\mathcal{X}}-\tilde{\mathscr{A}})^{-1}$ of the unbounded operator $(\tilde{\mathscr{A}},\mathscr{D}(\tilde{\mathscr{A}}))$ defined in Appendix~\ref{app:ProofLin}. Accordingly,
\begin{align}\label{eq:Atilde-}
\bigl((\tilde{\mathscr{A}}-\lambda I_{\mathcal{X}})(\bm{y}_1,\bm{\zeta}_1)\bigr)(\xi) = \begin{bmatrix}\bigl(\tilde{\mathbf{A}}_1-\lambda\mathbf{I}_2\bigr)\bm{y}_1 + \mathbf{G}_1(\tilde{\mathscr{K}}_1\bm{\zeta})  \\ -\mathbf{\Lambda}\dpd{\bm{\zeta}_1(\xi)}{\xi} + (\mathbf{\Sigma}-\lambda\mathbf{I}_2)\bm{\zeta}_1(\xi) + (\tilde{\mathscr{K}}_2\bm{\zeta})+\tilde{\mathbf{A}}_2(\xi)\bm{y}_1 \end{bmatrix} = \begin{bmatrix} \bm{y}_2 \\ \bm{\zeta}_2(\xi)\end{bmatrix},
\end{align}
where, for convenience of notation,
\begin{subequations}
\begin{align}
(\tilde{\mathscr{K}}_1\bm{\zeta}) & \triangleq (\mathscr{K}_1\bm{\zeta}) + \mathbf{\Sigma}(\mathscr{K}_2\bm{\zeta}), \\
(\tilde{\mathscr{K}}_2\bm{\zeta}) &\triangleq \mathbf{\Sigma}(\mathscr{K}_3\bm{\zeta}) + (\mathscr{K}_4\bm{\zeta}). 
\end{align}
\end{subequations}
Solving for the second component in Eq.~\eqref{eq:Atilde-} yields
\begin{align}\label{eq:SolZ111111}
\bm{\zeta}_1(\xi) & = \mathbf{\Gamma}(\xi,\lambda)\mathbf{\Lambda}^{-1}(\tilde{\mathscr{K}}_2\bm{\zeta}_1) + \mathbf{\Xi}(\xi,\lambda)\bm{y}_1+(\tilde{\mathscr{K}}_0\bm{\zeta}_2)(\xi,\lambda), \quad \xi \in [0,1],
\end{align}
where
\begin{subequations}
\begin{align}
\mathbf{\Gamma}(\xi,\lambda) & \triangleq \int_0^\xi\mathbf{\Phi}\bigl(\xi,\xi^\prime,\lambda\bigr) \dif \xi^\prime, \\
\mathbf{\Xi}(\xi,\lambda) & \triangleq \int_0^\xi\mathbf{\Phi}\bigl(\xi,\xi^\prime,\lambda\bigr)\mathbf{\Lambda}^{-1}\tilde{\mathbf{A}}_2(\xi^\prime) \dif \xi^\prime, \\
\mathbf{\Phi}(\xi,\tilde{\xi},\lambda) & \triangleq \eu^{\mathbf{\Lambda}^{-1}(\mathbf{\Sigma}-\lambda\mathbf{I}_2)(\xi-\tilde{\xi})},  
\end{align}
\end{subequations}
and
\begin{align}
(\tilde{\mathscr{K}}_0\bm{\zeta})(\xi,\lambda) & \triangleq-\int_0^\xi \mathbf{\Phi}\bigl(\xi,\xi^\prime,\lambda\bigr)\mathbf{\Lambda}^{-1}\bm{\zeta}(\xi^\prime) \dif \xi^\prime.
\end{align}
Consequently,
\begin{subequations}\label{eq:K1K2zetattt}
\begin{align}
(\tilde{\mathscr{K}}_1\bm{\zeta}_1) & = \mathbf{\Theta}_1(\lambda)(\tilde{\mathscr{K}}_2\bm{\zeta}_1) + \mathbf{\Psi}_1(\lambda)\bm{y}_1 + \bigl(\tilde{\mathscr{K}}_1(\tilde{\mathscr{K}}_0\bm{\zeta}_2)\bigr)(\lambda), \\
(\tilde{\mathscr{K}}_2\bm{\zeta}_1) & = \mathbf{\Theta}_2(\lambda)(\tilde{\mathscr{K}}_2\bm{\zeta}_1) + \mathbf{\Psi}_2(\lambda)\bm{y}_1 + \bigl(\tilde{\mathscr{K}}_2(\tilde{\mathscr{K}}_0\bm{\zeta}_2)\bigr)(\lambda), 
\end{align}
\end{subequations}
with
\begin{subequations}
\begin{align}
\mathbf{\Theta}_1(\lambda) & \triangleq (\tilde{\mathscr{K}}_1\mathbf{\Gamma})(\lambda)\mathbf{\Lambda}^{-1}, \\
\mathbf{\Theta}_2(\lambda) & \triangleq (\tilde{\mathscr{K}}_2\mathbf{\Gamma})(\lambda)\mathbf{\Lambda}^{-1}, \\
\mathbf{\Psi}_1(\lambda) & \triangleq(\tilde{\mathscr{K}}_1\mathbf{\Xi})(\lambda), \\
\mathbf{\Psi}_2(\lambda) & \triangleq(\tilde{\mathscr{K}}_2\mathbf{\Xi})(\lambda).
\end{align}
\end{subequations}
Combining the first component of Eq.~\eqref{eq:Atilde-} with the above~\eqref{eq:K1K2zetattt} provides
\begin{align}\label{eq:InvMMMMMMMMM}
\begin{bmatrix}\bm{y}_1 \\ (\tilde{\mathscr{K}}_1\bm{\zeta}_1) \\(\tilde{\mathscr{K}}_2\bm{\zeta}_1) \end{bmatrix} = \tilde{\mathbf{A}}^{-1}(\lambda)\begin{bmatrix} \bm{y}_2 \\ \bigl(\tilde{\mathscr{K}}_1(\tilde{\mathscr{K}}_0\bm{\zeta}_2)\bigr)(\lambda) \\  \bigl(\tilde{\mathscr{K}}_2(\tilde{\mathscr{K}}_0\bm{\zeta}_2)\bigr)(\lambda)\end{bmatrix},
\end{align}
with
\begin{align}\label{eq:matrixM}
\tilde{\mathbf{A}}(\lambda) & \triangleq \begin{bmatrix} \tilde{\mathbf{A}}_1-\lambda\mathbf{I}_2 & \mathbf{G}_1 & \mathbf{0} \\ -\mathbf{\Psi}_1(\lambda) & \mathbf{I}_2 & -\mathbf{\Theta}_1(\lambda) \\ -\mathbf{\Psi}_2(\lambda) & \mathbf{0} & \mathbf{I}_2-\mathbf{\Theta}_2(\lambda)\end{bmatrix}.
\end{align}
Recalling Eq.~\eqref{eq:SolZ111111}, it may be finally concluded that
\begin{align}
\begin{split}
\begin{bmatrix} \bm{y}_1 \\ \bm{\zeta}_1(\xi)\end{bmatrix} & = \bigl((\tilde{\mathscr{A}}-\lambda I_{\mathcal{X}})^{-1}(\bm{y}_2,\bm{\zeta}_2)\bigr)(\xi) =-\bigl(\mathscr{R}(\lambda,\tilde{\mathscr{A}})(\bm{y}_2,\bm{\zeta}_2)\bigr)(\xi) \\
& = \begin{bmatrix} \mathbf{I}_2 & \mathbf{0} & \mathbf{0} \\
\mathbf{\Xi}(\xi,\lambda) & \mathbf{0} & \mathbf{\Gamma}(\xi,\lambda)\mathbf{\Lambda}^{-1} \end{bmatrix}\tilde{\mathbf{A}}^{-1}(\lambda)\begin{bmatrix} \bm{y}_2 \\ \bigl(\tilde{\mathscr{K}}_1(\tilde{\mathscr{K}}_0\bm{\zeta}_2)\bigr)(\lambda) \\  \bigl(\tilde{\mathscr{K}}_2(\tilde{\mathscr{K}}_0\bm{\zeta}_2)\bigr)(\lambda)\end{bmatrix} + \begin{bmatrix} \mathbf{0} \\ (\tilde{\mathscr{K}}_0\bm{\zeta}_2)(\xi,\lambda)\end{bmatrix}, \quad \xi \in [0,1].
\end{split}
\end{align}
In particular, it may be realised that the condition $D(\lambda) \triangleq \det \tilde{\mathbf{A}}(\lambda) \not = 0$ for all $\lambda \in \mathbb{C}_{\geq 0}$, with $\tilde{\mathbf{A}}(\lambda) \in \mathbf{M}_6(\mathbb{C})$ as in~\eqref{eq:matrixM}, ensures that $\mathscr{R}(\lambda,\tilde{\mathscr{A}}) \in H^\infty(\mathbb{C}_{>0};\mathscr{L}(\mathcal{X}))$, which yields uniform exponential stability for the ODE-PDE system (Theorem 4.1.5 in \cite{Zwart}). Therefore, the stability of the system may be inferred by studying the roots of the characteristic equation $D(\lambda) = 0$, rather than analysing the entire resolvent. 

The condition on $D(\lambda)$ proves extremely useful when studying numerically the occurrence of micro-shimmy oscillations, which typically manifest at longitudinal speeds below 0.5 $\text{m}\,\text{s}^{-1}$. 
For instance, Fig.~\ref{fig:Charts} illustrates some stability charts constructed for two single-track models with constant pressure distribution and flexible tyre carcass, linearised around the zero equilibrium $(\bm{x}^\star, \bm{z}^\star(\xi), \bm{\delta}^\star) = \bm{0}$. Specifically, Fig.~\ref{fig:Charts} was produced considering continuously varying values of the ratio $\chi \triangleq C_{1}l_1/(C_{2}l_2)$ in the range between $0.5$ and $1.5$, where the cornering stiffnesses of the axle, $C_i$, $i \in \{1,2\}$, may be calculated as
\begin{align}
C_i \triangleq L_iF_{zi}\sigma_{0,i}, \quad i \in \{1,2\}.
\end{align}
The stability charts reported in Fig.~\ref{fig:Charts} are similar to those already obtained in \cite{BicyclePDE}, and display the first three unstable islands\footnote{In \cite{Takacs5}, neglecting the relaxation effect of the tyre carcass, the existence of a monotonically decreasing sequence of velocities for which oscillatory instabilities occur was deduced analytically, concerning neutral steer vehicles and employing a linear distributed bursh model to describe the tyre dynamics.} (in white) determined for two different values of the front and rear relaxation lengths $\lambda_{i}$, $i\in \{1,2\}$, computed according to
\begin{align}
\lambda_i \triangleq \dfrac{L_i(F_{zi}\sigma_{0,i} + w_i)}{2w_i}, \quad i \in \{1,2\}.
\end{align}
The unstable regions depicted in Fig.~\ref{fig:Charts} correspond to combination of parameters associated with oscillatory micro-shimmy dynamics.
As also noted in \cite{BicyclePDE}, lower values of the relaxation lengths seem to excite oscillatory behaviours on a wider range of longitudinal speeds, as it may be concluded by comparing Figs.~\ref{fig:Charts}(a) and~\ref{fig:Charts}(b). Moreover, whilst unstable dynamics seem to manifest in oversteer ($\chi > 1$), neutral steer ($\chi = 1$), and highly understeer ($\chi < 1$) vehicles, mild understeer conditions may possibly ensure stability for every value of the longitudinal speed $v_x$. This observation corroborates the findings of \cite{BicyclePDE}.

\begin{figure}
\centering
\subfloat[$\lambda_{1} = 0.195$ m, $\lambda_{2} = 0.225$ m.]{%
\resizebox*{16cm}{!}{\includegraphics{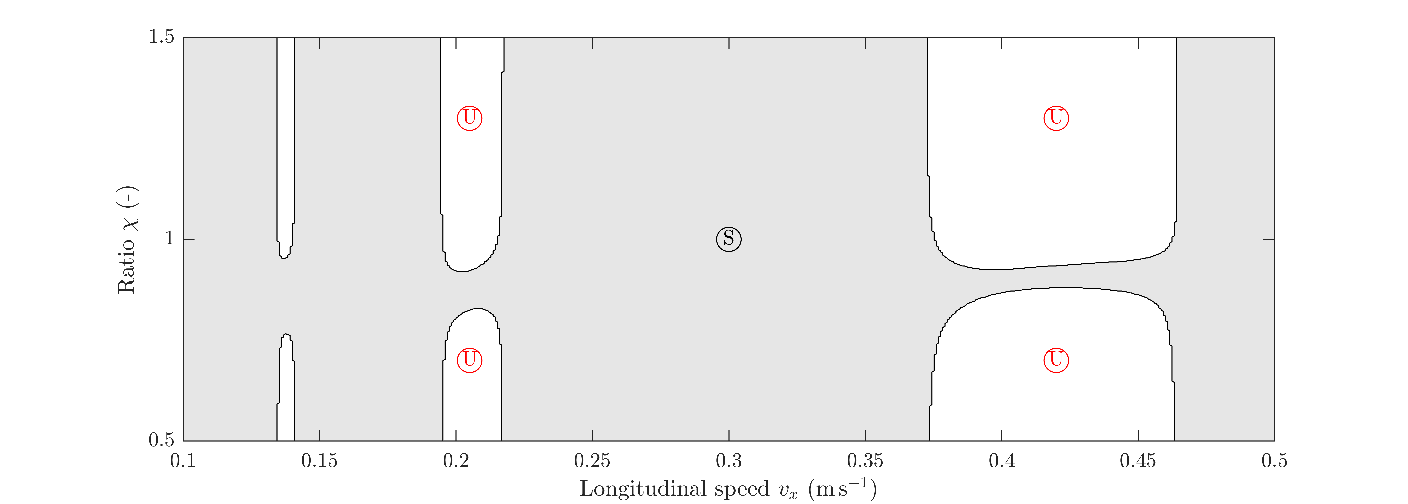}}}\hspace{5pt}
\subfloat[$\lambda_{1}= 0.390$ m, $\lambda_{2} = 0.450$ m.]{%
\resizebox*{16cm}{!}{\includegraphics{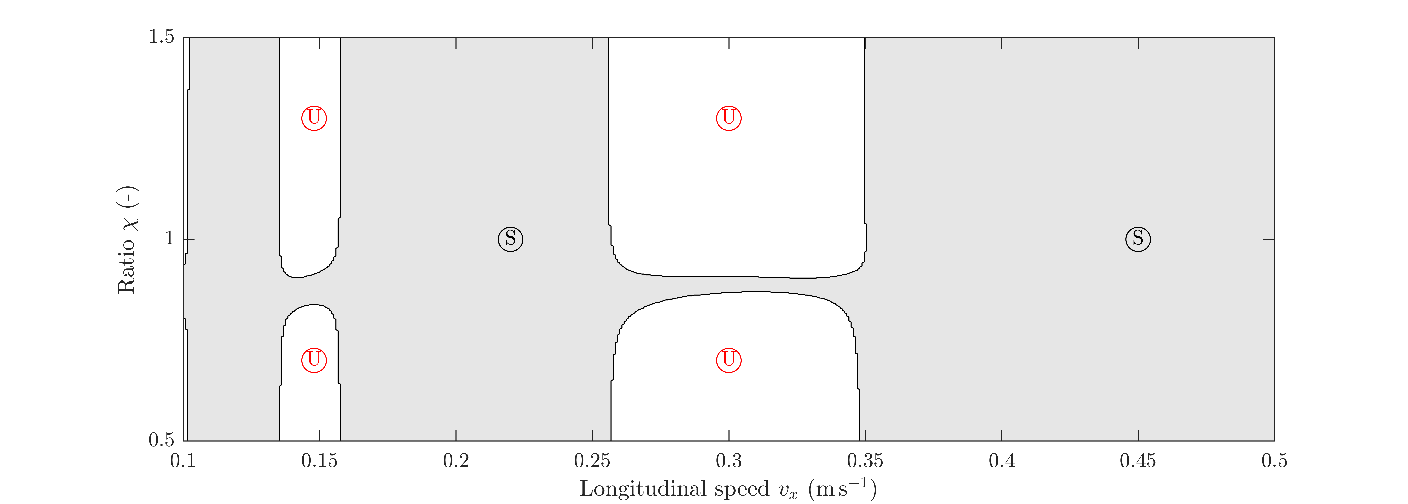}}}\hspace{5pt}
\caption{Stability charts for two single-track models with constant pressure distribution and flexible tyre carcass linearised around the zero equilibrium $(\bm{x}^\star, \bm{z}^\star(\xi), \bm{\delta}^\star) = \bm{0}$, for different values of the understeer index $\chi \triangleq C_{1}l_1/(C_{2}l_2)$ and longitudinal speed $v_x$. The unstable regions (in white) correspond to combinations of parameters for which the characteristic function $D(\lambda) = \det \tilde{\mathbf{A}}(\lambda)$ has two roots with positive real part. Model parameters: $m = 1300$ kg, $I_z = 2000$ $\text{kg}\,\text{m}^2$, $l_1  =1$ m, $l_2 = 1.6$ m, $L_1 = 0.11$ m, $L_2 = 0.9$ m, $\sigma_{0,1} = 163$ $\textnormal{m}^{-1}$, $\sigma_{0,2} = 408$ $\textnormal{m}^{-1}$, $F_{z1} = 3924$ N, $F_{z2} = 2453$ N; (a) $\lambda_{1} = 0.195$ m, $\lambda_{2} = 0.225$ m; (b) $\lambda_{1} = 0.390$ m, $\lambda_{2} = 0.450$ m.} \label{fig:Charts}
\end{figure}
At high longitudinal speeds $v_x$, the behaviour of the linearised single-track models essentially coincides with that predicted by the standard formulations equipped with static descriptions of the tyre dynamics. Indeed, for sufficiently large values of $v_x$, the rolling contact process evolves on time scales that are substantially shorter than those of the rigid chassis motion, and may be fairly neglected. As a consequence, non-oscillatory instabilities are only detected in oversteer vehicles ($\chi > 1$) beyond a critical value of the longitudinal speed, which solely depends on the vehicle's structural parameters. On this matter, further details may be found in \cite{BicyclePDE}.

\subsubsection{System's output and transfer function}\label{app:output}
The analysis conducted in the previous Sect.~\ref{app:Spectral} has offered a complete characterisation of the spectral properties of the operator $(\tilde{\mathscr{A}},\mathscr{D}(\tilde{\mathscr{A}}))$, providing an explicit criterion to deliberate about the stability of the linearised single-track models described by Eqs.~\eqref{eq:originalSystemsLin},~\eqref{eq:matricesTilde}, and~\eqref{eq:matricesH}. The results advocated in Sect.~\ref{app:Spectral} may also be exploited to investigate the frequency response of the ODE-PDE system ~\eqref{eq:originalSystemsLin} for a wide range of operating conditions. In this context, in most applications, in addition to the knowledge of the lumped states $\tilde{\bm{x}}(t)$, it is also of interest to study the behaviour of the perturbed axle forces and normalised lateral acceleration. In particular, the vector collecting the perturbations of the axle forces, $\mathbb{R}^2 \ni \tilde{\bm{F}}_y(t) = [\tilde{F}_{y1}(t) \; \tilde{F}_{y2}(t)]^{\mathrm{T}}$, may be calculated as
\begin{align}\label{eq:Ftilde}
\tilde{\bm{F}}_y(t) & = \mathbf{H}_1\mathbf{A}_2\tilde{\bm{x}}(t) + (\mathscr{K}_1\tilde{\bm{z}})(t) + \mathbf{\Sigma}(\mathscr{K}_2\tilde{\bm{z}})(t)+\tilde{\mathbf{B}}_1\tilde{\bm{\delta}}(t) = \mathbf{H}_1\mathbf{A}_2\tilde{\bm{x}}(t) + (\tilde{\mathscr{K}}_1\tilde{\bm{z}})(t) + \tilde{\mathbf{B}}_1\tilde{\bm{\delta}}(t),
\end{align}
and, consequently, the perturbed normalised acceleration may be inferred to be
\begin{align}
\dfrac{\tilde{a}_y(t)}{g} & = -\dfrac{1}{m g}\bigl(\tilde{F}_{y1}(t) + \tilde{F}_{y2}(t)\bigr).
\end{align}
Therefore, the output vector $\bm{y}(t) \in \mathbb{R}^5$ may be assembled as
\begin{align}\label{eq:outputY}
\bm{y}(t) & \triangleq \mathbf{C}\begin{bmatrix}\tilde{\bm{x}}(t) \\ \tilde{\bm{F}}_y(t) \end{bmatrix},
\end{align}
with
\begin{align}\label{eq:matrixC}
\mathbf{C} & \triangleq \begin{bmatrix} 1 & 0 & 0 & 0\\
0 & 1 & 0 & 0 \\
0 & 0 & 1 & 0 \\
0 & 0 & 0 & 1 \\
0 & 0 & -\dfrac{1}{mg} & -\dfrac{1}{mg}\end{bmatrix}.
\end{align}
The analytical expression for the transfer function $\mathbf{G}_{\widehat{\tilde{\bm{\delta}}}(s) \to \widehat{\bm{y}}(s)}$ from $\widehat{\tilde{\bm{\delta}}}(s) \triangleq (\mathcal{L}\tilde{\bm{\delta}})(s)$ to $\widehat{\bm{y}}(s) \triangleq (\mathcal{L}\bm{y})(s)$ is given in Lemma~\ref{lemma:Transfer}.

\begin{lemma}[Transfer function]\label{lemma:Transfer}
The transfer function $\mathbf{G}_{\widehat{\tilde{\bm{\delta}}}(s) \to \widehat{\bm{y}}(s)}$ from $\widehat{\tilde{\bm{\delta}}}(s) \triangleq (\mathcal{L}\tilde{\bm{\delta}})(s)$ to $\widehat{\bm{y}}(s) \triangleq (\mathcal{L}\bm{y})(s)$ is given by
\begin{align}\label{eq:transferFUn}
\mathbf{G}_{\widehat{\tilde{\bm{\delta}}}(s) \to \widehat{\bm{y}}(s)}(s) = -\mathbf{C}\biggggl(\begin{bmatrix} \mathbf{I}_2 & \mathbf{0} & \mathbf{0} \\
\mathbf{H}_1\mathbf{A}_2 & \mathbf{I}_2 & \mathbf{0}\end{bmatrix}\tilde{\mathbf{A}}^{-1}(s)\begin{bmatrix} \mathbf{G}_1\tilde{\mathbf{B}}_1 \\  \bigl(\tilde{\mathscr{K}}_1(\tilde{\mathscr{K}}_0\tilde{\mathbf{B}}_2)\bigr)(s) \\  \bigl(\tilde{\mathscr{K}}_2(\tilde{\mathscr{K}}_0\tilde{\mathbf{B}}_2)\bigr)(s)\end{bmatrix} -\begin{bmatrix}\mathbf{0} \\ \tilde{\mathbf{B}}_1 \end{bmatrix}\biggggr).
\end{align}
If $D(s) \triangleq \det \tilde{\mathbf{A}}(s) \not = 0$ for all $s \in \mathbb{C}_{\geq 0}$, then the transfer function in Eq.~\eqref{eq:transferFUn} is stable.
\begin{proof}
Combining Eqs.~\eqref{eq:Ftilde} and~\eqref{eq:outputY} yields
\begin{align}\label{eq:YIntermed}
\widehat{\bm{y}}(s) = \mathbf{C}\bigggl(\begin{bmatrix}\mathbf{I}_2 & \mathbf{0}\\ \mathbf{H}_1\mathbf{A}_2 & \mathbf{I}_2\end{bmatrix}\begin{bmatrix} \widehat{\tilde{\bm{x}}}(s) \\ (\tilde{\mathscr{K}}_1\widehat{\tilde{\bm{z}}})(s)\end{bmatrix} + \begin{bmatrix} \mathbf{0} \\ \tilde{\mathbf{B}}_1\end{bmatrix}\widehat{\tilde{\bm{\delta}}}(s)\bigggr), 
\end{align}
where $(\widehat{\tilde{\bm{x}}}, \widehat{\tilde{\bm{z}}})(\cdot,s) \triangleq (\mathcal{L}(\tilde{\bm{x}},\tilde{\bm{z}}))(\cdot,s)$, and $\widehat{\tilde{\bm{\delta}}}(s) \triangleq (\mathcal{L}\tilde{\bm{\delta}})(s)$. From Eq. ~\eqref{eq:InvMMMMMMMMM} with $\bm{y}_1 \triangleq \widehat{\tilde{\bm{x}}}(s)$, $\bm{\zeta}_1(\xi) \triangleq \widehat{\tilde{\bm{z}}}(\xi,s)$, $\bm{y}_2 = -\mathbf{G}_1\tilde{\mathbf{B}}_1\widehat{\tilde{\bm{\delta}}}(s)$, and $\bm{\zeta}_1(\xi) = -\tilde{\mathbf{B}}_2(\xi)\widehat{\tilde{\bm{\delta}}}(s)$ it also follows that
\begin{align}\label{eq:InvMMMMMMMMM2}
\begin{bmatrix} \widehat{\tilde{\bm{x}}}(s) \\ (\tilde{\mathscr{K}}_1\widehat{\tilde{\bm{z}}})(s)\end{bmatrix} =-\begin{bmatrix}\mathbf{I}_2 & \mathbf{0} & \mathbf{0} \\ \mathbf{0} & \mathbf{I}_2 & \mathbf{0} \end{bmatrix} \tilde{\mathbf{A}}^{-1}(s)\begin{bmatrix} \mathbf{G}_1\tilde{\mathbf{B}}_1 \\ \bigl(\tilde{\mathscr{K}}_1(\tilde{\mathscr{K}}_0\tilde{\mathbf{B}}_2)\bigr)(s) \\  \bigl(\tilde{\mathscr{K}}_2(\tilde{\mathscr{K}}_0\tilde{\mathbf{B}}_2)\bigr)(s)\end{bmatrix}\widehat{\tilde{\bm{\delta}}}(s).
\end{align}
Inserting Eq.~\eqref{eq:InvMMMMMMMMM2} into~\eqref{eq:YIntermed} produces~\eqref{eq:transferFUn}. The second claim is a consequence of the findings of Sect.~\ref{app:Spectral}.
\end{proof}
\end{lemma}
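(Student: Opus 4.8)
The plan is to obtain the transfer function by Laplace-transforming the linearised interconnection~\eqref{eq:originalSystemsLin} with zero initial data (the usual convention for a transfer function) and then recognising the transformed system as a particular instance of the resolvent identity~\eqref{eq:Atilde-} already solved in Section~\ref{app:Spectral}. First I would apply $\mathcal{L}$ to~\eqref{eq:originalSystemsLin}, using $\mathcal{L}[\dot{\tilde{\bm{x}}}](s) = s\widehat{\tilde{\bm{x}}}(s)$ and the analogous rule $\mathcal{L}[\partial_t\tilde{\bm{z}}](\xi,s) = s\widehat{\tilde{\bm{z}}}(\xi,s)$ for the distributed component, together with the definitions of $\tilde{\mathscr{K}}_1$ and $\tilde{\mathscr{K}}_2$, to arrive at $(\tilde{\mathscr{A}}-sI_{\mathcal{X}})(\widehat{\tilde{\bm{x}}},\widehat{\tilde{\bm{z}}}) = (\bm{y}_2,\bm{\zeta}_2)$ with $\bm{y}_2 = -\mathbf{G}_1\tilde{\mathbf{B}}_1\widehat{\tilde{\bm{\delta}}}(s)$ and $\bm{\zeta}_2(\xi) = -\tilde{\mathbf{B}}_2(\xi)\widehat{\tilde{\bm{\delta}}}(s)$. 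To make this step legitimate I would restrict $s$ to a half-plane $\RE s > \omega_0$ lying beyond the growth bound of the $C_0$-semigroup generated by $\tilde{\mathscr{A}}$ --- whose existence is guaranteed by Theorem~\ref{thm:Ulin} --- so that the Laplace integrals converge and the transformed state genuinely equals $\mathscr{R}(s,\tilde{\mathscr{A}})$ applied to the transformed forcing; the resulting closed-form expression then extends by analytic continuation to every $s$ at which $\tilde{\mathbf{A}}(s)$ is invertible.

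Second, I would feed this into Eq.~\eqref{eq:InvMMMMMMMMM}, which already expresses the triple $(\bm{y}_1,(\tilde{\mathscr{K}}_1\bm{\zeta}_1),(\tilde{\mathscr{K}}_2\bm{\zeta}_1))$ in terms of $\tilde{\mathbf{A}}^{-1}(\lambda)$ and the data: setting $\bm{y}_1 = \widehat{\tilde{\bm{x}}}(s)$, $\bm{\zeta}_1 = \widehat{\tilde{\bm{z}}}(\cdot,s)$ and using the linearity of $\tilde{\mathscr{K}}_0$, $\tilde{\mathscr{K}}_1$, $\tilde{\mathscr{K}}_2$ (interpreted column-wise on the matrix-valued function $\tilde{\mathbf{B}}_2(\cdot)$) gives exactly Eq.~\eqref{eq:InvMMMMMMMMM2}. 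In parallel, I would Laplace-transform the output relations~\eqref{eq:Ftilde} and~\eqref{eq:outputY}, producing the intermediate identity~\eqref{eq:YIntermed}, which expresses $\widehat{\bm{y}}(s)$ solely through $\widehat{\tilde{\bm{x}}}(s)$, $(\tilde{\mathscr{K}}_1\widehat{\tilde{\bm{z}}})(s)$ and $\widehat{\tilde{\bm{\delta}}}(s)$. Inserting~\eqref{eq:InvMMMMMMMMM2} into~\eqref{eq:YIntermed}, noting that the block-lower-triangular matrix premultiplying the state in~\eqref{eq:YIntermed} composes with the projection in~\eqref{eq:InvMMMMMMMMM2} to produce the $2\times 3$ block matrix displayed in~\eqref{eq:transferFUn}, and factoring out $\widehat{\tilde{\bm{\delta}}}(s)$, reproduces~\eqref{eq:transferFUn}.

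For the stability claim I would argue that the only source of poles of $\mathbf{G}_{\widehat{\tilde{\bm{\delta}}}(s)\mapsto\widehat{\bm{y}}(s)}(s)$ is $\tilde{\mathbf{A}}^{-1}(s)$: the propagator $\mathbf{\Phi}(\xi,\tilde{\xi},\lambda) = \eu^{\mathbf{\Lambda}^{-1}(\mathbf{\Sigma}-\lambda\mathbf{I}_2)(\xi-\tilde{\xi})}$ is entire in $\lambda$ and, since $\mathbf{\Lambda}\succ\mathbf{0}$ and $\xi-\tilde{\xi}\geq 0$, uniformly bounded on $\RE\lambda\geq 0$, whence $\tilde{\mathscr{K}}_0$, $\tilde{\mathscr{K}}_1$, $\tilde{\mathscr{K}}_2$ and the kernels $\mathbf{\Gamma}$, $\mathbf{\Xi}$, $\mathbf{\Theta}_j$, $\mathbf{\Psi}_j$ all depend entirely and boundedly on $s$ throughout the closed right half-plane. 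Hence, if $D(s) = \det\tilde{\mathbf{A}}(s)\neq 0$ for all $s\in\mathbb{C}_{\geq 0}$, the transfer function is holomorphic and bounded on $\mathbb{C}_{>0}$, i.e. $\mathbf{G}_{\widehat{\tilde{\bm{\delta}}}(s)\mapsto\widehat{\bm{y}}(s)}\in H^\infty(\mathbb{C}_{>0};\mathbf{M}_{5\times 2}(\mathbb{C}))$; equivalently, by the resolvent criterion recalled in Section~\ref{app:Spectral} (Theorem 4.1.5 in \cite{Zwart}), $\mathscr{R}(\cdot,\tilde{\mathscr{A}})\in H^\infty(\mathbb{C}_{>0};\mathscr{L}(\mathcal{X}))$, the underlying semigroup is uniformly exponentially stable, and so the transfer function is stable. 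I expect the main obstacle to be not the algebra, which is essentially careful bookkeeping, but the rigorous justification that the Laplace transform of the mild solution coincides with $\mathscr{R}(s,\tilde{\mathscr{A}})$ acting on the transformed input --- this forces one to work first in a half-plane to the right of the semigroup's growth bound and then invoke analytic continuation --- together with verifying that the formal manipulations involving the trace-and-integral operators $\mathscr{K}_1,\dots,\mathscr{K}_4$ are valid on the domain $\mathscr{D}(\tilde{\mathscr{A}})$, where the boundary evaluations at $\xi=1$ are meaningful.
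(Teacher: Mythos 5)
Your proposal follows essentially the same route as the paper's proof: Laplace-transform the linearised system with zero initial data, identify the result with the resolvent identity of Eq.~\eqref{eq:InvMMMMMMMMM} (with $\bm{y}_2=-\mathbf{G}_1\tilde{\mathbf{B}}_1\widehat{\tilde{\bm{\delta}}}(s)$ and $\bm{\zeta}_2=-\tilde{\mathbf{B}}_2\widehat{\tilde{\bm{\delta}}}(s)$), combine with the transformed output relations~\eqref{eq:Ftilde} and~\eqref{eq:outputY}, and deduce stability from the non-vanishing of $D(s)$ on $\mathbb{C}_{\geq 0}$ via the resolvent criterion of Sect.~\ref{app:Spectral}. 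Your added care about working first in a half-plane beyond the semigroup growth bound and then continuing analytically, and your explicit boundedness check on $\mathbf{\Phi}$, are sound refinements of the same argument rather than a different approach.
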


Equation~\eqref{eq:transferFUn} allows investigating the frequency response of the linearised single-track models over a wide range of operating conditions. In this context, the typical frequency response of a single-track model with flexible tyre carcass, linearised around the zero equilibrium $(\bm{x}^\star, \bm{z}^\star(\xi), \bm{\delta}^\star) = \bm{0}$, is illustrated in Fig.~\ref{fig:BodeLuGre}, where the Bode diagrams (in semilog scale) from $\widehat{\tilde{\delta}}_1(s) \triangleq (\mathcal{L}\tilde{\delta}_1)(s)$ to the components of $\widehat{\bm{y}}(s)$ as defined in Eqs.~\eqref{eq:outputY} and~\eqref{eq:matrixC} are collected for three different values of the longitudinal speed $v_x = 20$, 40, and 60 $\text{m}\,\text{s}^{-1}$ (blue, orange, and yellow lines, respectively). In particular, it may be observed that both the amplitudes and the phases of the output quantities depicted in Fig.~\ref{fig:BodeLuGre} decrease nonlinearly and at increasing rates at large frequencies. As extensively discussed in \cite{BicyclePDE}, these trends differ significantly from those obtained using standard linear single-track models with static tyres, which predict the existence of asymptotic values approached at around $\omega \approx 100$ Hz. 
\begin{figure}
\centering
\includegraphics[width=1\linewidth]{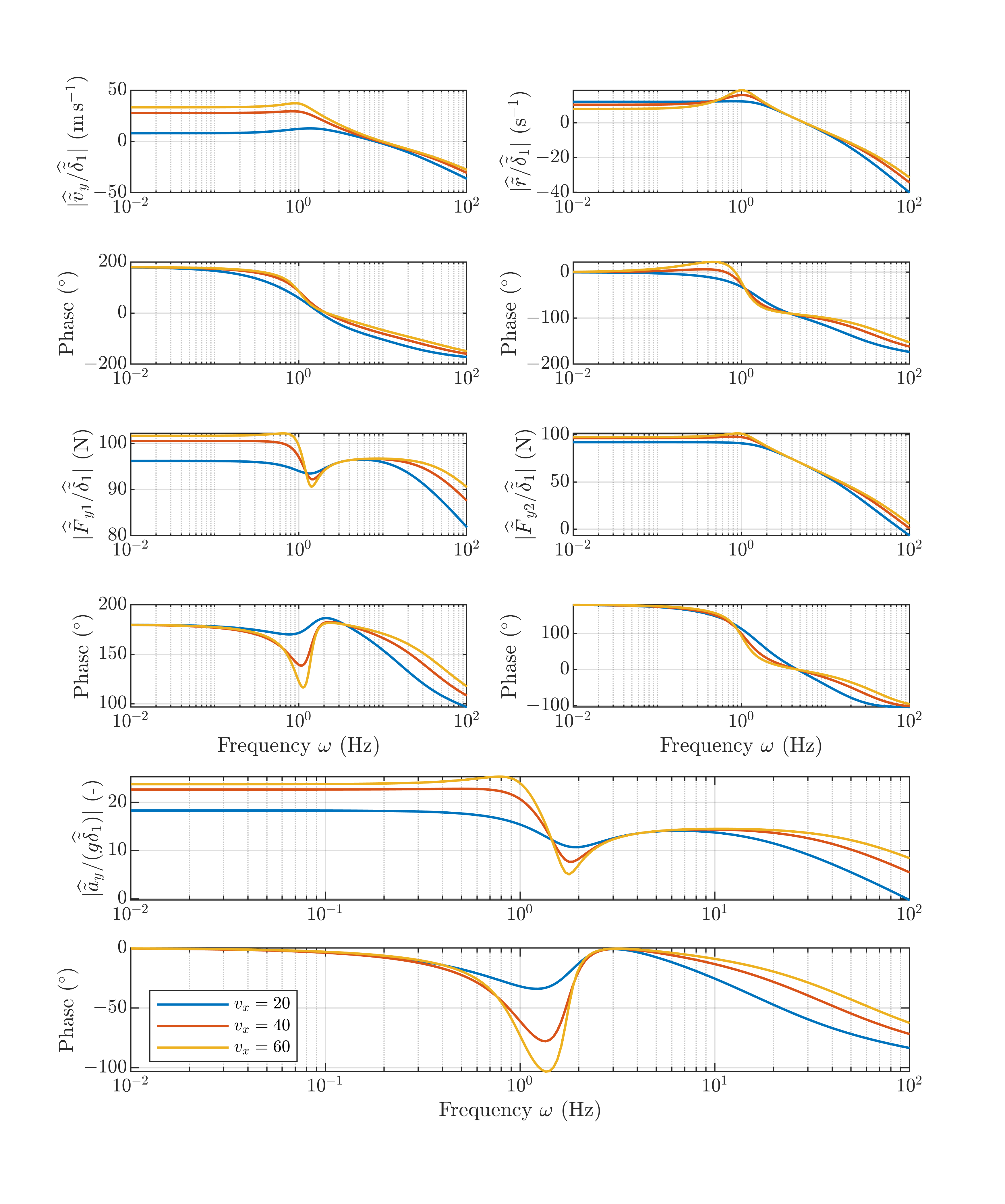} 
\caption{Typical Bode diagrams (in semilog scale) from $\widehat{\tilde{\delta}}_1(s)$ to the components of $\widehat{\bm{y}}(s)$ for a single-track model linearised around the zero equilibrium $(\bm{x}^\star, \bm{z}^\star(\xi), \bm{\delta}^\star) = \bm{0}$, for different values of the longitudinal speed $v_x = 20$, $40$, and $60$ $\text{m}\,\text{s}^{-1}$. Model parameters as in Fig.~\ref{fig:Charts}(a).}
\label{fig:BodeLuGre}
\end{figure}

\section{Numerical simulations}\label{sect:Simulations}
The present section investigates the dynamic behaviour of the semilinear single-track models developed in the paper, considering both micro-shimmy oscillations (Sect.~\ref{sect:shimmy}) and typical steering manoeuvres (Sect.~\ref{sect:steering}). The results reported below refer to numerical simulations conducted in MATLAB/Simulink\textsuperscript{\textregistered} environment, obtained using a finite-difference scheme for the space discretisation of the PDEs, and with discretisation steps of $0.02$ and $10^{-4}$ s for the spatial and temporal resolutions, respectively. All the simulations were conducted with $\abs{v}_\varepsilon \triangleq \sqrt{v^2 + \varepsilon}$ and $\varepsilon = 10^{-6}$ $\textnormal{m}^2\,\textnormal{s}^{-2}$, and subsequently repeated with $\varepsilon = 0$, with no appreciable difference.

\subsection{Micro-shimmy oscillations}\label{sect:shimmy}
A prerogative of single-track models with distributed tyre dynamics consists in their ability to predict the existence of micro-shimmy oscillations, typically occurring at low longitudinal speeds $v_x \leq 5$ $\textnormal{m}\,\textnormal{s}^{-1}$. Therefore, it is first interesting to investigate the effect of different model parameters on the characteristics of such self-excited vibrations. Figure~\ref{fig:Shimmy1} was produced considering two different semilinear single-track models driving at $v_x = 0.45$ $\textnormal{m}\,\textnormal{s}^{-1}$, with rigid tyre carcass and constant pressure distribution, but different values for the micro-damping coefficients $\sigma_{1,i}$, $i \in \{1,2\}$. In particular, concerning the tyre forces, both formulations exhibit an initial oscillatory dynamics with relatively large amplitudes, which decay over time due to the dissipative effects induced by the nonlinear friction model. 
\begin{figure}
\centering
\includegraphics[width=0.9\linewidth]{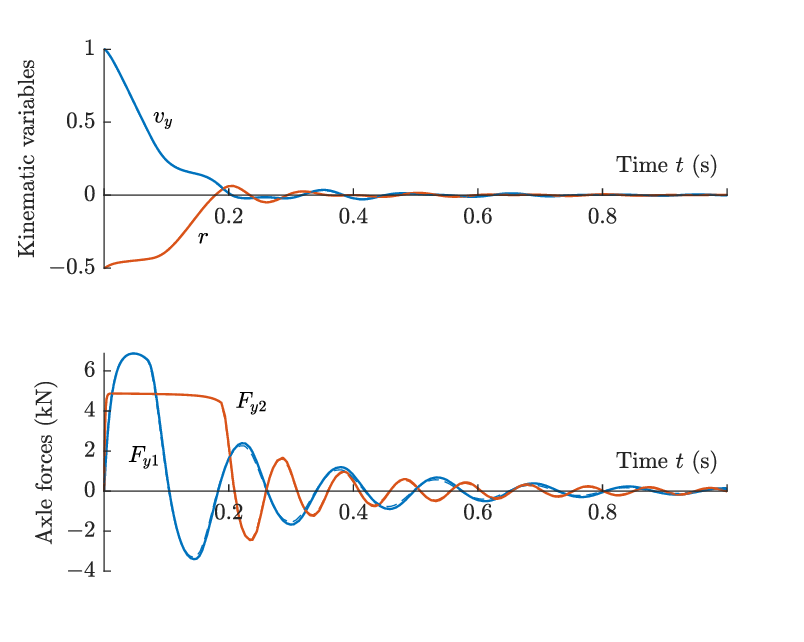} 
\caption{Spontaneous micro-shimmy vibrations for two semilinear single-track models with rigid carcass, constant pressure distribution, and different damping coefficients, driving at $v_x = 0.45$ $\textnormal{m}\,\textnormal{s}^{-1}$. Solid lines: $\sigma_{1,1} = \sigma_{1,2} =0$ $\textnormal{s}\,\textnormal{m}^{-1}$; Dashed lines: $\sigma_{1,1} = \sigma_{1,2} =0.1$ $\textnormal{s}\,\textnormal{m}^{-1}$. Other model parameters as in Table~\ref{tab:parametersSimul}.}
\label{fig:Shimmy1}
\end{figure}

The micro-shimmy behaviours of two variants with rigid (solid lines) and flexible tyre carcasses (dashed lines) are compared in Fig.~\ref{fig:Shimmy2}. In the second case, the additional nonlocal phenomena excited by the structural compliance of the tyre appear to exacerbate the oscillatory dynamics of the vehicle, producing larger amplitudes and longer transients concerning both the kinematic variables and axle forces. 
\begin{figure}
\centering
\includegraphics[width=0.9\linewidth]{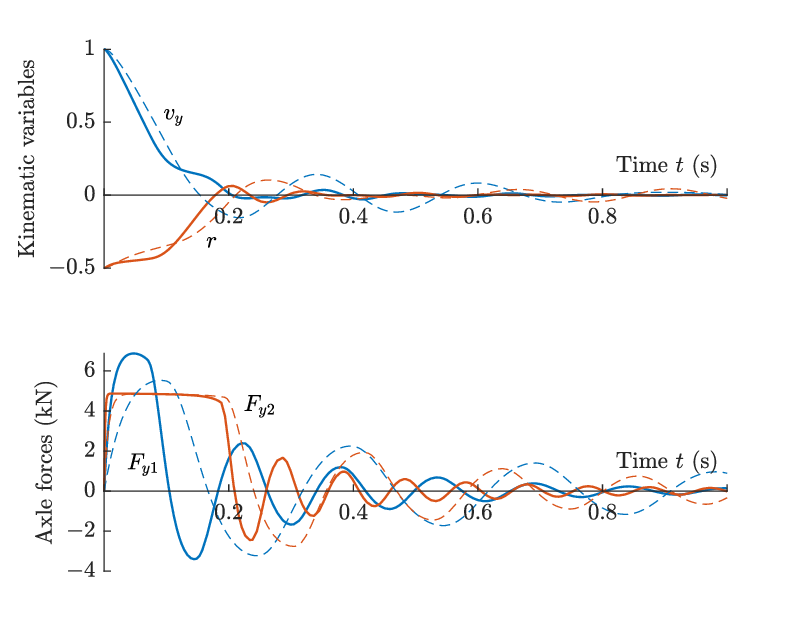} 
\caption{Spontaneous micro-shimmy vibrations for two semilinear single-track models with rigid and flexible carcass, and constant pressure distribution, driving at $v_x = 0.45$ $\textnormal{m}\,\textnormal{s}^{-1}$. Solid lines: model with rigid carcass and $\sigma_{1,1} = \sigma_{1,2} =0$ $\textnormal{s}\,\textnormal{m}^{-1}$; Dashed lines: model with flexible carcass with $w_1 = w_2 = 2.5\cdot 10^6$ N, and $\sigma_{1,1} = \sigma_{1,2} =0$ $\textnormal{s}\,\textnormal{m}^{-1}$. Other model parameters as in Table~\ref{tab:parametersSimul}.}
\label{fig:Shimmy2}
\end{figure}

Finally, the role of different pressure distributions on the entity of the micro-shimmy oscillations is investigated in Fig.~\ref{fig:Shimmy3}, considering two semilinear single-track models with flexible carcass. Specifically, it may be observed that the formulation with constant profile (solid lines) predicts slightly larger amplitudes compared to that with exponentially decreasing contact pressure (dashed lines). This is consistent with the observation that an exponentially decreasing distribution often implies strict-dissipativity properties for the distributed FrBD model \cite{FrBD,DistrLuGre}, eventually robustifying the stability of the vehicle. However, the behaviors illustrated in Fig.~\ref{fig:Shimmy3} are essentially analogous, suggesting that the shape of the contact pressure might play only a minor role in determining the stability of the semilinear single-track models.
\begin{figure}
\centering
\includegraphics[width=0.9\linewidth]{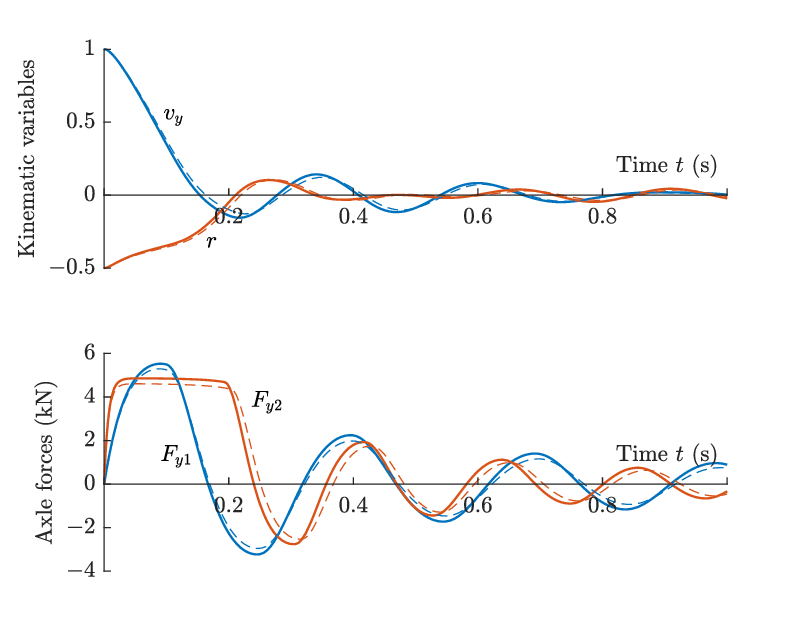} 
\caption{Spontaneous micro-shimmy vibrations for two semilinear single-track models with flexible carcass and different pressure distribution, driving at $v_x = 0.45$ $\textnormal{m}\,\textnormal{s}^{-1}$. Solid lines: model with constant pressure distribution; Dashed lines: model with exponentially decreasing pressure distribution, with $a_1 = a_2 = 1$. Other model parameters as in Table~\ref{tab:parametersSimul}.}
\label{fig:Shimmy3}
\end{figure}
The model parameters used to produce Figs.~\ref{fig:Shimmy1},~\ref{fig:Shimmy2}, and~\ref{fig:Shimmy3} are listed in Table~\ref{tab:parametersSimul}.

\begin{table}[h!]\centering 
\caption{Model parameters}
{\begin{tabular}{|c|c|c|c|}
\hline
Parameter & Description & Unit & Value \\
\hline 
$m$ & Vehicle mass & kg & 1300 \\ 
$I_z$ & Vertical moment of inertia  & $\textnormal{kg}\,\textnormal{m}^{2}$ & 2000 \\
$l_1$ & Front axle length & m & 1  \\
$l_2$ & Rear axle length & m & 1.6 \\
$L_1$ & Front contact patch length & m & 0.11 \\
$L_2$ & Rear contact patch length & m & 0.09 \\
$\sigma_{0,1}$ &Front micro-stiffness & $\textnormal{m}^{-1}$ & 163 \\
$\sigma_{0,2}$ & Rear micro-stiffness & $\textnormal{m}^{-1}$ & 408 \\
$\sigma_{1,1}$ &Front micro-damping & $\textnormal{s}\,\textnormal{m}^{-1}$ & 0.1 \\
$\sigma_{1,2}$ & Rear micro-damping & $\textnormal{s}\,\textnormal{m}^{-1}$ & 0.1 \\
$\sigma_{2,1}$ &Front viscous damping & $\textnormal{s}\,\textnormal{m}^{-1}$ & 0 \\
$\sigma_{2,2}$ & Rear viscous damping & $\textnormal{s}\,\textnormal{m}^{-1}$ & 0 \\
$\mu_1(\cdot)$ & Front friction coefficient & -& 1\\
$\mu_2(\cdot)$ & Rear friction coefficient & -& 1\\
$w_1$ & Front tyre carcass stiffness &N & $2.5\cdot 10^6$ \\
$w_2$ & Rear tyre carcass stiffness & N &$2.5\cdot 10^6$ \\
$F_{z1}$ & Front vertical tyre force & N & 3924 \\
$F_{z2}$ & Rear vertical tyre force  & N  & 2453 \\
$\chi_1$ & First parametrisation coefficient & -& 0\\
$\chi_2$ & Second parametrisation coefficient & -& 0\\
$\chi_3$ & Third parametrisation coefficient & -& 0\\
$\varepsilon$ & Regularisation parameter & $\textnormal{m}^2\,\textnormal{s}^{-2}$ & $10^{-6}$ \\ 
\hline
\end{tabular} }
\label{tab:parametersSimul}
\end{table}

\subsection{Steering manoeuvres}\label{sect:steering}

Next, the dynamical behaviour predicted by the semilinear single-track models is investigated considering typical cornering manoeuvres at normal cruising speeds. More specifically, the cases of constant steering and sine-sweep inputs are analysed qualitatively in the following.

Figure~\ref{fig:Steer1} compares the transient dynamics of the two single-track formulations with rigid (solid lines) and flexible carcass (dashed lines), and constant pressure distribution, obtained by prescribing constant steering angles $\delta_1(t) = \delta_1 = 2^\circ$ and $\delta_2(t) = \delta_2 = 0$. The trend illustrated in Fig.~\ref{fig:Steer1} refers to zero initial conditions, and a constant longitudinal speed $v_x = 20$ $\text{m}\,\text{s}^{-1}$. It may be easily observed that both variants predict convergence of the kinematic variables and axle forces to their steady-state values, which are reached approximately around $t = 0.6$ s. Generally speaking, the model with a compliant carcass exhibits slightly smoother dynamics, especially concerning the evolution of the axle forces.
\begin{figure}
\centering
\includegraphics[width=0.9\linewidth]{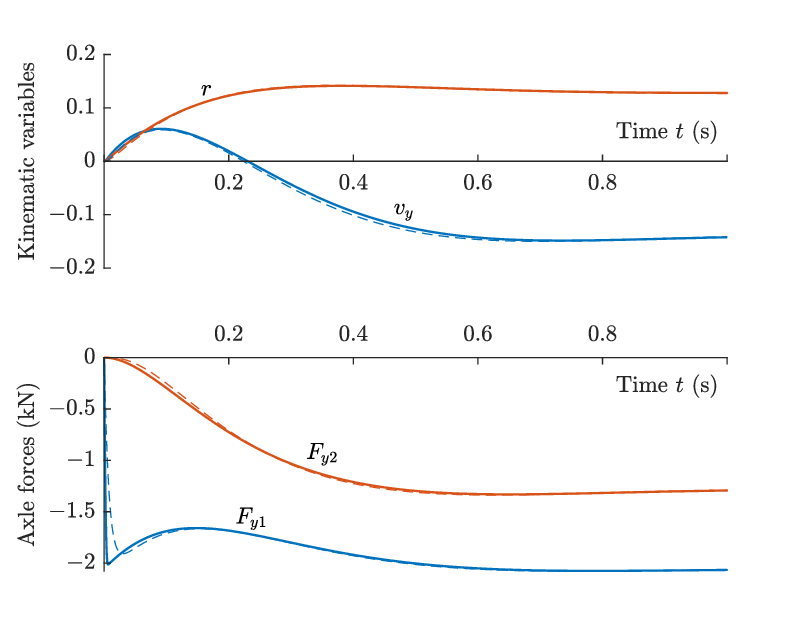} 
\caption{Dynamic response predicted by the semilinear single-track model with rigid (solid lines) and flexible carcass (dashed lines), for a vehicle travelling at a constant longitudinal speed $v_x = 20$ $\text{m}\,\text{s}^{-1}$, subjected to a cornering manoeuver with constant steering inputs $\delta_1 = 2^\circ$ and $\delta_2 = 0$. Solid lines: model with rigid carcass and $\sigma_{1,1} = \sigma_{1,2} =0$ $\textnormal{s}\,\textnormal{m}^{-1}$; Dashed lines: model with flexible carcass with $w_1 = w_2 = 2.5\cdot 10^6$ N, and $\sigma_{1,1} = \sigma_{1,2} =0$ $\textnormal{s}\,\textnormal{m}^{-1}$. Other model parameters as in Table~\ref{tab:parametersSimul}.}
\label{fig:Steer1}
\end{figure}

Similar considerations may be drawn by inspecting Fig.~\ref{fig:Steer2}, produced considering a sine-sweep input $\delta_1(t) = \bar{\delta}_1\sin(\omega t)$, with $\bar{\delta}_1 = 2^\circ$ and $\omega = 2$ $\textnormal{rad}\,\textnormal{s}^{-1}$, and again $\delta_2(t) = \delta_2 = 0$. In this case, the low frequency of excitation, combined with the small steering input, yields a better qualitative agreement between the variants with rigid and flexible carcasses. 
\begin{figure}
\centering
\includegraphics[width=0.9\linewidth]{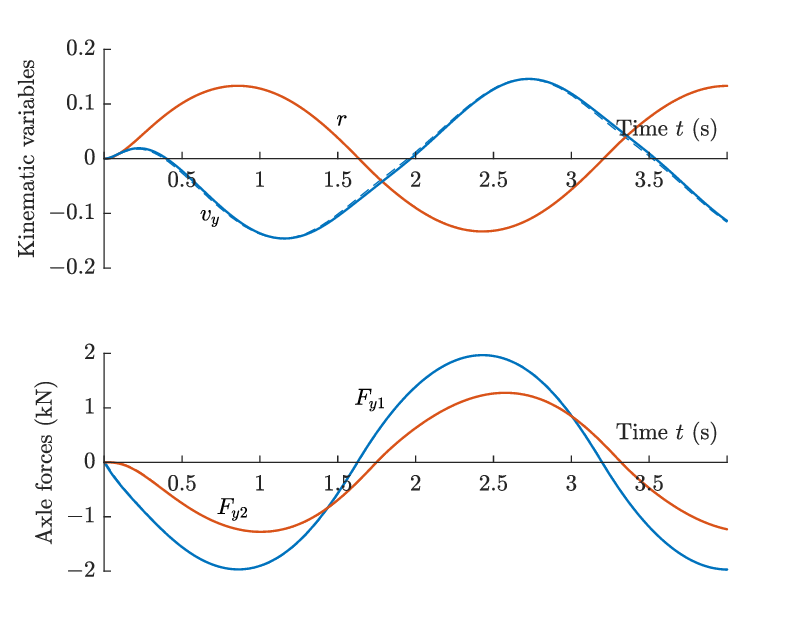} 
\caption{Dynamic response predicted by the semilinear single-track model with rigid (solid lines) and flexible carcass (dashed lines), for a vehicle travelling at a constant longitudinal speed $v_x = 20$ $\text{m}\,\text{s}^{-1}$, subjected to a sine-sweep manoeuver with amplitudes $\delta_1 = 2^\circ$, $\delta_2 = 0$, and steering frequency $\omega = 2$ $\textnormal{rad}\,\textnormal{s}^{-1}$. Solid lines: model with rigid carcass and $\sigma_{1,1} = \sigma_{1,2} =0$ $\textnormal{s}\,\textnormal{m}^{-1}$; Dashed lines: model with flexible carcass with $w_1 = w_2 = 2.5\cdot 10^6$ N, and $\sigma_{1,1} = \sigma_{1,2} =0$ $\textnormal{s}\,\textnormal{m}^{-1}$. Other model parameters as in Table~\ref{tab:parametersSimul}.}
\label{fig:Steer2}
\end{figure}

Different is instead the situation illustrated in Fig.~\ref{fig:Steer3}, which refers to a sine-sweep manoeuver designed with $\bar{\delta}_1 = 1^\circ$, $\delta_2(t) = 0$, and $\omega = 50$ $\textnormal{rad}\,\textnormal{s}^{-1}$. Combined with the relatively large steering input at the front wheels, the fast excitation induces distinct behaviours in the two semilinear single-track models, with the variant with compliant carcass exhibiting a delayed response compared to the simpler formulation. 
\begin{figure}
\centering
\includegraphics[width=0.9\linewidth]{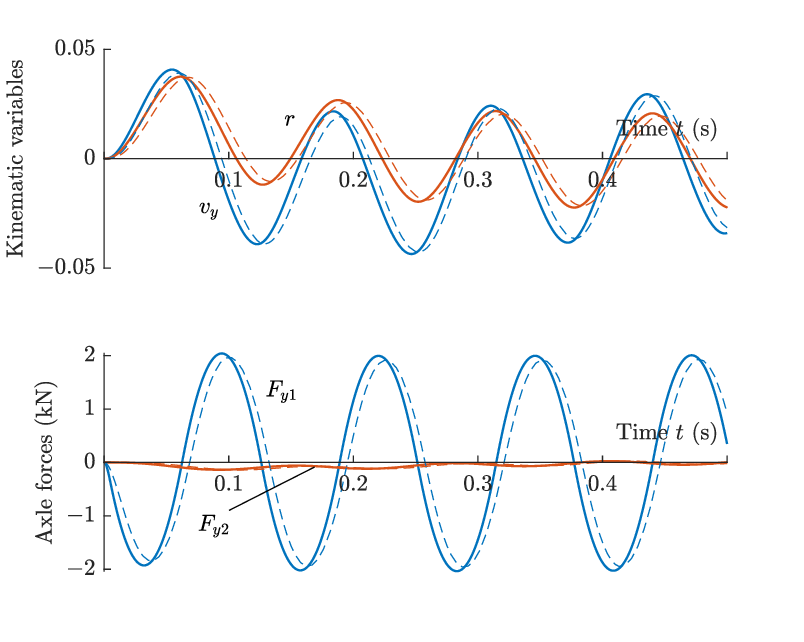} 
\caption{Dynamic response predicted by the semilinear single-track model with rigid (solid lines) and flexible carcass (dashed lines), for a vehicle travelling at a constant longitudinal speed $v_x = 20$ $\text{m}\,\text{s}^{-1}$, subjected to a sine-sweep manoeuver with amplitudes $\delta_1 = 1^\circ$, $\delta_2 = 0$, and steering frequency $\omega =50$ $\textnormal{rad}\,\textnormal{s}^{-1}$. Solid lines: model with rigid carcass and $\sigma_{1,1} = \sigma_{1,2} =0$ $\textnormal{s}\,\textnormal{m}^{-1}$; Dashed lines: model with flexible carcass with $w_1 = w_2 = 2.5\cdot 10^6$ N, and $\sigma_{1,1} = \sigma_{1,2} =0$ $\textnormal{s}\,\textnormal{m}^{-1}$. Other model parameters as in Table~\ref{tab:parametersSimul}.}
\label{fig:Steer3}
\end{figure}

\section{Conclusions}\label{sect:Conclusion}
This paper introduced a novel family of semilinear single-track vehicle models equipped with distributed tyre representations that incorporate the effect of finite friction. Specifically, the friction model adopted in this work is the \emph{Friction with Bristle Dynamics} (FrBD) one, which generalises famous formulations such as Dahl and LuGre by describing the distributed rolling contact process using a nonlinear \emph{partial differential equation} (PDE). In contrast to classic discontinuous Coulomb-like models, the FrBD description does not explicitly differentiate between local stick-slip behaviours inside the contact area, thereby overcoming the well-known drawbacks associated with non-smooth dynamics that render the tyre-road interaction problem difficult to analyse. 
This model was easily integrated into a single-track vehicle framework, yielding a semilinear ODE-PDE system that captures the interaction between the rigid-body lateral vehicle motion and distributed tyre dynamics. Two main variants of the single-track model were developed, each admitting a compact state-space representation. The first version assumes a rigid tyre carcass, whereas the second one considers the effect of a flexible carcass but disregards secondary contributions relating to damping and viscous terms. Local and global well-posedness properties for the coupled system were established rigorously, highlighting the dissipative and physically consistent properties of the distributed FrBD model. A linearisation procedure was also presented, enabling spectral analysis and transfer function derivation, and potentially facilitating the synthesis of controllers and observers. The performance of the newly developed single-track models was exemplified in simulation, considering both micro-shimmy scenarios and standard steering manoeuvres. In particular, the adoption of a semilinear variant qualitatively demonstrated the stabilising effect of friction on micro-shimmy oscillations arising at low longitudinal speeds. The proposed formulation advances the state-of-the-art modelling in vehicle dynamics by providing a physically grounded, mathematically rigorous, and computationally tractable approach to incorporating transient tyre behaviour in lateral vehicle dynamics, when accounting for the effect of limited friction. Future work could focus on investigating the stability of the semilinear single-track models around nontrivial equilibria, as well as on synthesising control and estimation algorithms based directly on the distributed parameter representations introduced in this paper. Moreover, full nonlinear descriptions, including additional contributions from suspension dynamics and load transfers, could be further developed, providing a more realistic and accurate representation of the intricate interaction between rigid chassis motion and distributed tyre phenomena. 

\section*{Funding declaration}
This research was financially supported by the project FASTEST (Reg. no. 2023-06511), funded by the Swedish Research Council. 


\section*{Compliance with Ethical Standards}

The authors declare that they have no conflict of interest.

\section*{Author Contribution declaration}
L.R. developed the models, carried out their mathematical analysis, performed all simulations, generated the figures, and wrote the manuscript. O.M.A., J.Å., and E.F. provided valuable feedback. All authors reviewed and approved the final version of the manuscript.

\appendix

\section{Technical results and proofs}\label{app:theorems}
The present Appendix collects some technical results and proofs.
\subsection{Proof of Theorems~\ref{thm:mild} and~\ref{thm:class}}\label{app:WellP}
Some technical results need first to be established before giving the proof of Theorems~\ref{thm:mild} and~\ref{thm:class}.

The ODE-PDE system~\eqref{eq:originalSystems} may be recast in abstract form as
\begin{subequations}\label{eq:abstracto}
\begin{align}
\dod{}{t}\begin{bmatrix} \bm{x}(t) \\ \bm{z}(t) \end{bmatrix} & = \mathscr{A}\begin{bmatrix} \bm{x}(t) \\ \bm{z}(t) \end{bmatrix} + \bm{f}\bigl(\bm{x}(t),\bm{z}(t),t\bigr), \quad t \in (0,T), \\
\begin{bmatrix} \bm{x}(0) \\ \bm{z}(0) \end{bmatrix} & = \begin{bmatrix} \bm{x}_0 \\ \bm{z}_0 \end{bmatrix}, 
\end{align}\label{eq:abstract}
\end{subequations}
where $(\mathscr{A},\mathscr{D}(\mathscr{A}))$ is the unbounded operator defined by
\begin{subequations}
\begin{align}\label{eq:Adecomp}
\bigl(\mathscr{A}(\bm{y},\bm{\zeta})\bigr)(\xi) & \triangleq \bigl(\mathscr{A}_0(\bm{y},\bm{\zeta})\bigr)(\xi) + \bigl(\mathscr{A}_1(\bm{y},\bm{\zeta})\bigr)(\xi), \\
\mathscr{D}(\mathscr{A}) & \triangleq \left\{ (\bm{y},\bm{\zeta}) \in \mathcal{Y} \mathrel{|} \bm{\zeta}(0) = \bm{0}\right\}, 
\end{align}
\end{subequations}
with $(\mathscr{A}_0, \mathscr{D}(\mathscr{A}_0))$ reading
\begin{subequations}\label{operator:A_0}
\begin{align}
\bigl(\mathscr{A}_0(\bm{y},\bm{\zeta})\bigr)(\xi) & \triangleq \begin{bmatrix} \mathbf{A}_1 \bm{y} + \mathbf{G}_1\mathbf{K}_2\bm{\zeta}(1) \\ -\mathbf{\Lambda} \dpd{\bm{\zeta}(\xi)}{\xi} + \mathbf{K}_6\bm{\zeta}(1)\end{bmatrix}, \\
\mathscr{D}(\mathscr{A}_0) & = \mathscr{D}(\mathscr{A}),
\end{align}
\end{subequations}
and $\mathscr{A}_1\in \mathscr{B}(\mathcal{X}) $ given by
\begin{align}
\bigl(\mathscr{A}_1(\bm{y},\bm{\zeta})\bigr)(\xi) \triangleq \begin{bmatrix} \mathbf{G}_1\int_0^1 \mathbf{K}_1(\xi)\bm{\zeta}(\xi)\dif \xi \\ \int_0^1 \mathbf{K}_5(\xi)\bm{\zeta}(\xi)\dif \xi \end{bmatrix}.
\end{align}
Finally, the nonlinear term appearing in~\eqref{eq:abstract}, $\bm{f} :\mathcal{X}\times [0,T]\to \mathcal{X}$, reads
\begin{align}
\bm{f}(\bm{x},\bm{z},t) \triangleq \begin{bmatrix} \mathbf{G}_1\mathbf{\Sigma}\Bigl(\bm{v}\bigl(\bm{x},\bm{\delta}(t)\bigr)\Bigr)(\mathscr{K}_2\bm{z})  +\mathbf{G}_1\bm{h}_1\Bigl(\bm{v}\bigl(\bm{x},\bm{\delta}(t)\bigr)\Bigr)\\ \mathbf{\Sigma}\Bigl(\bm{v}\bigl(\bm{x},\bm{\delta}(t)\bigr)\Bigr) \bigl[\bm{z}(\xi) + (\mathscr{K}_3\bm{z})\bigr] +\bm{h}_2\Bigl(\bm{v}\bigl(\bm{x},\bm{\delta}(t)\bigr)\Bigr)\end{bmatrix}.
\end{align}
Utilizing the abstract representation~\eqref{eq:abstracto}, local well-posedness for the ODE-PDE system~\eqref{eq:originalSystems} may be inferred by showing that $\mathscr{A}$ generates a $C_0$-semigroup on $\mathcal{X}$, and then treating $\bm{f} :\mathcal{X}\times [0,T]\to \mathcal{X}$ as a nonlinear perturbation, according to the classic theory for semilinear evolution equations. Recalling the decomposition~\eqref{eq:Adecomp}, it is actually sufficient to prove that $\mathscr{A}_0$ generates a $C_0$-semigroup on $\mathcal{X}$, since $\mathscr{A}_1\in \mathscr{B}(\mathcal{X}) $ may in turn be regarded as a bounded perturbation. Two preliminary results in this direction, concerning respectively the closedness and quasi-dissipativity of $(\mathscr{A}_0,\mathscr{D}(\mathscr{A}_0))$, are asserted by Propositions~\ref{prop:Closed} and~\ref{prop:quasiDiss} below.

\begin{proposition}[Closedness]\label{prop:Closed}
The operator $(\mathscr{A}_0, \mathscr{D}(\mathscr{A}_0))$ defined as in~\eqref{operator:A_0} is closed.
\begin{proof}
It suffices to show that there exists $\lambda \in \mathbb{R}$ such that $(\mathscr{A}_0-\lambda I_{\mathcal{X}}, \mathscr{D}(\mathscr{A}_0))$ is invertible. Setting
\begin{align}\label{eq:settt}
\begin{split}
\bigl((\mathscr{A}_0-\lambda I_{\mathcal{X}})(\bm{y}_1,\bm{\zeta}_1)\bigr)(\xi) & = \begin{bmatrix} (\mathbf{A}_1-\lambda \mathbf{I}_{2}) \bm{y}_1 + \mathbf{G}_1\mathbf{K}_2\bm{\zeta}_1(1) \\ -\mathbf{\Lambda}\dpd{\bm{\zeta}_1(\xi)}{\xi} + \mathbf{K}_6\bm{\zeta}_1(1) -\lambda \bm{\zeta}_1(\xi) \end{bmatrix} = \begin{bmatrix} \bm{y}_2 \\ \bm{\zeta}_2(\xi) \end{bmatrix}
\end{split}
\end{align}
and solving for the second component yields
\begin{align}\label{eq:solV}
\begin{split}
\bm{\zeta}_1(\xi) & = \mathbf{\Gamma}_0(\xi,\lambda)\mathbf{\Lambda}^{-1}\mathbf{K}_6 \bm{\zeta}_1(1) - \int_0^\xi \mathbf{\Phi}_0\bigl(\xi,\xi^\prime,\lambda\bigr) \mathbf{\Lambda}^{-1}\bm{\zeta}_2(\xi^\prime) \dif \xi^\prime, \quad \xi \in [0,1],
\end{split}
\end{align}
where
\begin{subequations}
\begin{align}
\mathbf{\Gamma}_0(\xi,\lambda) & \triangleq \int_0^\xi \mathbf{\Phi}_0\bigl(\xi,\xi^\prime,\lambda\bigr)\dif \xi^\prime, \\
\mathbf{\Phi}_0(\xi,\tilde{\xi},\lambda) & \triangleq\eu^{-\lambda \mathbf{\Lambda}^{-1}(\xi-\tilde{\xi})}.
\end{align}
\end{subequations}
Accordingly,
\begin{align}
\bigl[\mathbf{I}_2-\mathbf{\Theta}_0(\lambda)\bigr]\bm{\zeta}_1(1) &= - \int_0^1\mathbf{\Phi}_0(1,\xi,\lambda)\mathbf{\Lambda}^{-1}\bm{\zeta}_2(\xi) \dif \xi, 
\end{align}
where
\begin{align}\label{eq:MatrixSigma}
\mathbf{\Theta}_0(\lambda) \triangleq \mathbf{\Gamma}_0(1,\lambda)\mathbf{\Lambda}^{-1}\mathbf{K}_6  = \dfrac{1}{\lambda}\Bigl(\mathbf{I}_{2}- \eu^{-\lambda \mathbf{\Lambda}^{-1}} \Bigr)\mathbf{K}_6.
\end{align}
Since $\mathbf{I}_2-\mathbf{\Theta}_0(\lambda) \in \mathbf{GL}_{2 }(\mathbb{R})$ for sufficiently large $ \lambda \notin \sigma(\mathbf{A}_1)$, combining Eqs.~\eqref{eq:settt} and~\eqref{eq:MatrixSigma} yields
\begin{align}
\bm{y}_1 =  (\mathbf{A}_1-\lambda \mathbf{I}_{2})^{-1}\Biggl[\bm{y}_2 - \mathbf{G}_1\mathbf{K}_2\bigl[\mathbf{I}_2-\mathbf{\Theta}_0(\lambda)\bigr]^{-1}\int_0^1 \mathbf{\Phi}_0(1,\xi,\lambda)\mathbf{\Lambda}^{-1}\bm{\zeta}_2(\xi) \dif \xi\Biggr],
\end{align}
implying that $(\mathscr{A}_0, \mathscr{D}(\mathscr{A}_0))$ is closed.
\end{proof}
\end{proposition}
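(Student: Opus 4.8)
The plan is to verify that the resolvent set $\rho(\mathscr{A}_0)$ is nonempty: once a single $\lambda$ is placed in $\rho(\mathscr{A}_0)$, the operator $\lambda I_{\mathcal{X}} - \mathscr{A}_0$ has a bounded, everywhere-defined inverse, hence is closed, and therefore so is $\mathscr{A}_0 = \lambda I_{\mathcal{X}} - (\lambda I_{\mathcal{X}} - \mathscr{A}_0)$. Thus it suffices to show that, for all sufficiently large real $\lambda$ with $\lambda\notin\sigma(\mathbf{A}_1)$, the equation $(\mathscr{A}_0 - \lambda I_{\mathcal{X}})(\bm{y}_1,\bm{\zeta}_1) = (\bm{y}_2,\bm{\zeta}_2)$ admits, for every $(\bm{y}_2,\bm{\zeta}_2)\in\mathcal{X}$, a unique solution $(\bm{y}_1,\bm{\zeta}_1) \in \mathscr{D}(\mathscr{A}_0)$ depending continuously on the data.

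First I would solve the distributed component. From the second line of~\eqref{operator:A_0}, this reads $\mathbf{\Lambda}\,\partial_\xi\bm{\zeta}_1 = -\lambda\bm{\zeta}_1 + \mathbf{K}_6\bm{\zeta}_1(1) - \bm{\zeta}_2$, together with the boundary condition $\bm{\zeta}_1(0)=\bm{0}$ built into $\mathscr{D}(\mathscr{A}_0)$. Since $\mathbf{\Lambda}$ is diagonal and positive definite, $\mathbf{\Lambda}^{-1}$ exists and this is a linear first-order ODE system in $\xi$ whose forcing contains the (yet unknown) trace $\bm{\zeta}_1(1)$. Treating that trace as a parameter, variation of parameters gives an explicit representation of $\bm{\zeta}_1(\xi)$ as a combination of $\eu^{-\lambda\mathbf{\Lambda}^{-1}\xi}\bm{\zeta}_1(1)$ and a convolution of $\eu^{-\lambda\mathbf{\Lambda}^{-1}(\xi-\cdot)}$ against $\mathbf{\Lambda}^{-1}\bm{\zeta}_2$; by construction $\bm{\zeta}_1(0)=\bm{0}$, and since $\partial_\xi\bm{\zeta}_1\in L^2$ one has $\bm{\zeta}_1\in H^1((0,1);\mathbb{R}^2)$.

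Next I would close the loop created by the nonlocal term: evaluating the representation at $\xi=1$ yields a $2\times 2$ linear algebraic system of the form $\bigl[\mathbf{I}_2-\mathbf{\Theta}_0(\lambda)\bigr]\bm{\zeta}_1(1) = \ell(\bm{\zeta}_2)$, where $\ell$ is a bounded linear functional of $\bm{\zeta}_2$ and $\mathbf{\Theta}_0(\lambda)$ is proportional to $\lambda^{-1}\bigl(\mathbf{I}_2-\eu^{-\lambda\mathbf{\Lambda}^{-1}}\bigr)\mathbf{K}_6$. Since $\norm{\mathbf{\Theta}_0(\lambda)}\to 0$ as $\lambda\to+\infty$, the matrix $\mathbf{I}_2-\mathbf{\Theta}_0(\lambda)$ is invertible for $\lambda$ large, so $\bm{\zeta}_1(1)$ — and consequently the whole profile $\bm{\zeta}_1(\cdot)$ — is uniquely and continuously determined by $\bm{\zeta}_2$. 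With $\bm{\zeta}_1(1)$ in hand, the lumped component $(\mathbf{A}_1-\lambda\mathbf{I}_2)\bm{y}_1 = \bm{y}_2-\mathbf{G}_1\mathbf{K}_2\bm{\zeta}_1(1)$ is solved by inverting $\mathbf{A}_1-\lambda\mathbf{I}_2$, which is possible for $\lambda\notin\sigma(\mathbf{A}_1)$. Choosing $\lambda$ real, large, and outside $\sigma(\mathbf{A}_1)$ then yields $\lambda\in\rho(\mathscr{A}_0)$.

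The one genuinely delicate step is the self-referential boundary term $\mathbf{K}_6\bm{\zeta}_1(1)$, which obstructs a naive integration of the transport part; its resolution rests on the smallness of $\mathbf{\Theta}_0(\lambda)$ for large $\lambda$, and I expect this to be the crux of the argument. The remaining points — that the solution map sends $\mathcal{X}$ boundedly into $\mathscr{D}(\mathscr{A}_0)\subset\mathcal{X}$, using the explicit formulas together with the continuity of the trace $H^1\ni\bm{\zeta}_1\mapsto\bm{\zeta}_1(1)$ — are routine bookkeeping.
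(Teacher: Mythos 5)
Your proposal is correct and follows essentially the same route as the paper: you solve the transport equation by variation of parameters with the trace $\bm{\zeta}_1(1)$ as an unknown parameter, close the loop via the $2\times 2$ system $[\mathbf{I}_2-\mathbf{\Theta}_0(\lambda)]\bm{\zeta}_1(1)=\ell(\bm{\zeta}_2)$ using the smallness of $\mathbf{\Theta}_0(\lambda)$ for large $\lambda$, and then invert $\mathbf{A}_1-\lambda\mathbf{I}_2$ for the lumped part. Your explicit remarks on the boundedness of the solution map and the $H^1$ trace continuity only make precise what the paper leaves implicit.
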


\begin{proposition}[Quasi-dissipativity]\label{prop:quasiDiss}
The operator $(\mathscr{A}_0, \mathscr{D}(\mathscr{A}_0))$ defined according to~\eqref{operator:A_0} and its adjoint $(\mathscr{A}_0^*, \mathscr{D}(\mathscr{A}_0^*))$ are both quasi-dissipative, that is, they satisfy
\begin{subequations}
\begin{align}
\RE\bigl\langle \mathscr{A}_0(\bm{y},\bm{\zeta}), (\bm{y},\bm{\zeta})\bigr \rangle_{\mathcal{X}} \leq \omega_0 \norm{(\bm{y},\bm{\zeta}(\cdot))}_{\mathcal{X}}^2, && (\bm{y},\bm{\zeta}) \in \mathscr{D}(\mathscr{A}_0), \label{eq:qDiss1}\\
\RE\bigl\langle \mathscr{A}_0^*(\bm{y},\bm{\zeta}), (\bm{y},\bm{\zeta})\bigr \rangle_{\mathcal{X}} \leq \omega_0 \norm{(\bm{y},\bm{\zeta}(\cdot))}_{\mathcal{X}}^2, && (\bm{y},\bm{\zeta}) \in \mathscr{D}(\mathscr{A}_0^*), \label{eq:qDiss2}
\end{align}
\end{subequations}
with constant $\omega_0$ given by
\begin{align}\label{eq:omega0}
\omega_0 & \triangleq \max\left\{\norm{\mathbf{A}_1} + \dfrac{\norm{\mathbf{G}_1\mathbf{K}_2}^2}{\lambda\ped{min}(\mathbf{\Lambda})}, \dfrac{\norm{\mathbf{K}_6}^2}{\lambda\ped{min}(\mathbf{\Lambda})}\right\},
\end{align}
where $\mathbb{R}_{>0} \ni \lambda\ped{min}(\mathbf{\Lambda})$ denotes the smallest eigenvalue of $\mathbf{\Lambda}$.
\begin{proof}
Considering the operator $(\mathscr{A}_0, \mathscr{D}(\mathscr{A}_0))$ and taking the inner product on $\mathcal{X}$ provides
\begin{align}
\begin{split}
 \RE\bigl\langle \mathscr{A}_0(\bm{y},\bm{\zeta}), (\bm{y},\bm{\zeta})\bigr \rangle_{\mathcal{X}} & = \bm{y}^\mathrm{T}\mathbf{A}_1^{\mathrm{T}}\bm{y} + \bm{\zeta}^{\mathrm{T}}(1)(\mathbf{G}_1\mathbf{K}_2)^{\mathrm{T}}\bm{y} - \int_0^1 \dpd{\bm{\zeta}^{\mathrm{T}}(\xi)}{\xi}\mathbf{\Lambda} \bm{\zeta}(\xi) \dif \xi + \bm{\zeta}^{\mathrm{T}}(1)\mathbf{K}_6^{\mathrm{T}}\int_0^1 \bm{\zeta}(\xi) \dif \xi\\
& = \bm{y}^\mathrm{T}\mathbf{A}_1^{\mathrm{T}}\bm{y} + \bm{\zeta}^{\mathrm{T}}(1)(\mathbf{G}_1\mathbf{K}_2)^{\mathrm{T}}\bm{y}  - \dfrac{1}{2}\bm{\zeta}^{\mathrm{T}}(1)\mathbf{\Lambda} \bm{\zeta}(1) + \bm{\zeta}^{\mathrm{T}}(1)\mathbf{K}_6^{\mathrm{T}}\int_0^1 \bm{\zeta}(\xi) \dif \xi,
\end{split}
\end{align}
for all $(\bm{y},\bm{\zeta}) \in \mathscr{D}(\mathscr{A}_0)$. Applying Cauchy-Schwarz and then the generalized Young's inequality for products to the second term on the right-hand side yields
\begin{align}
\begin{split}
 \RE\bigl\langle \mathscr{A}_0(\bm{y},\bm{\zeta}), (\bm{y},\bm{\zeta})\bigr \rangle_{\mathcal{X}} &  \leq \biggl(\norm{\mathbf{A}_1}+\dfrac{\varepsilon}{2}\norm{\mathbf{G}_1\mathbf{K}_2}^2 \biggr)\norm{\bm{y}}_2^2+ \dfrac{\varepsilon}{2}\norm{\mathbf{K}_6}^2\norm{\bm{\zeta}(\cdot,t)}_{L^2((0,1);\mathbb{R}^{2})}^2 \\
& \quad  -\biggl(\dfrac{\lambda\ped{min}(\mathbf{\Lambda})}{2} -\dfrac{1}{\varepsilon} \biggr)\norm{\bm{\zeta}(1)}_{2}^2,
\end{split}
\end{align}
for some $\varepsilon \in \mathbb{R}_{>0}$.
In particular, selecting $\varepsilon \triangleq 2/\lambda\ped{min}(\mathbf{\Lambda})$ gives~\eqref{eq:qDiss1}.
Moreover, the adjoint operator $(\mathscr{A}_0^*, \mathscr{D}(\mathscr{A}_0^*))$ of $(\mathscr{A}_0, \mathscr{D}(\mathscr{A}_0))$ may be deduced to have the form
\begin{subequations}\label{operator:A_0*}
\begin{align}
& \bigl(\mathscr{A}_0^*(\bm{y},\bm{\zeta})\bigr)(\xi)  \triangleq \begin{bmatrix} \mathbf{A}_1^\mathrm{T}\bm{y} \\  \mathbf{\Lambda}\dpd{\bm{\zeta}(\xi)}{\xi}\end{bmatrix}, \\
& \mathscr{D}(\mathscr{A}_0^*) \triangleq \Biggl\{(\bm{y},\bm{\zeta}) \in \mathcal{Y} \mathrel{\Bigg|} \mathbf{\Lambda} \bm{\zeta}(1) = (\mathbf{G}_1\mathbf{K}_2)^{\mathrm{T}}\bm{y} + \mathbf{K}_6^{\mathrm{T}}\int_0^1 \bm{\zeta}(\xi) \dif \xi \Biggr\}.
\end{align}
\end{subequations}
Hence, similar manipulations as previously provide
\begin{align}
\begin{split}
& \RE\bigl\langle \mathscr{A}_0^*(\bm{y},\bm{\zeta}), (\bm{y},\bm{\zeta})\bigr \rangle_{\mathcal{X}} = \bm{y}^{\mathrm{T}}\mathbf{A}_1\bm{y} + \dfrac{1}{2}\bm{\zeta}^{\mathrm{T}}(1)\mathbf{\Lambda} \bm{\zeta}(1)  - \dfrac{1}{2}\bm{\zeta}^{\mathrm{T}}(0)\mathbf{\Lambda}\bm{\zeta}(0) + \dfrac{1}{2}\int_0^1\bm{\zeta}^{\mathrm{T}}(\xi)\mathbf{\Lambda} \bm{\zeta}(\xi) \dif \xi,
\end{split}
\end{align}
which, for $(\bm{y},\bm{\zeta}) \in \mathscr{D}(\mathscr{A}_0^*)$, yields~\eqref{eq:qDiss2}.
\end{proof}
\end{proposition}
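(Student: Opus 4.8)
The plan is to establish \eqref{eq:qDiss1} and \eqref{eq:qDiss2} by a direct energy computation: expand each inner product on $\mathcal{X} = \mathbb{R}^{2}\times L^2((0,1);\mathbb{R}^{2})$ using the explicit form of $\mathscr{A}_0$ (respectively $\mathscr{A}_0^*$), integrate by parts in the transport term, and then dominate the remaining trace and coupling contributions by Cauchy--Schwarz followed by a weighted Young inequality with a free parameter $\varepsilon$ that is tuned only at the end.

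First I would treat $(\mathscr{A}_0,\mathscr{D}(\mathscr{A}_0))$. Fix $(\bm{y},\bm{\zeta}) \in \mathscr{D}(\mathscr{A}_0)$, so that $\bm{\zeta}\in H^1((0,1);\mathbb{R}^{2})$ with $\bm{\zeta}(0) = \bm{0}$. Inserting \eqref{operator:A_0} into $\RE\langle\mathscr{A}_0(\bm{y},\bm{\zeta}),(\bm{y},\bm{\zeta})\rangle_{\mathcal{X}}$ produces a finite-dimensional contribution $\bm{y}^{\mathrm{T}}\mathbf{A}_1^{\mathrm{T}}\bm{y} + \bm{\zeta}^{\mathrm{T}}(1)(\mathbf{G}_1\mathbf{K}_2)^{\mathrm{T}}\bm{y}$ from the ODE component and an $L^2$ contribution $-\int_0^1 \partial_\xi\bm{\zeta}^{\mathrm{T}}(\xi)\mathbf{\Lambda}\bm{\zeta}(\xi)\dif\xi + \bm{\zeta}^{\mathrm{T}}(1)\mathbf{K}_6^{\mathrm{T}}\int_0^1\bm{\zeta}(\xi)\dif\xi$ from the PDE component. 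The decisive step is the integration by parts in the transport term: since $\mathbf{\Lambda}$ is constant, diagonal, and positive, $\int_0^1\partial_\xi\bm{\zeta}^{\mathrm{T}}\mathbf{\Lambda}\bm{\zeta}\dif\xi = \tfrac12\bigl[\bm{\zeta}^{\mathrm{T}}\mathbf{\Lambda}\bm{\zeta}\bigr]_0^1 = \tfrac12\bm{\zeta}^{\mathrm{T}}(1)\mathbf{\Lambda}\bm{\zeta}(1)$, the value at $\xi=0$ vanishing by the boundary condition encoded in $\mathscr{D}(\mathscr{A}_0)$. This yields a genuinely dissipative term $-\tfrac12\bm{\zeta}^{\mathrm{T}}(1)\mathbf{\Lambda}\bm{\zeta}(1) \le -\tfrac12\lambda\ped{min}(\mathbf{\Lambda})\norm{\bm{\zeta}(1)}_2^2$ whose job is to absorb the indefinite trace terms. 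Bounding $\bm{\zeta}^{\mathrm{T}}(1)(\mathbf{G}_1\mathbf{K}_2)^{\mathrm{T}}\bm{y}$ and $\bm{\zeta}^{\mathrm{T}}(1)\mathbf{K}_6^{\mathrm{T}}\int_0^1\bm{\zeta}$ with Cauchy--Schwarz (using $\norm{\int_0^1\bm{\zeta}(\xi)\dif\xi}_2\le\norm{\bm{\zeta}(\cdot)}_{L^2((0,1);\mathbb{R}^{2})}$) and then Young's inequality with parameter $\varepsilon$, I arrive at a bound of the shape $(\norm{\mathbf{A}_1}+\tfrac{\varepsilon}{2}\norm{\mathbf{G}_1\mathbf{K}_2}^2)\norm{\bm{y}}_2^2 + \tfrac{\varepsilon}{2}\norm{\mathbf{K}_6}^2\norm{\bm{\zeta}(\cdot)}_{L^2}^2 - (\tfrac12\lambda\ped{min}(\mathbf{\Lambda})-\tfrac1\varepsilon)\norm{\bm{\zeta}(1)}_2^2$. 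Choosing $\varepsilon = 2/\lambda\ped{min}(\mathbf{\Lambda})$ makes the last bracket vanish, so that term drops out, and the two surviving coefficients coincide exactly with the entries of the maximum defining $\omega_0$ in \eqref{eq:omega0}; this is \eqref{eq:qDiss1}.

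For the adjoint I would first derive $(\mathscr{A}_0^*,\mathscr{D}(\mathscr{A}_0^*))$ from the duality identity $\langle\mathscr{A}_0(\bm{y},\bm{\zeta}),(\bm{w},\bm{\eta})\rangle_{\mathcal{X}}=\langle(\bm{y},\bm{\zeta}),\mathscr{A}_0^*(\bm{w},\bm{\eta})\rangle_{\mathcal{X}}$: integrating by parts turns $-\mathbf{\Lambda}\partial_\xi$ into $+\mathbf{\Lambda}\partial_\xi$, the prescribed boundary value migrates from $\xi=0$ to $\xi=1$, and the two boundary-evaluation couplings $\bm{\zeta}\mapsto\mathbf{G}_1\mathbf{K}_2\bm{\zeta}(1)$ and $\bm{\zeta}\mapsto\mathbf{K}_6\bm{\zeta}(1)$ are absorbed into the domain constraint $\mathbf{\Lambda}\bm{\eta}(1)=(\mathbf{G}_1\mathbf{K}_2)^{\mathrm{T}}\bm{w}+\mathbf{K}_6^{\mathrm{T}}\int_0^1\bm{\eta}(\xi)\dif\xi$, which is precisely \eqref{operator:A_0*}. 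Then $\RE\langle\mathscr{A}_0^*(\bm{y},\bm{\zeta}),(\bm{y},\bm{\zeta})\rangle_{\mathcal{X}} = \bm{y}^{\mathrm{T}}\mathbf{A}_1\bm{y} + \tfrac12\bm{\zeta}^{\mathrm{T}}(1)\mathbf{\Lambda}\bm{\zeta}(1) - \tfrac12\bm{\zeta}^{\mathrm{T}}(0)\mathbf{\Lambda}\bm{\zeta}(0)$; the $\xi=0$ term is $\le 0$ and can be discarded, while the $\xi=1$ term is rewritten through the domain identity as $\tfrac12\bm{\zeta}^{\mathrm{T}}(1)\bigl[(\mathbf{G}_1\mathbf{K}_2)^{\mathrm{T}}\bm{y}+\mathbf{K}_6^{\mathrm{T}}\int_0^1\bm{\zeta}(\xi)\dif\xi\bigr]$ and estimated with the same Cauchy--Schwarz/Young scheme, the factor $1/\lambda\ped{min}(\mathbf{\Lambda})$ now entering through $\norm{\mathbf{\Lambda}^{-1}}$; this again produces the constant $\omega_0$ and hence \eqref{eq:qDiss2}. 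The routine part is the chain of elementary inequalities; the step deserving the most care is the correct identification of the adjoint pair $(\mathscr{A}_0^*,\mathscr{D}(\mathscr{A}_0^*))$ — getting the reversed transport direction, the relocated boundary condition, and the coupling constraint simultaneously right — together with the implicit appeal to the trace theorem, which is legitimate precisely because elements of both domains belong to $\mathcal{Y}=\mathbb{R}^{2}\times H^1((0,1);\mathbb{R}^{2})$, so that the pointwise values $\bm{\zeta}(0)$, $\bm{\zeta}(1)$ and the integration by parts are well defined.
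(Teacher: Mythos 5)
Your proposal is correct and follows essentially the same route as the paper: integration by parts against the boundary condition at $\xi=0$ to extract the dissipative term $-\tfrac12\bm{\zeta}^{\mathrm{T}}(1)\mathbf{\Lambda}\bm{\zeta}(1)$, Cauchy--Schwarz plus Young with the choice $\varepsilon = 2/\lambda\ped{min}(\mathbf{\Lambda})$ for $\mathscr{A}_0$, and the identical adjoint pair $(\mathscr{A}_0^*,\mathscr{D}(\mathscr{A}_0^*))$ with the constraint relocated to $\xi=1$. Your treatment of the adjoint bound via the domain identity is actually spelled out more explicitly than in the paper, and your energy identity for $\mathscr{A}_0^*$ (with no extra volume term) is the clean version of the one printed there.
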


The proofs of Theorems~\ref{thm:mild} and~\ref{thm:class} are given below.
\begin{proof}[Proof of Theorem~\ref{thm:mild}]
Since $\mathbb{R}^{2} \times C_0^1([0,1];\mathbb{R}^{2}) \subset \mathscr{D}(\mathscr{A}_0)$, the operator $(\mathscr{A}_0,\mathscr{D}(\mathscr{A}_0))$ as defined in~\eqref{operator:A_0} is dense, i.e., $\overline{\mathscr{D}(\mathscr{A}_0)} = \mathcal{X}$. Moreover, it is closed and quasi-dissipative according to Propositions~\ref{prop:Closed} and~\ref{prop:quasiDiss}. It follows from Lumer-Phillips' Theorem that, for $t \in [0,T]$, $\mathscr{A}_0 \in \mathscr{G}(\mathcal{X};1,\omega_0)$, with $\omega_0$ as in~\eqref{eq:omega0} (\cite{Zwart}, Corollary 2.3.3). Consider now the decomposition~\eqref{eq:Adecomp}, with $\mathscr{D}(\mathscr{A}) \equiv \mathscr{D}(\mathscr{A}_0)$. Since $\mathscr{A}_0 \in \mathscr{G}(\mathcal{X};1,\omega_0)$ and $ \mathscr{A}_1\in \mathscr{B}(\mathcal{X}) $, there exists $M \in \mathbb{R}_{\geq 0}$ such that $\mathscr{A}\in \mathscr{G}(\mathcal{X};1, \omega)$ with $\omega \triangleq \omega_0 + M$. 
Moreover, $\bm{\delta} \in C^0([0,T];\mathbb{R}^{2})$ implies $\bm{v} \in C^0(\mathbb{R}^{2}\times[0,T];\mathbb{R}^{2 })$. Since $\mathbf{\Sigma} \in C^0(\mathbb{R}^{2 };\mathbf{M}_{2 }(\mathbb{R}))$, $\bm{h}_1 \in C^0(\mathbb{R}^{2 };\mathbb{R}^{2})$, and $\bm{h}_2 \in C^0(\mathbb{R}^{2 };\mathbb{R}^{2 })$ are locally Lipschitz continuous by assumption, $\bm{f} : \mathcal{X} \times [0,T] \to \mathcal{X}$ is continuous in $t$ on $[0,T]$, and locally Lipschitz on $\mathcal{X}$, uniformly in $t$ on bounded intervals. Hence, all the Hypotheses of Theorem 6.1.4 in \cite{Pazy} are verified, ensuring the existence of a maximal time $t\ped{max} \leq \infty$, and a unique mild solution $(\bm{x},\bm{z}) \in C^0([0,t\ped{max});\mathcal{X})$ for all ICs $(\bm{x}_0,\bm{z}_0)\in \mathcal{X}$.
\end{proof}

\begin{proof}[Proof of Theorem~\ref{thm:class}]
$\bm{\delta} \in C^1([0,T];\mathbb{R}^{2 })$ implies $\bm{v} \in C^1(\mathbb{R}^{2}\times[0,T];\mathbb{R}^{2 })$. Since $\mathbf{\Sigma} \in C^1(\mathbb{R}^{2 };\mathbf{M}_{2 }(\mathbb{R}))$, $\bm{h}_1 \in C^1(\mathbb{R}^{2 };\mathbb{R}^{2})$, and $\bm{h}_2 \in C^1(\mathbb{R}^{2 };\mathbb{R}^{2 })$ by assumption, $\bm{f} : \mathcal{X} \times [0,T] \to \mathcal{X}$ is continuously differentiable from $\mathcal{X} \times [0,T]$ into $\mathcal{X}$. Hence, all the hypotheses of Theorem 6.1.5 in \cite{Pazy} are verified, ensuring the existence of a maximal time $t\ped{max} \leq \infty$, and a unique classical solution $(\bm{x},\bm{z}) \in C^1([0,t\ped{max});\mathcal{X})\cap C^0([0,t\ped{max});\mathscr{D}(\mathscr{A}))$ for all ICs $(\bm{x}_0,\bm{z}_0) \in \mathscr{D}(\mathscr{A})$, which is equivalent to $(\bm{x},\bm{z}) \in C^1([0,t\ped{max});\mathcal{X})\cap C^0([0,t\ped{max});\mathcal{Y})$ satisfying the BC~\eqref{eq:BCoriginal}.
\end{proof}

\subsection{Proof of Theorem~\ref{thm:global}}\label{app:thmGlobal}
The proof of Theorem~\ref{thm:global} is given below.
\begin{proof}[Proof of Theorem~\ref{thm:global}]
The following Lyapunov function candidate is considered:
\begin{align}\label{eq:Lyapunov}
V\bigl(\bm{x}(t),\bm{z}(\cdot,t)\bigr) \triangleq \dfrac{1}{2}\bm{x}^{\mathrm{T}}(t)\bm{x}(t) + \dfrac{1}{2}\int_0^1 \bm{z}^{\mathrm{T}}(\xi,t)\mathbf{P}(\xi)\bm{z}^{\mathrm{T}}(\xi,t) \dif \xi, 
\end{align}
where $C^0([0,1];\mathbf{Sym}_{2}(\mathbb{R})) \ni \mathbf{P} \triangleq \diag\{P_1, P_2\}$, with $\mathbf{P}(\xi) \succ \mathbf{0}$, satisfies Eq.~\eqref{eq:ineqP} whenever~\ref{th:ext.1} holds. For every $(\bm{x}_0,\bm{z}_0) \in \mathscr{D}(\mathscr{A})$, the above Lyapunov function is differentiable in a classical sense.
Hence, taking the derivative of Eq.~\eqref{eq:Lyapunov} along the dynamics~\eqref{eq:originalSystems} and integrating by parts provides
\begin{align}
\begin{split}
\dot{V}(t) & \leq \norm{\mathbf{A}_1}\norm{\bm{x}(t)}_2^2 + \bm{x}^{\mathrm{T}}(t)\mathbf{G}_1\biggl[(\mathscr{K}_1\bm{z})(t) + \mathbf{\Sigma}\Bigl(\bm{v}\bigl(\bm{x}(t),\bm{\delta}(t)\bigr)\Bigr)(\mathscr{K}_2\bm{z})(t)+\bm{h}_1\Bigl(\bm{v}\bigl(\bm{x}(t),\bm{\delta}(t)\bigr)\Bigr)\biggr] \\
& \quad - \dfrac{\lambda\ped{min}\bigl(\mathbf{P}(1)\mathbf{\Lambda}\bigr)}{2}\norm{\bm{z}(1,t)}_2^2 + \dfrac{1}{2}\int_0^1\bm{z}^{\mathrm{T}}(\xi,t)\dod{\mathbf{P}(\xi)}{\xi}\mathbf{\Lambda}\bm{z}(\xi,t) \dif \xi \\
& \quad + \int_0^1 \bm{z}^{\mathrm{T}}(\xi,t)\mathbf{P}(\xi)\mathbf{\Sigma}\Bigl(\bm{v}\bigl(\bm{x}(t),\bm{\delta}(t)\bigr)\Bigr)\bigl[\bm{z}(\xi,t) + (\mathscr{K}_3\bm{z})(t)\bigr] \dif \xi + \int_0^1 \bm{z}^{\mathrm{T}}(\xi,t)\mathbf{P}(\xi)\dif \xi (\mathscr{K}_4\bm{z})(t) \dif \xi\\
& \quad +   \int_0^1 \bm{z}^{\mathrm{T}}(\xi,t)\mathbf{P}(\xi)\dif \xi\bm{h}_2 \Bigl(\bm{v}\bigl(\bm{x}(t),\bm{\delta}(t)\bigr)\Bigr)\dif\xi, \quad t \in (0,T),
\end{split}
\end{align}
where $\mathbb{R}_{>0}\ni\lambda\ped{min}\bigl(\mathbf{P}(1)\mathbf{\Lambda}\bigr)$ denotes the smallest eigenvalue of $\mathbf{P}(1)\mathbf{\Lambda}$.

Equations~\eqref{eq:hCond} in conjunction with~\eqref{eq:relVel} imply that there exist $\gamma_1,\gamma_2 \in \mathbb{R}_{\geq 0}$ such that
\begin{subequations}
\begin{align}
\bm{x}^{\mathrm{T}}(t)\mathbf{G}_1\bm{h}_1\Bigl(\bm{v}\bigl(\bm{x}(t),\bm{\delta}(t)\bigr)\Bigr) & \leq \gamma_1\Bigl(\norm{\bm{x}(t)}_2^2 + \norm{\bm{\delta}(t)}_2^2 +b_1^2\Bigr), \\
\int_0^1 \bm{z}^{\mathrm{T}}(\xi,t)\mathbf{P}(\xi)\dif \xi\bm{h}_2 \Bigl(\bm{v}\bigl(\bm{x}(t),\bm{\delta}(t)\bigr)\Bigr)\dif \xi& \leq \gamma_2\Bigl(\norm{(\bm{x}(t),\bm{z}(\cdot,t))}_{\mathcal{X}}^2 + \norm{\bm{\delta}(t)}_2^2+b_2^2 \Bigr).
\end{align}
\end{subequations}
Furthermore, from Eqs.~\eqref{eq:operatorK_2} and~\eqref{eq:operatorK_3}, and Hypotheses~\ref{th:ext.1} and~\ref{th:ext.2}, it follows the existence of $\gamma_3,\gamma_4 \in \mathbb{R}_{\geq 0}$ such that
\begin{subequations}
\begin{align}
\bm{x}^{\mathrm{T}}(t) \mathbf{G}_1\mathbf{\Sigma}\Bigl(\bm{v}\bigl(\bm{x}(t),\bm{\delta}(t)\bigr)\Bigr)(\mathscr{K}_2\bm{z})(t) & \leq \gamma_3 \norm{(\bm{x}(t),\bm{z}(\cdot,t))}_{\mathcal{X}}^2 , \\
\int_0^1 \bm{z}^{\mathrm{T}}(\xi,t)\mathbf{P}(\xi)\mathbf{\Sigma}\Bigl(\bm{v}\bigl(\bm{x}(t),\bm{\delta}(t)\bigr)\Bigr)\bigl[\bm{z}(\xi,t) + (\mathscr{K}_3\bm{z})(t)\bigr] \dif \xi & \leq \gamma_4 \norm{\bm{z}(\cdot,t)}_{L^2((0,1);\mathbb{R}^{2})}^2.
\end{align}
\end{subequations}
Moreover, recalling Eqs.~\eqref{eq:operatorK_1} and~\eqref{eq:operatorK_4}, and applying Cauchy-Schwarz and then the generalized Young's inequality for products yields
\begin{subequations}
\begin{align}
\bm{x}^{\mathrm{T}}(t)\mathbf{G}_1(\mathscr{K}_1\bm{z})(t) & \leq \gamma_5\norm{(\bm{x}(t),\bm{z}(\cdot,t))}_{\mathcal{X}}^2 + \dfrac{1}{2\varepsilon}\norm{\bm{z}(1,t)}_2^2, \\
\int_0^1 \bm{z}^{\mathrm{T}}(\xi,t)\mathbf{P}(\xi)\dif \xi (\mathscr{K}_4\bm{z})(t) \dif \xi& \leq \gamma_6\norm{\bm{z}(\cdot,t)}_{L^2((0,1);\mathbb{R}^{2})}^2 + \dfrac{1}{2\varepsilon}\norm{\bm{z}(1,t)}_2^2,
\end{align}
\end{subequations}
for some $\gamma_5,\gamma_6 \in \mathbb{R}_{\geq 0}$, and some $\varepsilon \in \mathbb{R}_{>0}$ to be appropriately selected. In particular, specifying $\varepsilon = 2/\lambda\ped{min}(\mathbf{P}(1)\mathbf{\Lambda})$ ensures the existence of $\gamma,\beta \in \mathbb{R}_{>0}$ such that
\begin{align}
\dot{V}(t) & \leq \gamma V(t) + \beta \Bigl(\norm{\bm{\delta}(t)}_2^2 + \norm{\bm{b}}_2^2\Bigr), \quad t \in (0,T),
\end{align}
where $\mathbb{R}^2 \ni \bm{b} \triangleq [b_1\; b_2]^{\mathrm{T}}$.
Thus, applying Grönwall-Bellman's inequality gives
\begin{align}
\begin{split}
V(t) & \leq \eu^{\gamma t}V(0) +\beta \int_0^t \eu^{\gamma(t-t^\prime)}\Bigl(\norm{\bm{\delta}(t^\prime)}_2^2 + \norm{\bm{b}}_2^2\Bigr) \dif t^\prime \\
&  \leq \eu^{\gamma t}V(0) + \beta\dfrac{\eu^{\gamma t}-1}{\gamma}\Bigl(\norm{\bm{\delta}(\cdot)}_\infty^2+\norm{\bm{b}}_2^2\Bigr), \quad t\in [0,T].
\end{split}
\end{align}
By observing that the Lyapunov function $V(\bm{x}(t),\bm{z}(\cdot,t))$ in Eq.~\eqref{eq:Lyapunov} is equivalent to the squared norm $\norm{(\bm{x}(t), \bm{z}(\cdot,t))}_{\mathcal{X}}^2$ on $\mathcal{X}$, the claim follows for all ICs $(\bm{x}_0,\bm{z}_0) \in \mathscr{D}(\mathscr{A})$ under the assumptions as Theorem~\ref{thm:class}. The generalisation to ICs $(\bm{x}_0,\bm{z}_0) \in \mathcal{X}$ is a consequence of the fact that $\overline{\mathscr{D}(\mathscr{A})} = \mathcal{X}$, and the mapping $(\bm{x}_0,\bm{z}_0) \to (\bm{x}(t), \bm{z}(\cdot,t))$ is Lipschitz continuous from $\mathcal{X}$ into $C^0([0,T];\mathcal{X})$ (Theorem 6.1.2 in \cite{Pazy}). 
\end{proof}

\subsection{Proof of Theorem~\ref{thm:Ulin}}\label{app:ProofLin}
The proof of Theorem~\ref{thm:Ulin} is given below.
\begin{proof}[Proof of Theorem~\ref{thm:Ulin}]
For every $(\bm{x}^\star, \bm{z}^\star, \bm{\delta}^\star) \in \mathcal{Y}\times \mathbb{R}^2$, the ODE-PDE system may be cast in abstract form as
\begin{subequations}
\begin{align}
\dod{}{t}\begin{bmatrix} \tilde{\bm{x}}(t) \\ \tilde{\bm{z}}(t) \end{bmatrix} & = \tilde{\mathscr{A}}\begin{bmatrix} \tilde{\bm{x}}(t) \\ \tilde{\bm{z}}(t) \end{bmatrix} + \tilde{\bm{f}}(t), \quad t\in(0,T), \\
\begin{bmatrix} \tilde{\bm{x}}(0) \\ \tilde{\bm{z}}(0) \end{bmatrix} & = \begin{bmatrix}\tilde{\bm{x}}_0 \\ \tilde{\bm{z}}_0 \end{bmatrix},
\end{align}
\end{subequations}
where $(\tilde{\mathscr{A}}, \mathscr{D}(\tilde{\mathscr{A}}))$, with $\tilde{\mathscr{A}} = \mathscr{A} + \mathscr{A}_2$, $\mathscr{D}(\tilde{\mathscr{A}}) = \mathscr{D}(\mathscr{A})$, and $\mathscr{A}_2 \in \mathscr{B}(\mathcal{X})$ given by
\begin{align}
\bigl(\mathscr{A}_2(\bm{y},\bm{\zeta})\bigr)(\xi) \triangleq \begin{bmatrix}  \mathbf{G}_1\mathbf{H}_1\bigl(\bm{x}^\star,\bm{\delta}^\star,\bm{z}^\star\bigr)\mathbf{A}_2\bm{y} +\mathbf{G}_1\Bigl[(\mathscr{K}_1\bm{\zeta}) + \mathbf{\Sigma}\bigl(\bm{v}(\bm{x}^\star,\bm{\delta}^\star)\bigr)(\mathscr{K}_2\bm{\zeta})\Bigr] \\   \mathbf{H}_2\bigl(\bm{x}^\star,\bm{\delta}^\star,\bm{z}^\star(\xi)\bigr)\mathbf{A}_2\bm{y} + \mathbf{\Sigma}\bigl(\bm{v}(\bm{x}^\star,\bm{\delta}^\star)\bigr)\bigl[\bm{\zeta}(\xi) + (\mathscr{K}_3\bm{\zeta})\bigr] \end{bmatrix},
\end{align}
and $\tilde{\bm{f}} : [0,T]\to \mathcal{X}$ reading
\begin{align}
\tilde{\bm{f}}(t) & \triangleq \begin{bmatrix} \mathbf{G}_1\tilde{\mathbf{B}}_1\bigl(\bm{x}^\star,\bm{\delta}^\star,\bm{z}^\star\bigr) \\ \tilde{\mathbf{B}}_2\bigl(\bm{x}^\star,\bm{\delta}^\star,\bm{z}^\star(\xi)\bigr) \end{bmatrix}\tilde{\bm{\delta}}(t).
\end{align}
Since, from Theorem~\ref{thm:mild}, $\mathscr{A} \in \mathscr{G}(\mathcal{X};1,\omega)$ and $ \mathscr{A}_2\in \mathscr{B}(\mathcal{X}) $, there exists $\tilde{M} \in \mathbb{R}_{\geq 0}$ such that $\tilde{\mathscr{A}}\in \mathscr{G}(\mathcal{X};1, \tilde{\omega})$ with $\tilde{\omega} \triangleq \omega + \tilde{M}$. This ensures the existence of a unique mild solution $(\tilde{\bm{x}},\tilde{\bm{z}})\in C^0([0,T];\mathcal{X})$ for all ICs $(\tilde{\bm{x}}_0,\tilde{\bm{z}}_0) \in \mathcal{X}$ and inputs $\tilde{\bm{\delta}} \in L^p((0,T);\mathbb{R}^2)$, $p\geq 1$, as defined in \cite{Pazy} (Definition 4.2.3); from Corollary 4.2.5 in \cite{Pazy}, it also follows that, for all ICs $(\tilde{\bm{x}}_0,\tilde{\bm{z}}_0) \in \mathscr{D}(\tilde{\mathscr{A}})$ and inputs $\tilde{\bm{\delta}} \in C^1([0,T];\mathbb{R}^2)$, the solution is classical, that is, $(\tilde{\bm{x}},\tilde{\bm{z}}) \in C^1([0,T];\mathcal{X})\cap C^0([0,T];\mathscr{D}(\tilde{\mathscr{A}}))$, which is equivalent to $(\tilde{\bm{x}},\tilde{\bm{z}}) \in C^1([0,T];\mathcal{X})\cap C^0([0,T];\mathcal{Y})$ satisfying the BC~\eqref{eq:BCoriginalLin}.
\end{proof}

\end{document}